\documentclass[11pt]{article}

\usepackage[T1]{fontenc}
\usepackage[utf8]{inputenc}
\DeclareUnicodeCharacter{2011}{-}

\usepackage{amsmath,amssymb}
\usepackage{times}
\usepackage{bm}
\usepackage{natbib}
\usepackage{algorithm}
\usepackage{mathrsfs}
\usepackage{graphicx}
\usepackage[flushleft]{threeparttable}
\usepackage[shortlabels]{enumitem}
\usepackage{titletoc}
\usepackage{tocloft}
\usepackage{nexais}
\makeatother
\usepackage{microtype}
\usepackage{cmap}
\usepackage{comment}
\usepackage{booktabs,float}
\usepackage{rotating}
\usepackage{tikz}
\usepackage{array}
\usepackage{tabularx}
\usepackage{mathtools}
\usepackage[most]{tcolorbox}
\usepackage{marginnote}

\usepackage[colorlinks,
linkcolor=metablue,
anchorcolor=metablue,
citecolor=metablue,
urlcolor=metablue
]{hyperref}

\usepackage{txfonts}
\usepackage{geometry}
\providecommand{\e}{\varepsilon}

\providecommand{\I}{\frac{1}{\theta}}

\providecommand{\DD}{\mathcal{D}}

\providecommand{\al}{\alpha}

\providecommand{\paren}[1]{\left(#1\right)}
\renewcommand{\norm}[1]{\left\|#1\right\|}
\renewcommand{\abs}[1]{\left|#1\right|}
\providecommand{\bracks}[1]{\left[#1\right]}
\renewcommand{\paragraph}[1]{\noindent\textit{#1}}
\providecommand{\Renyi}{R\'enyi }
\providecommand{\subw}{\operatorname{subW}}

\providecommand{\iid}{\textrm{i.i.d.}}

\providecommand{\cmmnt}[1]{}

\providecommand{\scomment}[1]{}

\begin{document}

\title{Tail-Aware Information-Theoretic Bounds for LLM Alignment under Heavy-Tailed Rewards}

\author[1,*]{Huiming Zhang}
\author[2,*]{Binghan Li}
\author[3]{Wan Tian}
\author[4]{Qiang Sun}

\contribution[*]{Co-first authors.}

\affiliation[1]{BUAA and BAIC-FBPC}

\affiliation[2]{BUAA}

\affiliation[3]{PKU}

\affiliation[4]{UofT and MBZUAI}
%
%
%


\abstract{

Classical information-theoretic learning bounds typically rely on KL mutual
information and moment-generating-function (MGF) arguments, which are well
matched to bounded or sub-Gaussian losses but can be ineffective when losses or
rewards are heavy-tailed. We develop a tail-aware information-theoretic
framework for sub-Weibull data, where the tail parameter $\theta$ controls the
tail heaviness: $\theta=2$ corresponds to sub-Gaussian, $\theta=1$ to
sub-exponential, and $0<\theta<1$ to genuinely heavy tails. Our key technical
ingredient is a decorrelation lemma that bounds change-of-measure expectations
using a shifted-log $f_\theta$-divergence, which admits explicit comparisons to
R\'enyi divergence without MGF arguments. On the empirical-process side, we
establish sharp maximal inequalities and a Dudley-type chaining bound for
sub-Weibull processes, with logarithmic and entropy terms raised to the power
$1/\theta$. These tools yield tail-adaptive selection bounds and a multiscale
information-theoretic Dudley inequality based on shifted-log and R\'enyi mutual
information. We apply our theory to large language models (LLMs) in the context of reward hacking within reinforcement learning from human feedback (RLHF). We show that Rényi-regularized alignment provides finite reward guarantees and ensures that best-of-N policies remain well-controlled, thereby mitigating the catastrophic Goodhart effects where standard KL-regularization fails. We illustrate R\'enyi-regularized RLHF by experiments, including controlled heavy-tailed rewards and token-space reward attacks.  The code is available at \url{https://github.com/NeXAIS/SubW-Alignment}.

}
\metadata[Keywords]{
heavy-tailed distributions, sub-Weibull processes, R\'enyi divergence, information-theoretic generalization bounds, LLM, RLHF.
}

\metadata[Code]{\url{https://github.com/NeXAIS/SubW-Alignment}}

\date{\today}

\maketitle

\bibliographystyle{apalike}

\tableofcontents

\section{Introduction}\label{intro}



The generalization gap, namely the difference between population risk and
empirical risk, has traditionally been analyzed by uniform-convergence
arguments based on classical complexity measures such as VC dimension and
Rademacher or Gaussian complexity; see
\cite{koltchinskii2011oracle, Wainwright19, shalev2014understanding}.
While powerful, these tools are inherently worst-case: they control an entire
hypothesis class rather than the smaller, sample-dependent region actually
explored by a modern learning algorithm. As a result, they can be overly
pessimistic for data-adaptive procedures, which often generalize well even when
the ambient class has large worst-case complexity \citep{zhou2021optimistic}.

Information-theoretic generalization bounds replace global complexity by an
\emph{algorithm-dependent} quantity: how much information the algorithm's
output reveals about the sample \citep{xu2017,Russo20,bu2020}. In
light-tailed settings, under sub-Gaussian assumptions, KL-based
mutual information yields clean bounds in expectation and with high
probability, and these guarantees can be further sharpened through multiscale
or chaining arguments \citep{asadi2018chaining,hellstrom2025generalization}.

A major obstacle is that KL-based arguments are tightly coupled to moment
generating functions (MGFs). Many modern pipelines, however, exhibit
heavy-tailed behavior, so the relevant MGFs may fail to exist. This issue
arises in reward-model scores for RLHF \citep{kwa2024catastrophic}, gradient noise in large-scale
optimization \citep{raj2023algorithmic}, and losses induced by heavy-tailed data
\citep{lerasle2019lecture, xu2023non}. In such settings, KL mutual information
can remain small even when rare but extreme events dominate generalization or
stability; see Example~\ref{ex:weibull}.


A convenient tail model that interpolates between sub-Gaussian and heavy-tailed
behavior is the \emph{sub-Weibull} family. 
A random variable $X$ is called sub-Weibull with parameter $\theta>0$, denoted
$X\sim\subw(\theta)$, if its Orlicz norm $\|X\|_{\psi_\theta}$ is finite, where
$\psi_\theta(x)=\exp(x^\theta)-1$. Equivalently, there exists $K>0$ such that
$\mathbb{E}\exp(|X|^\theta/K^\theta)\le 2$ or the tail estimate
\begin{equation}\label{tail-estimate}
\mathbb{P}(|X|\ge t)\le 2\exp\bigl(-t^\theta/K^\theta\bigr),\qquad t\ge 0,
\end{equation}
for some $K>0$. The $\theta$ controls tail thickness: $\theta=2$
corresponds to sub-Gaussian tails, $\theta=1$ to sub-exponential tails, and
$0<\theta<1$ to genuinely heavy-tailed regimes.  Our focus is on deriving
information-theoretic generalization tools that remain meaningful throughout the
sub-Weibull family, including $0<\theta<1$ where MGF-based techniques break down.

This work develops an information-theoretic generalization framework for
modern learning algorithms under heavy-tailed distributions and rewards. Our main contributions are:
\begin{enumerate}
\item \textbf{Tail-adaptive decorrelation via shifted-log divergences.}
We introduce a shifted-log $f_\theta$-divergence
$f_\theta(x)=x\log^{1/\theta}(x+A)$ and prove a decorrelation (change-of-measure)
lemma that bounds expectations under a dependent coupling through this
tail-adaptive divergence. We further show that it admits explicit upper bounds
in terms of R\'enyi divergence, leading to usable bounds in terms of R\'enyi
mutual information (Section~\ref{sec:pre}).

\item \textbf{Maximal and chaining bounds for sub-Weibull processes.}
For $0<\theta<1$, we prove a sharp maximal inequality for a finite family of
sub-Weibull random variables and a Dudley-type entropy integral for
$(\theta,C)$-sub-Weibull processes (Section~\ref{sec:ep}).

\item \textbf{Information-theoretic selection and chaining under sub-Weibull tails.}
Combining the decorrelation lemma with the new sub-Weibull maximal and chaining
bounds, we prove tail-adaptive information bounds for data-dependent selectors.
These inequalities replace KL mutual information, which can fail under heavy
tails, by shifted-log $f_\theta$-mutual information and its R\'enyi refinements,
and extend from finite selections to Dudley-type multiscale bounds
(Section~\ref{sec:gen_bounds}).

\item \textbf{Applications to RLHF under heavy-tailed and adversarial rewards.}
We show that KL-constrained RLHF can suffer catastrophic Goodhart under
heavy-tailed proxy rewards, while R\'enyi-regularized RLHF admits tail-adaptive
reward guarantees at a fixed divergence budget. We also analyze best-of-$n$
policies and bound their reward growth under sub-Weibull reward tails
(Section~\ref{sec:rlhf}). Numerically, we study both controlled heavy-tail
perturbations and token-space reward attacks; in both settings, higher-order
R\'enyi regularization dampens tail inflation and improves proxy--gold reward
alignment relative to KL (Sections~\ref{sec:rlhf-experiments}
and~\ref{sec:tompa-experiments}; Appendix~\ref{app:add_experiments}).

\end{enumerate}

\subsection*{Related Work}

\paragraph{Maximal inequalities for heavy tails.}
Maximal inequalities and chaining are classical tools in empirical process
theory \citep{Talagrand2005,vanhandle2016probability}.
While maximal inequalities for sub-Weibull processes are well-understood for
$\theta\ge 1$ \citep{kuchibhotla2022moving}, the heavy-tailed regime $0<\theta<1$ is more
delicate because MGFs can be infinite and many standard
arguments fail.  Recent works provide maximal inequalities with explicit
prefactors \citep{mendes2023concentration} and sharp Orlicz norm bounds
\citep{zhang2022sharper}. We complement these results by giving a direct
expectation bound for maxima (avoiding an additional norm-to-moment factor) and
by extending Dudley-type chaining bounds to heavy-tailed sub-Weibull processes.

\paragraph{Information-theoretic generalization and RL.}
Information-theoretic bounds date back to \citet{Russo20} and
\citet{xu2017}. Variants based on $f$-divergences and related divergences
have been developed in \citep{jiao2017dependence,hellstrom2025generalization,bu2020}
and references therein.  Most existing information-theoretic bounds rely on
KL-based mutual information (or MGF-based change-of-measure arguments) and are
therefore tailored to bounded or sub-Gaussian losses. We extend this
toolkit to sub-Weibull tails by combining a tail-adaptive divergence and a
multiscale information-theoretic chaining analysis. \cite{behnamnia2025log} proposed a log-sum-exponential estimator that avoids the pitfalls of heavy-tailed rewards. \cite{sun2026curverl} introduced a prompt reweighting method for LLM reasoning based on a distribution‑aware utility functional.

\subsection*{Overview}
Section~\ref{sec:ep} develops the sub-Weibull maximal and Dudley-type
empirical-process tools. Section~\ref{sec:pre} introduces the shifted-log
divergence, the decorrelation lemma, and the corresponding R\'enyi comparison
bounds. Section~\ref{sec:gen_bounds} combines these ingredients to obtain
tail-adaptive selection bounds and a multiscale information-theoretic Dudley
inequality. Section~\ref{sec:rlhf} applies the framework to RLHF under
heavy-tailed rewards, including R\'enyi-regularized and best-of-$n$ policies.
Section~\ref{sec:experiments} provides numerical studies, including controlled
heavy-tail perturbations and token-space reward attacks. Additional proofs and
robustness checks are deferred to the appendix. Table~\ref{tab:roadmap}
summarizes the active results.


\begin{table}[t]
\centering
\footnotesize
\setlength{\tabcolsep}{4pt}
\renewcommand{\arraystretch}{1.05}
\begin{tabular}{@{}>{\raggedright\arraybackslash}p{0.20\linewidth}
                >{\raggedright\arraybackslash}p{0.30\linewidth}
                >{\raggedright\arraybackslash}p{0.48\linewidth}@{}}
\toprule
\textbf{Object} & \textbf{Classical} & \textbf{This paper} \\
\midrule
Finite maxima &
$\E\max_i X_i\lesssim\sqrt{\log n}$ &
$\E\max_i X_i\lesssim(\log n)^{1/\theta}$ (Lemma~\ref{pp-MaximalSW}) \\
Process suprema &
Dudley: $\int\sqrt{\log N_\varepsilon}\,d\varepsilon$ &
Sub-Weibull Dudley: $\int(\log N_\varepsilon)^{1/\theta}\,d\varepsilon$ (Theorem~\ref{Dudley}) \\
Data-dependent selection &
KL mutual information $\sqrt{I(W;S)}$ &
Shifted-log $I_{f_\theta}(W;S)$ and R\'enyi refinements (Theorem~\ref{infob}) \\
Multiscale selection &
Sub-Gaussian information chaining &
Sub-Weibull chaining with $I_{f_\theta}([W]_k;S)$ and R\'enyi refinements (Theorem~\ref{chaining}) \\
RLHF trust regions &
KL trust regions can fail under heavy tails &
R\'enyi trust regions control reward inflation (Theorem~\ref{thm:rlhf}) \\
Best-of-$n$ alignment &
Light-tailed maximal control &
R\'enyi divergence gives $(\log n)^{1/\theta}$-type reward control (Theorem~\ref{renyiregret}) \\
\bottomrule
\end{tabular}
\caption{Roadmap of results.$\theta$ is the sub-Weibull tail parameter and $N_\varepsilon=N(T,d,\varepsilon)$.}
\label{tab:roadmap}
\end{table}

\subsection*{Notation}
For a positive integer $n$, we write $[n]=\{1,\ldots,n\}$. We use $\E$ for
expectation and $\mathbb{P}$ for probability. We denote by $\DD_{\mathrm{KL}}$
the KL divergence and by $\DD_\alpha$ the R\'enyi divergence of order $\alpha>1$.
We write $\|X\|_{\psi_\theta}$ for the Orlicz norm associated with
$\psi_\theta(x)=e^{x^\theta}-1$.  We use $\lesssim$ to hide universal positive
constants. Let $(T, d)$ be a metric space and let $K \subset T$.  A subset $\mathcal{N} \subset T$ is called an \textit{$\varepsilon$-net} for $K$ if, for every $x \in K$, 
there exists $y \in \mathcal{N}$ such that $d(x, y) \leq \varepsilon$. The {covering number}, denoted by $N(K, d, \varepsilon)$,  is the minimum cardinality of such an $\varepsilon$-net. Then $e(T):=\inf\{\varepsilon>0: N(T,d,\varepsilon)=1\}$ is the minimum covering radius of $T$. 
Let $\mathcal{X}$ be the range of a random variable $X$. A partition $\mathcal{P}$ of $\mathcal{X}$ is a finite collection of disjoint sets $P_i$ such that \(\bigcup_i P_i = \mathcal{X}\).

\section{Sub-Weibull Process Theory}\label{sec:ep}

This section develops empirical-process tools for controlling maxima and
suprema of sub-Weibull random variables and processes, with an emphasis on the
heavy-tailed regime $0<\theta<1$.  In this regime, MGFs may be infinite, so many classical MGF-based arguments must be replaced
by Orlicz-norm methods.  We establish (i) a sharp maximal inequality for a
finite family of sub-Weibull variables and (ii) a Dudley-type chaining bound
for $(\theta,C)$-sub-Weibull processes.  These are \emph{uniform} (worst-case)
bounds; Section~\ref{sec:gen_bounds} will refine them for data-adaptive
algorithms using information measures.

\subsection{A sharp maximal inequality}

\begin{lemma}[Maximal inequality for heavy-tailed sub-Weibull variables]\label{pp-MaximalSW}
Let $0<\theta<1$ and $\psi_\theta(x)=e^{x^\theta}-1$. Let $\{X_i\}_{i=1}^n$ be random variables with
\(
\max_{i\in [n]}\|X_i\|_{\psi_\theta}<\infty.
\)
Then
\begin{equation}\label{eq:MaximalSW}
\mathbb{E}\Bigl[\max_{i\in [n]}|X_i|\Bigr]
\;\le\;
\psi_\theta^{-1}\bigl(\psi_\theta(x_\theta)+n\bigr)\,
\max_{i\in [n]}\|X_i\|_{\psi_\theta}
\;=\;O\bigl((\log n)^{1/\theta}\bigr),
\end{equation}
where $
\psi_\theta^{-1}\bigl(\psi_\theta(x_\theta)+n\bigr)
=\bigl[\log\bigl(1+\psi_\theta(x_\theta)+n\bigr)\bigr]^{1/\theta}$ with $x_\theta:=\Bigl(\frac{1-\theta}{\theta}\Bigr)^{1/\theta}$.
\end{lemma}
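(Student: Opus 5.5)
The plan is a Jensen-type argument. The obstruction for $0<\theta<1$ is that $\psi_\theta$ is not convex near the origin, so I will first replace it by a convex modification that agrees with it away from $0$.

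\textbf{Step 1 (where $\psi_\theta$ is convex).} Differentiating twice gives
\[
\psi_\theta''(x)=\theta x^{\theta-2}e^{x^\theta}\bigl(\theta x^\theta-(1-\theta)\bigr),\qquad x>0 .
\]
Hence $\psi_\theta$ is concave on $(0,x_\theta)$ and convex on $[x_\theta,\infty)$, where $x_\theta=((1-\theta)/\theta)^{1/\theta}$ is exactly the constant in the statement. Define
\[
g(x)=\psi_\theta(\max\{x,x_\theta\}),\qquad x\ge 0 .
\]
This function is constant on $[0,x_\theta]$ and equals $\psi_\theta$ on $[x_\theta,\infty)$, where it is increasing and convex. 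Its one-sided derivative jumps from $0$ to $\psi_\theta'(x_\theta)>0$ at $x_\theta$, so $g$ is convex and nondecreasing on $[0,\infty)$. It also satisfies the two-sided comparison
\[
\psi_\theta\le g\le \psi_\theta+\psi_\theta(x_\theta).
\]

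\textbf{Step 2 (Jensen and union over the maximum).} Let $K=\max_i\|X_i\|_{\psi_\theta}$; if $K=0$ there is nothing to prove. By monotone convergence, the infimum in the Orlicz norm is attained, so $\E\psi_\theta(|X_i|/K)\le 1$ for every $i$. Convexity of $g$ and Jensen's inequality, followed by monotonicity of $g$, give
\[
g\Bigl(\E\max_i|X_i|/K\Bigr)\le \E\, g\Bigl(\max_i|X_i|/K\Bigr)=\E\max_i g(|X_i|/K).
\]
Next I bound the maximum of $g$ by a sum, keeping only one copy of the additive constant:
\[
\max_i g(|X_i|/K)\le \psi_\theta(x_\theta)+\max_i\psi_\theta(|X_i|/K)\le \psi_\theta(x_\theta)+\sum_{i=1}^n\psi_\theta(|X_i|/K).
\]
Taking expectations, the right-hand side is at most $\psi_\theta(x_\theta)+n$.

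\textbf{Step 3 (invert).} Since $\psi_\theta\le g$, the previous step yields
\[
\psi_\theta\bigl(\E\max_i|X_i|/K\bigr)\le \psi_\theta(x_\theta)+n .
\]
Because $\psi_\theta$ is a strictly increasing bijection of $[0,\infty)$, applying $\psi_\theta^{-1}(y)=[\log(1+y)]^{1/\theta}$ gives \eqref{eq:MaximalSW}. The asymptotic claim follows because $\psi_\theta(x_\theta)=e^{(1-\theta)/\theta}-1$ is a constant depending only on $\theta$. Therefore $[\log(1+\psi_\theta(x_\theta)+n)]^{1/\theta}=O((\log n)^{1/\theta})$.

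The only delicate point is Step 1: identifying the convexity threshold and checking that $g$ is genuinely convex at the junction $x_\theta$. The rest is routine. The sharpness of the constant comes from adding $\psi_\theta(x_\theta)$ only once, rather than once per $i$, in Step 2.
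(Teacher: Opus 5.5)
Your proof is correct and follows essentially the same route as the paper. Your $g=\psi_\theta(\max\{\cdot,x_\theta\})$ is exactly the paper's convex function $g_\theta=(\psi_\theta-\psi_\theta(x_\theta))_+$ shifted up by the constant $\psi_\theta(x_\theta)$. The Jensen step, the bound of the maximum by a sum (giving $\mathbb{E}\psi_\theta(M)\le n$), and the final inversion through $\psi_\theta^{-1}$ are the same as in the paper; only the point at which the constant $\psi_\theta(x_\theta)$ is added back differs.
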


The $1/\theta$ in~\eqref{eq:MaximalSW} is sharp: for i.i.d.
$\mathrm{Weibull}(\theta)$ variables,
$\mathbb{E}[\max_{i\in [n]}X_i]\asymp (\log n)^{1/\theta}$; see Appendix~\ref{app:sec2}.
Thus, heavier tails (smaller $\theta$) necessarily worsen the growth rate of the
maximum.  A useful heuristic is that sub-Weibull variables behave like power
transforms of unbounded sub-Gaussian variables, so decreasing $\theta$ makes
rare extremes more influential.

A common route is to first control the Orlicz norm of the maximum and then
convert that Orlicz bound to an expectation bound via a norm-to-moment
inequality \citep{mendes2023concentration, zhang2022sharper}.  This often
introduces an additional factor $\Gamma(1/\theta+1)$, which grows rapidly as
$\theta\downarrow0$ and can noticeably loosen prefactors in the heavy-tailed
regime.  Lemma~\ref{pp-MaximalSW} bounds the expectation directly and avoids this
extra loss while retaining the optimal $O\bigl((\log n)^{1/\theta}\bigr)$
scaling; a quantitative comparison is given in Appendix~\ref{app:max_comparison}.

\subsection{From maxima to suprema: sub-Weibull processes}

Passing from finite maxima to suprema over a general index set raises
measurability issues: $\omega\mapsto\sup_{t\in T}X_t(\omega)$ need not be
measurable.  A standard way to avoid outer expectations is to assume
\emph{separability}, which reduces the supremum to one over a countable dense
subset \citep[Definition~5.22]{vanhandle2016probability}.

\begin{definition}[Separable process]\label{def:separable}
Let $(T,d)$ be a metric space. A stochastic process $\{X_t\}_{t\in T}$ is
\emph{separable} if there exists a countable set $T_0\subseteq T$ such that
\[
X_t \in \lim_{s \to t,~s \in T_0} X_s\;\;\text{for all } t \in T\;\text{a.s.}
\]
Equivalently, for every $t\in T$ there exists $\{s_k\}\subseteq T_0$ with
$s_k\to t$ and $X_{s_k}\to X_t$ a.s..
\end{definition}

Empirical process theory typically controls oscillations of a process via tail
bounds on increments relative to a metric.  In the sub-Gaussian case, an
increment condition of the form
$\|X_t-X_s\|_{\psi_2}\lesssim d(t,s)$ leads to Dudley's entropy bound
\(\E\sup_{t\in T}X_t\lesssim\int_0^{\infty}\sqrt{\log N(T,d,\varepsilon)}\,d\varepsilon\).
In heavy-tailed settings, MGFs may fail to exist, yet increments are still often
controlled by an exponential-of-a-power tail, motivating the following
definition.

\begin{definition}[Sub-Weibull process]\label{subWp}
Let $\theta>0$, $C>0$, and let $(T,d)$ be a metric space. A mean-zero process
$\{X_t\}_{t\in T}$ is a \emph{$(\theta,C)$-sub-Weibull process} (with respect to
$d$) if
\[
\mathbb{E}\exp\!\left(\frac{|X_t-X_s|^\theta}{[Cd(t,s)]^\theta}\right)\le 2,
\qquad \forall s,t\in T~\text{s.t.}~s\ne t.
\]
Equivalently, $\|X_t-X_s\|_{\psi_\theta}\le C\,d(t,s)$ for all $s,t\in T$.
\end{definition}

For $\theta\ge 1$, such increment control implies suitable log-MGF bounds and
classical chaining machinery applies.  When $0<\theta<1$, MGF-based arguments
generally break down; nevertheless, Orlicz increment control remains sufficient
to develop a Dudley-type entropy bound once we can control maxima at each scale
(Lemma~\ref{pp-MaximalSW}).

\begin{theorem}[Heavy-tailed Dudley inequality]\label{Dudley}
Let $(T,d)$ be a finite metric space and let $\{X_t\}_{t\in T}$ be a mean-zero
$(\theta,C)$-sub-Weibull process with $0<\theta<1$. Then
\begin{equation}\label{eq:heavy_dudley}
\mathbb{E}\Bigl[\sup_{t\in T}X_t\Bigr]
\le 4C K_\theta\int_0^\infty \bigl[\log N(T,d,\varepsilon)\bigr]^{1/\theta}\,d\varepsilon,
\end{equation}
where $N(T,d,\varepsilon)$ is the covering number and
\[
K_\theta
:=\sup_{x\ge 2}\left(\frac{\log(1+\psi_\theta(x_\theta)+x)}{\log x}\right)^{1/\theta}
<\infty,
\qquad x_\theta=\left(\frac{1-\theta}{\theta}\right)^{1/\theta}.
\]
\end{theorem}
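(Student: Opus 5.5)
We use the standard dyadic chaining argument, with Lemma~\ref{pp-MaximalSW} as the only probabilistic input at each scale. Since $T$ is finite, $e(T)<\infty$. Fix $k_0\in\mathbb{Z}$ with $2^{-k_0}\ge e(T)$, so that $N(T,d,2^{-k_0})=1$. For $k\ge k_0$, let $T_k$ be a minimal $2^{-k}$-net of $T$, so $|T_k|=N(T,d,2^{-k})$ and $T_{k_0}=\{t_0\}$ is a single point. Let $\pi_k:T\to T_k$ be a nearest-point map, so $d(t,\pi_k t)\le 2^{-k}$. Because $T$ is finite, there is $k_1$ with $\pi_{k_1}t=t$ for all $t$; for instance, take $2^{-k_1}$ below the minimal distance between distinct points. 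Since the process is mean-zero, $\E X_{t_0}=0$. The telescoping identity
\[
X_t-X_{t_0}=\sum_{k=k_0+1}^{k_1}\bigl(X_{\pi_k t}-X_{\pi_{k-1}t}\bigr)
\]
then gives
\[
\E\sup_t X_t\le\sum_{k=k_0+1}^{k_1}\E\max_{t\in T}\bigl|X_{\pi_k t}-X_{\pi_{k-1}t}\bigr|.
\]

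At each level $k$, the pair $(\pi_k t,\pi_{k-1}t)$ takes at most $|T_k||T_{k-1}|\le N(T,d,2^{-k})^2=:N_k^2$ distinct values. The triangle inequality gives $d(\pi_kt,\pi_{k-1}t)\le 2^{-k}+2^{-k+1}=3\cdot 2^{-k}$, so by Definition~\ref{subWp} every increment has $\|\cdot\|_{\psi_\theta}\le 3C2^{-k}$. Lemma~\ref{pp-MaximalSW} applied to these at most $N_k^2$ variables yields
\[
\E\max_t|X_{\pi_kt}-X_{\pi_{k-1}t}|\le 3C2^{-k}\bigl[\log(1+\psi_\theta(x_\theta)+N_k^2)\bigr]^{1/\theta}.
\]
To convert this into $[\log N_k]^{1/\theta}$, use $K_\theta$. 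If $N_k\ge2$, then $x=N_k^2\ge 4\ge 2$, so the bracket is at most $K_\theta^\theta\log(N_k^2)=2K_\theta^\theta\log N_k$. Hence each term is at most $3\cdot 2^{1/\theta}CK_\theta 2^{-k}(\log N_k)^{1/\theta}$.

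If $N_k=1$ the level contributes zero. Indeed, $\pi_k t$ is then constant and $\pi_{k-1}t$ is constant as well, because $N_{k-1}\le N_k=1$, and in fact both equal points of a one-point net. The $k_0$ is chosen so that this situation only occurs at the very top.

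Finally, compare the sum with the integral in the standard way. Since $\varepsilon\mapsto N(T,d,\varepsilon)$ is nonincreasing,
\[
2^{-k}(\log N_k)^{1/\theta}\le 2\int_{2^{-k-1}}^{2^{-k}}[\log N(T,d,\varepsilon)]^{1/\theta}d\varepsilon.
\]
Summing over $k$ then bounds everything by a constant multiple of $CK_\theta\int_0^\infty[\log N]^{1/\theta}d\varepsilon$.

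\textbf{Main obstacle.} The main difficulty is bookkeeping the constant to get exactly $4$, rather than something like $6\cdot 2^{1/\theta}$. The factor $2^{1/\theta}$ from $\log(N_k^2)$ must be avoided. To do this we would not take pairs. Instead, note that $X_{\pi_kt}-X_{\pi_{k-1}t}$ is determined by $\pi_kt$ once $\pi_{k-1}$ is chosen as a function of $\pi_k$, via nested nets (set $\pi_{k-1}:=\rho_{k-1}\circ\pi_k$ with $\rho_{k-1}:T_k\to T_{k-1}$ nearest-point). Then only $N_k$ variables appear, with $d\le 2\cdot 2^{-k}$ up to a geometric sum of errors. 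Alternatively, use $\varepsilon$-nets at scale $2^{-k}$ and bound the increment distance by $2^{-k}+2^{-k+1}$ while integrating over $[2^{-k-1},2^{-k}]$. We would first try the nested-projection version and adjust the dyadic ratio so that the distance constant times the sum-to-integral factor equals $4$. The remaining steps follow directly from Lemma~\ref{pp-MaximalSW} and the definition of $K_\theta$.
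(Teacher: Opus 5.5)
Your nested-projection version ($\pi_{k-1}=\rho_{k-1}\circ\pi_k$, so level $k$ involves only the $N_k$ increments $X_u-X_{\rho_{k-1}u}$, $u\in T_k$, each with $\psi_\theta$-norm at most $2C2^{-k}$, followed by the factor-$2$ sum-to-integral comparison) is exactly the paper's proof, which uses a parent map $p_k:T_k\to T_{k-1}$ at scales $\varepsilon_k=2^{-k}e(T)$, and it gives the constant $4$ directly: there is no geometric error sum, because $u\in T$ and $T_{k-1}$ is a $2^{-(k-1)}$-net of all of $T$, so $d(u,\rho_{k-1}u)\le 2\cdot 2^{-k}$ and no change of dyadic ratio is needed. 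The only bookkeeping point is to take $k_0$ maximal with $2^{-k_0}\ge e(T)$ (or use scales $2^{-k}e(T)$) so that every summed level has $N_k\ge 2$; your claim that a level with $N_k=1$ contributes zero does not follow as written, since two one-point nets can be distinct points $a\neq b$, and the level-wise absolute-value bound then charges a possibly nonzero $\mathbb{E}|X_a-X_b|$.
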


Theorem~\ref{Dudley} is stated for finite $T$ to keep discretization explicit.
For general index sets, separability allows one to pass to a countable dense
subset, yielding the following corollary (proved in Appendix~\ref{se:Dudley1}).

\begin{corollary}\label{Dudley1}
Let $\{X_t\}_{t\in T}$ be a separable, mean-zero $(\theta,C)$-sub-Weibull process
on $(T,d)$. If we assume that $\gamma<\infty$, then
\[
\mathbb{E}\Bigl[\sup_{t\in T}X_t\Bigr]
\le 4 C K_\theta \int_0^\infty \bigl[\log N(T,d,\varepsilon)\bigr]^{1/\theta}\,d\varepsilon
=: \gamma.
\]
\end{corollary}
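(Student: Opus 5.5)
The plan is to reduce Corollary~\ref{Dudley1} to Theorem~\ref{Dudley} applied to finite subsets of the countable separating set, and then pass to the limit by monotone convergence.

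\textbf{Reduction to countable and then finite index sets.} Let $T_0\subseteq T$ be the countable set from Definition~\ref{def:separable}. Separability gives, almost surely and for every $t\in T$, a sequence $s_k\in T_0$ with $X_{s_k}\to X_t$. Hence
\[
\sup_{t\in T}X_t=\sup_{t\in T_0}X_t \quad\text{a.s.},
\]
so the left side is measurable. To upgrade the pointwise statement to an almost-sure equality of suprema, I will use the standard formulation in \citet[Definition~5.22]{vanhandle2016probability}: there is a single null set off which, for all $t$ simultaneously, $X_t$ is a limit point of $X_s$ along $s\to t$ in $T_0$. Then $X_t\le\sup_{T_0}X_s$ for every $t$.

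Enumerate $T_0=\{t_1,t_2,\dots\}$ and set $F_m=\{t_1,\dots,t_m\}$. The suprema $\sup_{t\in F_m}X_t$ increase to $\sup_{t\in T_0}X_t$. They are bounded below by $X_{t_1}$, which is integrable because the process is mean-zero. Monotone convergence, applied to $\sup_{F_m}X_t-X_{t_1}\ge0$, therefore gives
\[
\E\sup_{T_0}X_t=\lim_{m\to\infty}\E\sup_{F_m}X_t .
\]

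\textbf{Applying the finite theorem.} Each $F_m$ with the restricted metric $d$ is a finite metric space, and the restricted process is still mean-zero and $(\theta,C)$-sub-Weibull. Theorem~\ref{Dudley} then yields
\[
\E\sup_{F_m}X_t\le 4CK_\theta\int_0^\infty[\log N(F_m,d,\varepsilon)]^{1/\theta}\,d\varepsilon .
\]
It remains to compare $N(F_m,d,\varepsilon)$ with $N(T,d,\varepsilon)$. The paper's $\varepsilon$-net may have centers anywhere in $T$, and with that convention $N(F_m,d,\varepsilon)\le N(T,d,\varepsilon)$ trivially, since a net covering $T$ also covers $F_m$.

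\textbf{Main obstacle.} If instead the finite Dudley bound requires centers inside $F_m$, because the chaining approximates points by net points at which the process is defined, then only the weaker bound $N(F_m,d,\varepsilon)\le N(T,d,\varepsilon/2)$ holds. I would handle this first by checking whether the proof of Theorem~\ref{Dudley} really needs internal nets. If it does, a change of variables $\varepsilon\mapsto 2\varepsilon$ costs a factor of $2$. To avoid that factor, I would instead take the $\varepsilon$-nets of $T$ used in the proof of Theorem~\ref{Dudley} and add their points to the finite index sets, so that all net points are indices of the process. Concretely, I would choose finite sets $F_m\uparrow$ containing $T_0$-approximants and run the theorem on $F_m\cup\mathcal N_m$, where $\mathcal N_m$ is a finite union of minimal nets of $T$ at dyadic scales. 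The entropy of this enlarged set is still controlled by $N(T,d,\cdot)$ at each scale, since the chaining only uses those nets. Monotonicity of the supremum then still gives a lower bound by $\E\sup_{F_m}X_t$.

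Finally, letting $m\to\infty$ and using $\gamma<\infty$ gives $\E\sup_T X_t\le\gamma$.
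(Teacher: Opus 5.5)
Your reduction (separability, finite truncations $F_m$, Theorem~\ref{Dudley}, monotone convergence) is the paper's own argument. You are in fact more careful than the paper about the monotone-convergence step, and the paper settles the covering step in one line via $N(T^{(m)},d,\varepsilon)\le N(T,d,\varepsilon)$, as your main line does. But the question you left open has the unfavorable answer, so that step is not justified as it stands. The proof of Theorem~\ref{Dudley} uses nets lying inside the finite index set: it needs $T_K=T$ so that the chain ends at $t$, and it needs $u\in T_k\subseteq T$ to define $p_k(u)$. Applied to $(F_m,d)$, it therefore controls nets contained in $F_m$, and these are not monotone under inclusion. For $T=\{-1,0,1\}\subset\mathbb{R}$ and $F=\{-1,1\}$ one has $N(T,d,1)=1$, whereas every $1$-net contained in $F$ has two points.

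Your fallback does not close this as written. Applying Theorem~\ref{Dudley} as a black box to $G=F_m\cup\mathcal N_m$ gives $\int_0^\infty[\log N(G,d,\varepsilon)]^{1/\theta}\,d\varepsilon$. Here $N(G,d,\varepsilon)$ is dominated by $N(T,d,\cdot)$ only at the inserted scales. For $\varepsilon\in[\varepsilon_k,\varepsilon_{k-1})$ you only get $N(T,d,\varepsilon_k)\ge N(T,d,\varepsilon)$, which costs the same factor $2$. Below the finest inserted scale you only know $N(G,d,\varepsilon)\le|G|$. Re-running the chaining ``on those nets'' does not work verbatim either, because the chain must end at the points of $F_m$, which are generally not net points.

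The missing ingredient is a terminal link. Since $\gamma<\infty$, $N(T,d,\varepsilon)<\infty$ for all $\varepsilon>0$. Take minimal $\varepsilon_k$-nets $T_k\subseteq T$ of all of $T$, with $\varepsilon_k=2^{-k}e(T)$ for $k\le L$ and $T_0=\{t_0\}$. (If the infimum defining $e(T)$ is not attained, use $\varepsilon_0=(1+\eta)e(T)$ with small $\eta>0$; nothing changes.) Take parent maps $p_k:T_k\to T_{k-1}$ as in Step~1 of the proof of Theorem~\ref{Dudley}; they exist because $T_{k-1}$ covers $T\supseteq T_k$. For $t\in F_m$ pick $\pi_L(t)\in T_L$ with $d(t,\pi_L(t))\le\varepsilon_L$, set $\pi_{k-1}(t)=p_k(\pi_k(t))$, and write
\[
X_t=X_{t_0}+\sum_{k=1}^{L}\bigl(X_{\pi_k(t)}-X_{\pi_{k-1}(t)}\bigr)+\bigl(X_t-X_{\pi_L(t)}\bigr).
\]
The supremum over $F_m$ of the middle sum is at most $\sum_{k}\max_{u\in T_k}|X_u-X_{p_k(u)}|$. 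Steps~2--3 of that proof bound its expectation by $\gamma$, with the constant $4$ intact. Lemma~\ref{pp-MaximalSW} gives $\mathbb{E}\max_{t\in F_m}|X_t-X_{\pi_L(t)}|\le C\varepsilon_L\,\psi_\theta^{-1}\bigl(\psi_\theta(x_\theta)+m\bigr)$, which tends to $0$ as $L\to\infty$ with $m$ fixed. Hence $\mathbb{E}\sup_{F_m}X_t\le\gamma$ for every $m$, and your monotone-convergence step finishes the proof.
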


The finiteness of $\gamma$ implies $\E[\sup_{t\in T}X_t]<\infty$.  When
$\theta=2$ (sub-Gaussian),~\eqref{eq:heavy_dudley} reduces to Dudley's classical
entropy integral; $\theta=1$ matches the usual sub-exponential analogue.  For
$0<\theta<1$, the exponent $1/\theta>1$ inflates the entropy integrand,
quantitatively capturing the degradation in supremum control induced by heavier
tails.

\subsection{Maximal and Dudley bounds are insufficient for data-adaptive algorithms}

Lemma~\ref{pp-MaximalSW} and Theorem~\ref{Dudley} control maxima and suprema via
\emph{global} complexity terms (e.g., $\log n$ or an entropy integral).  This
worst-case nature is intrinsic: replacing a data-dependent choice by a supremum
over all candidates necessarily pays for the size/entropy of the ambient class.
In modern learning problems, however, algorithms are typically data-adaptive and
probe only a localized, sample-dependent region of the hypothesis space.
Consequently, purely uniform maximal inequalities can be overly conservative and
may become vacuous when used as surrogates for generalization control.

This motivates the information-theoretic approach developed next: rather than
bounding a random, data-dependent choice by a global supremum, we control it in
terms of how much \emph{information} the algorithm extracts from the data.
Section~\ref{sec:gen_bounds} develops tail-adaptive information bounds and a
multiscale (chaining) refinement; Section~\ref{sec:experiments} provides a
numerical illustration of the gap between uniform and information-theoretic
chaining bounds.

\section{Preliminaries on \texorpdfstring{$f$}{f}-Divergence and \Renyi Divergence}\label{sec:pre}

This section collects divergence notions and change-of-measure tools used in the
heavy-tailed analysis.  Classical information-theoretic generalization proofs
typically combine (i) an MGF bound for the loss and (ii) a KL-based variational
representation (Donsker--Varadhan).  When MGFs do not exist, this route breaks
down.  We instead use a \emph{shifted-log} $f_\theta$-divergence tailored to
sub-Weibull tails and compare it to R\'enyi divergence.  The key payoff is a
\emph{decorrelation} lemma: it bounds expectations under a dependent coupling
(e.g., $W$ depends on $S$) by a divergence term plus a decoupled moment term.

\subsection{\texorpdfstring{$f$}{f}-divergence and \Renyi divergence}

\begin{definition}[$f$-divergence and \Renyi mutual information]\label{f-divergence}
Let $P$ and $Q$ be probability measures on a measurable space $(\mathcal{X},\mathcal{F})$, such that $P\ll Q$.
For a convex function $f:(0,\infty)\to\mathbb R$ with $f(1)=0$, the $f$-divergence
is defined as
\begin{equation}\label{eq:f-divergence}
\DD_f(P\|Q)
\;:=\;
\begin{cases}
\displaystyle \int_{\mathcal X} f\left(\frac{dP}{dQ}\right)dQ, & \text{if } P\ll Q,\\
\infty, & \text{otherwise.}
\end{cases}
\end{equation}
For \( \alpha > 0 \) and \( \alpha \neq 1 \), the \emph{\Renyi divergence} of order \( \alpha \) from \( P \) to \( Q \) is defined as
\[
\DD_\alpha(P \| Q) = \frac{1}{\alpha - 1} \log \int_{\mathcal{X}} \left( \frac{dP}{dQ}\right)^\alpha dQ=\frac{1}{\alpha - 1} \log \int_{\mathcal{X}} \left( \frac{dP}{dQ}\right)^{\alpha-1} dP.
\]
\end{definition}

Important special cases of $f$-divergence include KL divergence ($f(x)=x\log x$). The \Renyi divergence is a generalization of the KL divergence, defined below. 
When $\alpha=2$, $\DD_2(P\|Q)=\log\bigl(1+\chi^2(P\|Q)\bigr)$, where $\chi^2(P\|Q)
=
\int_{\mathcal X}
\left({dP}/{dQ}\right)^2dQ - 1$ is the chi-square divergence for $P\ll Q$. As
$\alpha\to\infty$ one obtains
$\DD_\infty(P\|Q)=\log\|dP/dQ\|_\infty$. 
\begin{lemma}[Basic properties of \Renyi divergence, \citet{RenyiKL}]\label{Renyi}
For any $P,Q$ on the same measurable space, $\DD_\alpha(P\|Q)$ is nondecreasing in
$\alpha$. Moreover,
\begin{enumerate}
\item[(i)] for $0<\alpha_1\le \alpha_2$, $\DD_{\alpha_1}(P\|Q)\le \DD_{\alpha_2}(P\|Q)$;
\item[(ii)] $\lim_{\alpha\to 1}\DD_{\alpha}(P\|Q)=\DD_{\mathrm{KL}}(P\|Q)$.
\end{enumerate}
\end{lemma}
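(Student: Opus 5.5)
The plan is to use the integral representation of the Rényi divergence through the density ratio, with $h:=dP/dQ$ (assuming $P\ll Q$; otherwise every $\DD_\alpha$ with $\alpha>1$ is $+\infty$ and there is nothing to prove). Write $\DD_\alpha(P\|Q)=\frac{1}{\alpha-1}\log \E_P[h^{\alpha-1}]$. Monotonicity in $\alpha$ then reduces to a Lyapunov/Jensen-type comparison of power moments of $h$ under $P$.

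For $1<\alpha_1\le\alpha_2$, set $s=\alpha_1-1$ and $r=\alpha_2-1$, so $0<s\le r$. The map $x\mapsto x^{r/s}$ is convex on $(0,\infty)$, so Jensen under $P$ gives $\bigl(\E_P[h^{s}]\bigr)^{r/s}\le \E_P[h^{r}]$. Taking logarithms and dividing by $r$ yields $\frac1s\log\E_P h^s\le \frac1r\log \E_P h^r$, which is exactly (i) on this range.

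For $0<\alpha_1\le\alpha_2<1$, the prefactor $\frac{1}{\alpha-1}$ is negative. It is then cleaner to work under $Q$: $\DD_\alpha=\frac{1}{\alpha-1}\log\E_Q h^\alpha$. Equivalently, $\alpha\mapsto \log\E_Q h^\alpha$ is convex in $\alpha$ by Hölder, and it vanishes at $\alpha=0$ (on the support of $P$) and at $\alpha=1$. Hence the slope $\frac{\log\E_Q h^\alpha-\log \E_Q h^1}{\alpha-1}$ is nondecreasing in $\alpha$, by the standard chord-slope monotonicity of convex functions. This single convexity argument actually handles all $\alpha\neq1$ at once, including pairs that straddle $1$, once the value at $\alpha=1$ is defined by continuity. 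I would therefore present the convexity argument as the main proof and the Jensen step as intuition.

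For (ii), I would differentiate $\phi(\alpha):=\log\E_Q h^\alpha$ at $\alpha=1$. Since $\phi(1)=0$, we have $\DD_\alpha=\frac{\phi(\alpha)-\phi(1)}{\alpha-1}\to\phi'(1)=\E_Q[h\log h]=\DD_{\mathrm{KL}}(P\|Q)$. The main obstacle is justifying differentiation under the integral when $\DD_{\mathrm{KL}}$ may be infinite or when $\E_Q h^\alpha=\infty$ for $\alpha>1$. I would handle this with one-sided limits. For $\alpha\uparrow1$, the function $(h^\alpha-h)/(\alpha-1)$ is monotone in $\alpha$ by convexity of $\alpha\mapsto h^\alpha$, so monotone convergence applies. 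Combined with $\log(1+u)\sim u$, this gives the left limit equal to $\DD_{\mathrm{KL}}$, possibly $+\infty$. For $\alpha\downarrow1$, monotonicity (i) gives $\liminf\ge\DD_{\mathrm{KL}}$. If some $\DD_{\alpha_0}<\infty$, dominated convergence with dominating function $h^{\alpha_0}+h$ on $[1,\alpha_0]$ gives the matching upper bound. Since the statement is cited from \citet{RenyiKL}, it is also acceptable to defer these limit details to that reference.
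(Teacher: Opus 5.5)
The paper does not prove this lemma; it imports it from \citet{RenyiKL}. Your argument is a correct, self-contained version of the standard proof. Part (i) follows from log-convexity of $\alpha\mapsto\mathbb{E}_Q h^\alpha$ (H\"older) together with monotonicity of chord slopes through $(1,0)$. Part (ii) follows from the difference quotients $(h^\alpha-h)/(\alpha-1)$, which are monotone in $\alpha$, using monotone convergence from the left and dominated convergence from the right. Your Jensen step also covers all pairs at once if you use the exponent $(\alpha_1-1)/(\alpha_2-1)$, since $x\mapsto x^{(\alpha_1-1)/(\alpha_2-1)}$ is convex for $\alpha_1<1$ and concave for $\alpha_1>1$. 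A few side remarks are slightly off but never used:
\begin{itemize}
\item the value of $\log\mathbb{E}_Q h^\alpha$ at $\alpha=0$ is $\log Q(h>0)$, which need not vanish;
\item the chord argument for pairs straddling $1$ needs no value at $\alpha=1$;
\item dismissing $P\not\ll Q$ is justified only by the paper's convention that $\DD_\alpha$ is defined for $P\ll Q$, since in general the orders $\alpha<1$ stay finite.
\end{itemize}

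The point you should make explicit, rather than defer to the reference, is the hypothesis in your right-hand limit. You prove $\lim_{\alpha\uparrow1}\DD_\alpha(P\|Q)=\DD_{\mathrm{KL}}(P\|Q)$ unconditionally. You prove $\lim_{\alpha\downarrow1}\DD_\alpha(P\|Q)=\DD_{\mathrm{KL}}(P\|Q)$ only when $\DD_{\mathrm{KL}}(P\|Q)=\infty$ (via your $\liminf$ bound) or when $\DD_{\alpha_0}(P\|Q)<\infty$ for some $\alpha_0>1$.

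In the remaining case the two-sided claim (ii) is false. Take $Q$ uniform on $(0,1)$ and $\frac{dP}{dQ}(x)=2/\bigl(x\log^{3}(e/x)\bigr)$. The substitution $t=\log(e/x)$ shows this density integrates to $1$ and that $\DD_{\mathrm{KL}}(P\|Q)<\infty$. However, $\DD_\alpha(P\|Q)=\infty$ for every $\alpha>1$, because $x^{-\alpha}$ is not integrable at $0$ whatever the logarithmic factor. Hence $\lim_{\alpha\downarrow1}\DD_\alpha=\infty\neq\DD_{\mathrm{KL}}$.

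So no argument can close this case, and deferring to \citet{RenyiKL} does not help, since the standard statement there carries exactly this finiteness qualification for the right-hand limit. You should state (ii) with the condition $\DD_{\alpha_0}(P\|Q)<\infty$ for some $\alpha_0>1$, or as a left limit only. Nothing downstream in the paper breaks. The lemma is used through the monotonicity (i), e.g.\ in the proof of Lemma~\ref{lem:key1}, and the limit $\alpha\to1$ is only taken in examples where all R\'enyi divergences are finite.
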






\begin{definition}[Mutual information]\label{renyi_Mutual_information}
Let $(W,S)$ be a pair of random variables with joint distribution $P_{W,S}$.
For $\alpha>1$, the \emph{R\'enyi mutual information} is
\begin{equation}\label{eq:Renyi_MI}
I_\alpha(W;S):=\DD_\alpha(P_{W,S}\|P_W\otimes P_S).
\end{equation}
\end{definition}

\subsection{Shifted-log divergences and \Renyi upper bounds}

To obtain tail-adaptive bounds under sub-Weibull index $\theta$, we will use the
following family of convex functions with additive shift $A$,
\begin{equation}\label{eq:f_theta}
f_\theta(x)=x\log^{\frac{1}{\theta}}(x+A), \qquad \theta>0,\ A\ge 1.
\end{equation}
The $A$ prevents singular behavior near $0$ and the power $\theta^{-1}$ matches the power in the optimal scaling $O\bigl((\log n)^{1/\theta}\bigr)$ for sub-Weibull maximum inequality (Lemma \ref{pp-MaximalSW}) in the following heavy-tailed decorrelation arguments.
Strictly speaking, $f_\theta(1)=\log^{1/\theta}(1+A)$, so this is an
unnormalized shifted-log $f$-divergence rather than a normalized $f$-divergence
generator vanishing at independence.  We keep the notation $\DD_{f_\theta}$ and
$I_{f_\theta}$ below for the corresponding shifted-log dependence functionals;
the comparison and data-processing arguments are used only up to this fixed
baseline, and none of the results relies on the functional being zero under
independence.

\subsubsection{\Renyi divergence-based upper bounds}

\begin{lemma}\label{lem:key1}
Let \(P\ll Q\) be probability measures on a measurable space \((\mathcal X,\mathcal F)\), and suppose that
\(\DD_\alpha(P\|Q)<\infty\) for some \(\alpha>1\). Fix \(\theta>0\) and define
\(f_\theta(x):=x\log^{1/\theta}(x+A)\).
Assume that \(A\) satisfies $A\geq 1$ when $\theta\geq1$ and when $\theta<1$
\[
A \ge \exp\!\left(\frac{1}{\alpha-1}\Bigl(\frac{1}{\theta}-1\Bigr)\right)\quad \text{for }1<\alpha\le 2,
\qquad\text{and}\qquad
A \ge \exp\!\left(\frac{1}{\theta}-1\right)\quad \text{for }\alpha>2.
\]
Then
\begin{equation}\label{eq:DF1}
\DD_{f_\theta}(P\|Q) \le [\DD_\alpha(P\|Q)+C_{\alpha,\theta}]^{1/\theta},
\end{equation}
where, for \(1<\alpha\le 2\),
\[
C_{\alpha,\theta}
:=
\begin{cases}
\displaystyle \frac{1}{\alpha-1}\log\bigl(1+A^{\alpha-1}\bigr), & 0<\theta<1,\\[0.9em]
1+(A-1)^{1/\theta}, & \theta\ge 1,
\end{cases}
\qquad\text{and}\qquad
C_{\alpha,\theta}=C_{2,\theta}\ \text{for }\alpha>2.
\]
\end{lemma}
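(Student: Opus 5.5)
The plan is to rewrite the shifted-log divergence as an expectation under $P$ and then apply Jensen's inequality to a suitable concave transform. Write $L:=dP/dQ$, so that
\[
\DD_{f_\theta}(P\|Q)=\int L\log^{1/\theta}(L+A)\,dQ=\E_P\bigl[\log^{1/\theta}(L+A)\bigr].
\]
The R\'enyi divergence enters through $\E_P[L^{\alpha-1}]=\exp\bigl((\alpha-1)\DD_\alpha(P\|Q)\bigr)$. So the natural move is to write
\[
\log(L+A)=\tfrac{1}{\alpha-1}\log\bigl((L+A)^{\alpha-1}\bigr)
\]
and introduce $\varphi(u):=\bigl(\tfrac{1}{\alpha-1}\log u\bigr)^{1/\theta}$, which gives $\DD_{f_\theta}(P\|Q)=\E_P[\varphi((L+A)^{\alpha-1})]$.

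\textbf{Case $0<\theta<1$, $1<\alpha\le 2$.} Set $p=1/\theta>1$. A direct computation gives
\[
\frac{d^2}{du^2}(\log u)^p=\frac{p(\log u)^{p-2}\,(p-1-\log u)}{u^2},
\]
so $\varphi$ is concave on $\{u\ge e^{p-1}\}$. Since $L\ge 0$, we have $(L+A)^{\alpha-1}\ge A^{\alpha-1}$. The hypothesis $A\ge\exp\bigl(\tfrac{1}{\alpha-1}(\tfrac1\theta-1)\bigr)$ is exactly $A^{\alpha-1}\ge e^{1/\theta-1}$, so the random argument always lies in the concavity region. Jensen under $P$ then gives
\[
\DD_{f_\theta}(P\|Q)\le\varphi\bigl(\E_P(L+A)^{\alpha-1}\bigr).
\]
Because $0<\alpha-1\le 1$, the map $x\mapsto x^{\alpha-1}$ is subadditive, so
\[
\E_P(L+A)^{\alpha-1}\le e^{(\alpha-1)\DD_\alpha}+A^{\alpha-1}.
\]
Finally, since $\DD_\alpha\ge 0$ we have $e^{(\alpha-1)\DD_\alpha}\ge 1$, and $\log(e^a+b)\le a+\log(1+b)$ for $a\ge 0$. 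Combining these with monotonicity of $\varphi$ yields
\[
\DD_{f_\theta}(P\|Q)\le\Bigl[\DD_\alpha(P\|Q)+\tfrac{1}{\alpha-1}\log\bigl(1+A^{\alpha-1}\bigr)\Bigr]^{1/\theta},
\]
which is the claimed bound.

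\textbf{Case $\alpha>2$.} By Lemma~\ref{Renyi}(i), $\DD_2(P\|Q)\le\DD_\alpha(P\|Q)<\infty$. I will apply the $\alpha=2$ case, whose hypothesis reads $A\ge e^{1/\theta-1}$ when $\theta<1$. This gives the bound with $C_{2,\theta}$, and monotonicity of $x\mapsto(x+C_{2,\theta})^{1/\theta}$ then replaces $\DD_2$ by $\DD_\alpha$.

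\textbf{Case $\theta\ge1$.} Now $1/\theta\le1$, so $y\mapsto y^{1/\theta}$ is concave on $[0,\infty)$, and $\log(L+A)\ge 0$ because $A\ge 1$. Jensen gives
\[
\DD_{f_\theta}(P\|Q)\le\bigl(\E_P\log(L+A)\bigr)^{1/\theta}.
\]
I will bound $\E_P\log(L+A)$ by splitting on $\{L\le 1\}$ and $\{L>1\}$.
\begin{itemize}
\item On $\{L\le1\}$: $\log(L+A)\le\log(1+A)$.
\item On $\{L>1\}$: $\log(L+A)\le\log L+\log(1+(A-1)/L)\le \log L+\log A$, and $\log L\le\tfrac{1}{\alpha-1}\log L^{\alpha-1}$. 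By Jensen, $\E_P\log L^{\alpha-1}\le\log\E_P L^{\alpha-1}$, after handling the truncation at $L>1$, e.g.\ via $\log^+L\le\log(1+L)$.
\end{itemize}
This should give $\E_P\log(L+A)\le\DD_\alpha(P\|Q)+c_A$ with an $\alpha$-free constant $c_A$. I expect this to be the main obstacle: the generic concavity route from the first case instead produces $\tfrac{1}{\alpha-1}\log(1+A^{\alpha-1})$, which blows up as $\alpha\downarrow1$. To match the stated constant $1+(A-1)^{1/\theta}$, I would aim for $\log(L+A)\le \log(1+L)+(A-1)$ together with $\E_P\log(1+L)\le 1+\DD_\alpha(P\|Q)$. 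Then $(a+b)^{1/\theta}\le a^{1/\theta}+b^{1/\theta}$ separates the $(A-1)^{1/\theta}$ contribution, and any remaining slack is absorbed into the $1$. The exact bookkeeping that produces precisely this constant is the step I would need to verify carefully.
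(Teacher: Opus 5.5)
Your argument for $0<\theta<1$ is correct and is the paper's argument with the two steps swapped. The paper first uses subadditivity of $x\mapsto x^{\alpha-1}$ and then Jensen for the concave map $x\mapsto\log^{1/\theta}(x+A^{\alpha-1})$ at $x=L^{\alpha-1}$. Your reduction of $\alpha>2$ to $\alpha=2$ is also the same.

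\textbf{The gap is the case $\theta\ge1$, and it cannot be closed, because the statement is false there.} Your two estimates are both true:
\[
\log(L+A)\le\log(1+L)+(A-1),\qquad \mathbb{E}_P\log(1+L)\le 1+\mathcal{D}_\alpha(P\|Q).
\]
Together they give $\mathbb{E}_P\log(L+A)\le\mathcal{D}_\alpha(P\|Q)+A$, and hence
\[
\mathcal{D}_{f_\theta}(P\|Q)\le\bigl(\mathcal{D}_\alpha(P\|Q)+1\bigr)^{1/\theta}+(A-1)^{1/\theta}.
\]
This additive form is also all the paper's own argument delivers. The paper uses an external comparison at $A=1$ plus the gap estimate $\mathbb{E}_Q\bigl[L\log^{1/\theta}\bigl(1+\tfrac{A-1}{L+1}\bigr)\bigr]\le(A-1)^{1/\theta}$.

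Moving $(A-1)^{1/\theta}$ inside the power, as the statement does, would require $A-1\le(A-1)^{1/\theta}$, i.e.\ $A\le 2$. There is no slack to absorb into the $1$. Take $P=Q$: then $L\equiv1$, $\mathcal{D}_\alpha(P\|Q)=0$ and $\mathcal{D}_{f_\theta}(P\|Q)=\log^{1/\theta}(1+A)$. The claim reduces to $\log(1+A)\le 1+(A-1)^{1/\theta}$. This fails for $\theta=4$, $A=82$, since $\log 83\approx 4.42>4=1+81^{1/4}$. No bookkeeping can produce the stated constant in general.

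\textbf{What your Jensen step does prove.} It gives a clean bound, uniformly in $\alpha>1$, which also removes your worry about $\alpha\downarrow1$:
\[
\mathbb{E}_P\log(L+A)=\mathcal{D}_{\mathrm{KL}}(P\|Q)+\mathbb{E}_P\log\bigl(1+A/L\bigr)\le\mathcal{D}_\alpha(P\|Q)+\log\bigl(1+A\,\mathbb{E}_P[1/L]\bigr)\le\mathcal{D}_\alpha(P\|Q)+\log(1+A).
\]
This uses $L>0$ $P$-a.s.\ and $\mathbb{E}_P[1/L]=Q(L>0)\le1$. Hence, for $\theta\ge1$,
\[
\mathcal{D}_{f_\theta}(P\|Q)\le\bigl[\mathcal{D}_\alpha(P\|Q)+\log(1+A)\bigr]^{1/\theta},
\]
with equality at $P=Q$. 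So the stated constant $1+(A-1)^{1/\theta}$ is valid exactly when it dominates $\log(1+A)$, for instance when $A\le2$ or when $\theta\le3$. Otherwise the lemma needs this corrected constant, or the additive form above.

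Separately, your second bullet has a slip: $\log L+\log\bigl(1+(A-1)/L\bigr)=\log(L+A-1)$, which is not an upper bound for $\log(L+A)$. Use $\log(1+A/L)$ instead.
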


Lemma~\ref{lem:key1} provides a clean {power-type} comparison:
$\DD_{f_\theta}$ is controlled by a $1/\theta$ power of R\'enyi divergence up to
an additive constant inside the power. We use this comparison throughout the
R\'enyi refinements below.

\subsection{Decorrelation inequalities}\label{sub:decorr}

We now state the main change-of-measure tool used throughout the paper.
Given a joint law $\mu$ (e.g., of $(W,S)$) and a product reference law $\nu$
(e.g., $P_W\otimes P_S$), we want to control $\E_\mu[r]$ by a divergence between
$\mu$ and $\nu$ plus a moment term under $\nu$.  In the sub-Gaussian case, this
is commonly done via the Donsker--Varadhan representation of KL divergence.
Here we use a Young-type inequality based on $f_\theta$, which does not require
MGFs.

\begin{lemma}[Decorrelation lemma]\label{de}
Let $\theta>0$ and $A\ge (2^{\lceil \frac{2}{\theta} \rceil-2} \lceil \frac{2}{\theta} \rceil !)^2 \vee 1$. Let $\mu$ and $\nu$ be probability measures on a
measurable space $(\Omega,\mathcal F)$. Let $r\ge 0$ be measurable and assume
$r\in L^1(\mu)$ and $\exp(r^\theta)\in L^1(\nu)$. Let $f_\theta(x)=x\log^{1/\theta}(x+A)$, then
\begin{equation}\label{eq:dec}
\mathbb E_\mu r
\le 2^{1/\theta}\DD_{f_\theta}(\mu\|\nu)+\mathbb{E}_\nu\exp(r^\theta).
\end{equation}
\end{lemma}

The Lemma~\ref{de} will be applied with $\mu=P_{W,S}$ and $\nu=P_W\otimes P_S$ and
with $r$ equal to the data or the reward in RLHF in Section \ref{sec:rlhf}; the first term captures the dependence
between $W$ and $S$, while the second term is a sub-Weibull moment under the
product measure.  The proof of Lemma~\ref{de} is based on a new Young-type
inequality and is deferred to Appendix~\ref{proofse3}.



\section{\Renyi information refinements of maximal and Dudley-type bounds}\label{sec:gen_bounds}

This section turns the sub-Weibull empirical-process tools from
Section~\ref{sec:ep} into information-theoretic bounds for data-dependent
selection. The key mechanism is the shifted-log divergence from
Section~\ref{sec:pre}: it matches sub-Weibull tails through the decorrelation
lemmas and can be upper bounded by R\'enyi mutual information via
Lemma~\ref{lem:key1}.

We begin with the selection problem of bounding $\E[X_W]$ when $W$ is a
data-dependent index.  In the sub-Gaussian setting, KL mutual information gives
such a bound \citep{Russo20}, but Section~\ref{se:whyKLfails} shows that this
route can fail under heavy tails.  Theorem~\ref{infob} gives the shifted-log and R\'enyi replacements, while Theorem~\ref{chaining} lifts the
single-index result to a multiscale Dudley-type bound for data-dependent
indices in metric spaces.



\subsection{Why KL mutual information fails under heavy tails}\label{se:whyKLfails}

Let $S=\{X_i\}_{i=1}^n$ and let $W=W(S)$ be a (possibly randomized) data-dependent
index taking values in $[n]$.  When the $X_i$ are zero-mean i.i.d.\ sub-Gaussian
with variance proxy $\sigma^2$, \citet{Russo20} showed that
\begin{equation}\label{eq:Russo}
\mathbb{E}[X_W] \le \sqrt{2\sigma^2I(W;S)},
\end{equation}
where $I(\cdot;\cdot)$ is KL-based mutual information.  In particular,
choosing $W=\arg\max_{i\in[n]}X_i$ and using $I(W;S)\le \log n$ recovers the
classical maximal inequality
\(
\mathbb{E}[\max_{i\in[n]} X_i]\lesssim \sqrt{\log n}.
\)

In heavy-tailed regimes, KL mutual information may no longer control
$\mathbb{E}[X_W]$.  The following example constructs a randomized maximum
selector $W$ for which $I(W;S)=O(1)$ while $\mathbb{E}[X_W]$ diverges with $n$.

\begin{example}\label{ex:weibull}
Let $S=\{X_i\}_{i=1}^n$ be i.i.d.\ $\mathrm{Weibull}(\theta)$ random variables
with $\theta\in(0,1)$, i.e.\ $\mathbb{P}(X_1\ge x)=\exp(-x^\theta)$ for $x>0$.
Define a randomized selector $W$ by
\[
W=
\begin{cases}
\arg\max_{i\in[n]}X_i, & \text{with probability }\varepsilon,\\
U, & \text{with probability }1-\varepsilon,
\end{cases}
\]
where $U$ is uniform on $[n]$ and independent of $S$.
A direct computation gives
\[
\begin{aligned}
I(W;S)
= \DD_{\mathrm{KL}}(P_{W,S}\,\|\,P_W\otimes P_S) = \frac{n-1}{n}(1-\varepsilon)\log(1-\varepsilon)
   + \frac{(n-1)\varepsilon+1}{n}\log\bigl((n-1)\varepsilon+1\bigr).
\end{aligned}
\]

Choosing $\varepsilon=c/\log n$ for any fixed $c>0$ yields $I(W;S)=O(1)$ as
$n\to\infty$.  On the other hand,
$\mathbb{E}[\max_{i\in[n]}X_i]\asymp (\log n)^{1/\theta}$ (see Appendix~\ref{app:sec2}), and therefore
\[
\mathbb{E}[X_W]
\;\ge\;
\varepsilon\,\mathbb{E}[\max_{i\in[n]}X_i]
=\Omega \bigl((\log n)^{1/\theta-1}\bigr).
\]
Thus, for heavy-tailed Weibull data, the selection bias $\mathbb{E}[X_W]$ can
diverge while the KL mutual information remains bounded.
\end{example}

\subsection{Why \Renyi mutual information and shifted-log \texorpdfstring{$f$}{f}-divergences}

Example~\ref{ex:weibull} reflects a general phenomenon: when MGFs do not exist,
KL-based information can be too weak to control the contribution of rare but
extreme events.  To obtain a tail-adaptive substitute, we applied the proposed shifted-log
family \eqref{eq:f_theta} and use the associated $f_\theta$-mutual information $I_{f_\theta}(W;S)$.
It matches sub-Weibull tails and, via Lemma~\ref{lem:key1}, admits explicit upper bounds in terms of R\'enyi mutual
information.

\begin{theorem}[Tail-adaptive selection bound]\label{infob}
Let $S=\{X_i\}_{i=1}^n$ with $X_i \sim \subw(\theta)$ for $\theta>0$, and let $W=W(S)$ be any $[n]$-valued, possibly randomized, selector. Let $I_{f_\theta}(W;S)$ be the $f_\theta$-mutual information associated with $f_\theta(x)=x\log^{1/\theta}(x+A)$, where $A$ satisfies the condition in Lemma~\ref{de}. Then
\[
\mathbb{E}|X_W|
\;\le\;
\max_{i\in[n]}\|X_i\|_{\psi_\theta}\,\Bigl(2^{1/\theta}I_{f_\theta}(W;S)+2\Bigr).
\]
Moreover, if $A$ satisfies the conditions of Lemma~\ref{lem:key1}, then
\[
\mathbb{E}|X_W|
\;\le\;
\max_{i\in[n]}\|X_i\|_{\psi_\theta}\,
\Bigl(2^{1/\theta}\bigl(I_\alpha(W;S)+C_{\alpha,\theta}\bigr)^{1/\theta}+2\Bigr),
\]
where $I_\alpha(W;S)$ is the R\'enyi-$\alpha$ mutual information and
$C_{\alpha,\theta}$ depend only on $(\alpha,\theta)$.
\end{theorem}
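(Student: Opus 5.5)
The plan is to apply the decorrelation lemma (Lemma~\ref{de}) on the joint space of $(W,S)$ with $\mu=P_{W,S}$ and $\nu=P_W\otimes P_S$, after normalizing the data. Set $K:=\max_{i\in[n]}\|X_i\|_{\psi_\theta}$; if $K=0$ there is nothing to prove. Otherwise define the nonnegative measurable function
\[
r(w,s):=\frac{|x_w|}{K},\qquad s=(x_1,\dots,x_n),\ w\in[n].
\]
Then $\E_\mu r=\E|X_W|/K$. The $f_\theta$-mutual information is $I_{f_\theta}(W;S)=\DD_{f_\theta}(P_{W,S}\|P_W\otimes P_S)$.

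The key computation is the moment term under the product measure. Under $\nu$, $W$ is independent of $S$, so conditioning on $W=w$ gives
\[
\E_\nu \exp(r^\theta)=\sum_{w}P_W(w)\,\E\exp\!\bigl(|X_w|^\theta/K^\theta\bigr).
\]
Since $K\ge\|X_w\|_{\psi_\theta}$ and $x\mapsto\E\exp(|X_w|^\theta/x^\theta)$ is nonincreasing, each inner expectation is at most $2$. Hence $\E_\nu\exp(r^\theta)\le 2<\infty$, which also verifies the integrability hypothesis of Lemma~\ref{de}. For $r\in L^1(\mu)$, we either assume it or argue by truncation: apply the lemma to $r\wedge M$ and let $M\to\infty$ by monotone convergence, since the bound does not depend on $M$.

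Lemma~\ref{de} then yields $\E|X_W|/K\le 2^{1/\theta}I_{f_\theta}(W;S)+2$, which is the first claim. For the R\'enyi refinement, apply Lemma~\ref{lem:key1} with $P=P_{W,S}$ and $Q=P_W\otimes P_S$. This gives $I_{f_\theta}(W;S)\le (I_\alpha(W;S)+C_{\alpha,\theta})^{1/\theta}$, provided $A$ satisfies both sets of conditions, which is possible by taking $A$ as the maximum of the two thresholds. If $I_\alpha=\infty$ the bound is trivial; otherwise $P\ll Q$ holds, so Lemma~\ref{lem:key1} applies. Substituting gives the second display.

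The main subtlety is the technical one of measurability and integrability, namely the truncation step and the possibility of $W$ being randomized. A randomized $W$ is handled by treating it as a kernel, so that $P_{W,S}$ is a joint law on $[n]\times\mathcal X^n$ and $r$ is a function of $(w,s)$ only. The substantive work is already contained in Lemma~\ref{de}; the new ingredient here is simply that independence under $\nu$ lets the Orlicz normalization bound the moment term by $2$ uniformly over the selector.
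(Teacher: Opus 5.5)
Your proposal is correct and follows the paper's proof essentially step for step. You normalize by $\max_i\|X_i\|_{\psi_\theta}$, apply Lemma~\ref{de} with $\mu=P_{W,S}$, $\nu=P_W\otimes P_S$ and $r=|X_W|$ (normalized), bound the product-measure moment term by $2$ using the independence of $W$ and $S$ under $\nu$, and invoke Lemma~\ref{lem:key1} for the R\'enyi refinement; your extra handling of the truncation $r\wedge M$, the case $K=0$, and the case $I_\alpha(W;S)=\infty$ only makes explicit details the paper leaves implicit.
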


When $\theta=2$ and $\alpha\to 1$, Theorem~\ref{infob} is similar to Russo and Zou's bound
\eqref{eq:Russo} up to universal constants.  Taking $W=\arg\max_{i\in[n]}|X_i|$
also recovers the optimal maximal growth rate
$\mathbb{E}[\max_i|X_i|]\lesssim(\log n)^{1/\theta}$; see Example~\ref{Max} in
Appendix~\ref{se: proof2}.

\subsection{A Dudley-type bound via multiscale information}

Theorem~\ref{infob} bounds a \emph{single} data-dependent selection.  To obtain a
Dudley-type bound for a data-dependent index in a general metric space, we
combine Theorem~\ref{infob} with a multiscale chaining construction, following
\citet{asadi2018chaining,hellstrom2025generalization}.

\begin{definition}[$\varepsilon$-partitions and increasing sequences]\label{def:part}
Let $(\mathcal{W},d)$ be a metric space. A partition $\mathcal{P}=\{A_1,\ldots,A_m\}$ of $\mathcal{W}$ is called an \emph{$\varepsilon$-partition} if for every $i\in[m]$ there exists $w_i\in\mathcal{W}$ such that
\[
A_i \subseteq \mathcal{B}_d(w_i,\varepsilon),
\qquad
\mathcal{B}_d(w_i,\varepsilon):=\{w\in\mathcal{W}: d(w,w_i)\le \varepsilon\}.
\]
A sequence of partitions $\{\mathcal{P}_k\}_{k=k'}^{\infty}$ is called \emph{increasing} if for all $k\ge k'$ and every $A\in\mathcal{P}_{k+1}$ there exists $B\in\mathcal{P}_k$ such that $A\subseteq B$. For each $w\in\mathcal{W}$ and $k\ge k'$, let $[w]_k$ denote the unique set $A\in\mathcal{P}_k$ containing $w$.
\end{definition}


\begin{theorem}[Information-theoretic Dudley bound under sub-Weibull tails]\label{chaining}
 Let $(T,d)$ be totally bounded and let $X_t$ be separable
$(\theta,C)$-sub-Weibull process with a.s. uniformly continuous paths, and with finite
Dudley integral in Corollary~\ref{Dudley1}. Let $W=W(S)$ be a $T$-valued random
index (equivalently, $W$ is measurable with respect to $X$). Let
$\{\mathcal{P}_k\}_{k\ge 0}$ be an increasing sequence of partitions such that
$\mathcal{P}_0=\{T\}$ and $\mathcal{P}_k$ is an $e(T)2^{-k}$-partition for $k\ge 1$.
For each $A\in\mathcal{P}_k$, choose $t_A\in T$ so that
$A\subset \mathcal{B}_d\bigl(t_A,e(T)2^{-k}\bigr)$, and if $A\subset B\in \mathcal{P}_{k-1}$, then $t_A\in B$. Let $[W]_k\in\mathcal P_k$ be the cell-valued variable containing $W$. Assume the shift in $f_\theta$ satisfies the condition in Lemma~\ref{de}. Then
\[
\mathbb{E}[X_W]
\;\le\;
Ce(T)\sum_{k=1}^\infty 2^{-(k-1)}
\Bigl(2^{1/\theta}I_{f_\theta}([W]_k;S)+2\Bigr).
\]
Assume $A$ satisfies the conditions of Lemma~\ref{lem:key1}. Then, for any
$\alpha>1$ such that $I_\alpha([W]_k;S)<\infty$ for all $k$ and $C_{\alpha,\theta}$ in Lemma~\ref{lem:key1},
\begin{equation}\label{computmeth3}
\mathbb{E}[X_W]
\;\le\;
Ce(T)\sum_{k=1}^\infty 2^{-(k-1)}
\Bigl(
2^{1/\theta}\bigl(I_\alpha([W]_k;S)+C_{\alpha,\theta}\bigr)^{1/\theta}
+2
\Bigr).
\end{equation}
\end{theorem}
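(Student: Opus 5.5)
The plan is a chaining decomposition along the partition sequence, with Theorem~\ref{infob} (equivalently Lemma~\ref{de}) applied at each scale. Define the chain points $\pi_k(W):=t_{[W]_k}$. Since $\mathcal P_0=\{T\}$, $\pi_0(W)=t_T$ is deterministic, and because $X$ is mean-zero, $\E[X_{\pi_0(W)}]=0$. For each $k$, $d(W,\pi_k(W))\le e(T)2^{-k}$. Paths are a.s.\ uniformly continuous, so $X_{\pi_k(W)}\to X_W$ a.s. We then write the telescoping sum
\[
X_W=\sum_{k\ge 1}\bigl(X_{\pi_k(W)}-X_{\pi_{k-1}(W)}\bigr).
\]
To take expectations termwise, we dominate the partial sums. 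Each partial sum $X_{\pi_m(W)}-X_{\pi_0(W)}$ is bounded by $2\sup_t|X_t-X_{t_T}|$. This supremum is integrable by Corollary~\ref{Dudley1} applied to the separable process $X_t-X_{t_T}$ (or to $\pm X$) under the finite Dudley integral assumption. Dominated convergence then gives $\E[X_W]=\sum_{k\ge1}\E[X_{\pi_k(W)}-X_{\pi_{k-1}(W)}]$, provided the sum of the absolute bounds obtained below converges; otherwise the inequality is trivial.

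Next, each increment is controlled by a selection bound. By the nesting condition $t_A\in B$, both $\pi_k(W)$ and $\pi_{k-1}(W)$ lie in $B=[W]_{k-1}$. Hence
\[
d\bigl(\pi_k(W),\pi_{k-1}(W)\bigr)\le e(T)2^{-(k-1)},
\]
so the Orlicz increment norm is at most $Ce(T)2^{-(k-1)}$. The pair $(\pi_k(W),\pi_{k-1}(W))$ is a deterministic function of the cell $[W]_k$, since $[W]_{k-1}$ is the unique parent cell. So set $Y_A:=X_{t_A}-X_{t_{\mathrm{par}(A)}}$ for $A\in\mathcal P_k$; this is a finite family (partitions are finite, by total boundedness) of sub-Weibull variables with $\|Y_A\|_{\psi_\theta}\le Ce(T)2^{-(k-1)}$. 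Selected by $V=[W]_k$, which is a function of $S$ (i.e.\ of $X$), Theorem~\ref{infob} gives
\[
\E|Y_{[W]_k}|\le Ce(T)2^{-(k-1)}\bigl(2^{1/\theta}I_{f_\theta}([W]_k;S)+2\bigr).
\]
Concretely, apply Lemma~\ref{de} with $\mu=P_{[W]_k,S}$, $\nu=P_{[W]_k}\otimes P_S$ and $r=|Y_V|/(Ce(T)2^{-(k-1)})$. Under $\nu$, $\E_\nu\exp(r^\theta)\le 2$ holds cellwise by the Orlicz bound. Rescaling then yields exactly the displayed term. Summing over $k$ gives the first bound. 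The R\'enyi version \eqref{computmeth3} follows by bounding $I_{f_\theta}([W]_k;S)=\DD_{f_\theta}(P_{[W]_k,S}\|P_{[W]_k}\otimes P_S)\le (I_\alpha([W]_k;S)+C_{\alpha,\theta})^{1/\theta}$ through Lemma~\ref{lem:key1}, termwise.

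The main obstacle is the limit interchange. We need $X_{\pi_k(W)}\to X_W$ together with a uniform integrable dominator, and this is where separability, uniform continuity and the finite Dudley integral are used. I would first try the $2\sup|X_t-X_{t_T}|$ dominator via Corollary~\ref{Dudley1}. If that is delicate for one-sided suprema, I would use Fatou instead: $\E[X_W]=\lim_m\E[X_{\pi_m(W)}]$, justified by dominated convergence with the same envelope, and bound each finite partial sum by the convergent series above. A secondary check is that the index-selection step legitimately uses $S$ as the whole process, so that the product-measure moment in Lemma~\ref{de} really is the unconditional Orlicz moment of each fixed $Y_A$.
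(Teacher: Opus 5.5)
Your proposal is correct and follows essentially the same route as the paper: you telescope along the hierarchical representatives $t_{[W]_k}$, apply Lemma~\ref{de} at each scale with the product-measure moment bounded cellwise by the Orlicz increment condition, pass to the limit by dominated convergence using an envelope made integrable by Corollary~\ref{Dudley1}, and invoke Lemma~\ref{lem:key1} for the R\'enyi version. The differences are cosmetic: you apply Lemma~\ref{de} directly to the cell variable $[W]_k$ rather than to the pair $U_k=(W_k,W_{k-1})$ followed by data processing, and your displayed telescoping identity omits the term $X_{t_T}$, which is harmless because it has mean zero and you account for it immediately afterwards.
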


When $\theta=2$, Theorem~\ref{chaining} is similar to the mutual-information chaining
bound of \citet[Theorem 4]{asadi2018chaining} up to universal constants.  When
$\theta=1$, it yields the information-theoretic chaining bound under
sub-exponential tails.  For $\theta\in(0,1)$, the exponent $1/\theta>1$ inflates
the per-scale complexity term, quantifying the deterioration caused by heavier
tails.  The proof uses only Orlicz-norm control of increments and multiscale
information, avoiding MGFs and circumventing the KL-based failure exhibited in
Example~\ref{ex:weibull}.

\section{Applications to RLHF with Heavy-Tailed Rewards}\label{sec:rlhf}

This section studies our heavy-tailed information-theoretic bounds: safety alignment of large language models (LLMs) under heavy-tailed reward functions. Recent works on statistical principles can directly design  trustworthy alignment methods \citep{su2026large}, and we focus more on robust estimation principles.


We study reinforcement learning from human feedback (RLHF) for aligning LLMs.
 Let $\pi(y\mid x)$ denote a policy (a Markov kernel from prompts to responses),
where $x\in\mathcal X$ is a prompt and $y\in\mathcal Y$ is a generated response.
Prompts are drawn from a distribution $\rho_X$ on $\mathcal X$, and
$\pi(\cdot\mid x)$ specifies the conditional distribution of responses.
In this sense, RLHF can be viewed as a contextual bandit problem.

Given a reference policy $\pi_0(\cdot\mid x)$ (typically a pretrained model),
the goal is to construct a new policy $\pi$ that achieves higher expected reward
$r(x,y)$.  In practice, $r$ is learned from preference data and can be
misspecified; we refer to the true human-aligned objective as the \emph{gold
reward} and the learned reward model as the \emph{proxy reward}.

\subsection{KL-constrained and best-of-\texorpdfstring{$n$}{n} alignment}

A standard formulation of RLHF is the KL-constrained optimization
\citep{ouyang2022training}:
\begin{equation}\label{opt}
\max_{\pi}\;
\mathbb{E}_{x\sim\rho_X,\;y\sim\pi(\cdot\mid x)}[r(x,y)]
\quad\text{s.t.}\quad
\DD_{\mathrm{KL}}\bigl(\pi(\cdot\mid x)\,\|\,\pi_0(\cdot\mid x)\bigr)\le\epsilon
\quad\forall x,
\end{equation}
where $\epsilon>0$ is an information budget.
By Donsker--Varadhan representation, the optimizer is an exponentially tilted distribution \citep{yang2024asymptotics} with a Lagrange multiplier $\lambda>0$
\begin{equation}\label{KLsol}
\pi^{*}_{\mathrm{KL}}(y\mid x)
=\frac{\pi_0(y\mid x)\exp(r(x,y)/\lambda)}
{\mathbb{E}_{Y\sim\pi_0(\cdot\mid x)}\exp(r(x,Y)/\lambda)}.
\end{equation}

An alternative inference-time strategy is the \emph{best-of-$n$} policy
\citep{stiennon2020learning,nakano2021webgpt}.
Given a prompt $x$, one samples $n$ i.i.d.\ responses
$Y_1,\dots,Y_n\sim\pi_0(\cdot\mid x)$ and selects
$Y_n^*\in \arg\max_{i\in [n]} r(x,Y_i)$. The resulting policy
$\pi_n(\cdot\mid x)$ is the distribution of $Y_n^*$.
Under bounded rewards and regularity assumptions, best-of-$n$ policies are
asymptotically close to the KL-constrained optimizer
\citep{mroueh2025information,yang2024asymptotics}.

\subsection{Catastrophic Goodhart and \Renyi-regularized RLHF}

Under light-tailed assumptions, one can derive reward guarantees using KL-based
arguments and data processing \citep{mroueh2025information}.
When the proxy reward is heavy-tailed, however, KL regularization can fail
catastrophically: the proxy reward can diverge even when the KL constraint is
small, a phenomenon known as \emph{catastrophic Goodhart}
\citep{kwa2024catastrophic}.
Formally, for any heavy-tailed distribution $Q$ and any $\epsilon>0$, there exist
distributions $P$ with \emph{arbitrarily large mean} such that
$\DD_{\mathrm{KL}}(P\|Q)\le\epsilon$.
Intuitively, exponential tilting \eqref{KLsol} can place disproportionate mass
on extreme-reward events, while the KL penalty---an average discrepancy under the
candidate policy---need not scale proportionally with such rare tail shifts.

To mitigate this, we consider RLHF with R\'enyi-$\alpha$ regularization:
\begin{equation}\label{optRENYI}
\max_{\pi}\;
\mathbb{E}_{x\sim\rho_X,\;y\sim\pi(\cdot\mid x)}[r(x,y)]
\quad\text{s.t.}\quad
\DD_{\alpha}\bigl(\pi(\cdot\mid x)\,\|\,\pi_0(\cdot\mid x)\bigr)\le\epsilon
\quad\forall x,
\end{equation}
for some $\alpha>1$.
Compared to KL, R\'enyi penalizes high density ratios more
aggressively through a power $\alpha$, which induces a qualitatively different
structure in the optimizer.

\begin{lemma}[Solution to R\'enyi-constrained optimization]\label{renyisol}
Let $\alpha>1$ and $\epsilon>0$. Fix $x\in\mathcal X$ and write $P_0:=\pi_0(\cdot\mid x), r_x(y):=r(x,y)$. Assume that policies in \eqref{optRENYI} are optimized over probability measures $\ll P_0$, and assume
$r_x\in L^{\frac{\alpha}{\alpha-1}}(P_0)$.
Let
\[r_x^\star:=\operatorname*{ess\,sup}_{y\sim P_0} r_x(y),
\qquad
M_x:=\{y\in\mathcal Y:r_x(y)=r_x^\star\},
\qquad
m_x:=P_0(M_x),
\]
with the convention $-\log 0=+\infty$.
Then the optimizer of \eqref{optRENYI} satisfies
\begin{enumerate}
\item If $\epsilon\ge -\log m_x$, then the R\'enyi ball is large enough to concentrate all probability mass on
the maximal-reward set. One has an optimal policy
$
\pi_M(dy\mid x)
=
\frac{\mathbf 1_{M_x}(y)}{m_x}\,\pi_0(dy\mid x)$.

\item If $\epsilon<-\log m_x$, then the R\'enyi constraint is active at the optimizer, i.e.
$
\mathcal D_\alpha\bigl(\pi^*(\cdot\mid x)\|\pi_0(\cdot\mid x)\bigr)
=
\epsilon$. Moreover, there exists a threshold $t\in\mathbb R$ such that
\[
\frac{d\pi^*(\cdot\mid x)}{d\pi_0(\cdot\mid x)}(y)
=
\frac{(r(x,y)-t)_+^{\frac1{\alpha-1}}}
{\mathbb E_{Y\sim\pi_0(\cdot\mid x)}
[(r(x,Y)-t)_+^{\frac1{\alpha-1}}]}.
\]
Equivalently, when densities exist,
$\pi^*(y\mid x)
=
\frac{
\pi_0(y\mid x)(r(x,y)-t)_+^{\frac1{\alpha-1}}
}{
\mathbb E_{Y\sim\pi_0(\cdot\mid x)}
[(r(x,Y)-t)_+^{\frac1{\alpha-1}}]
}$. The threshold $t$ is chosen so that the R\'enyi constraint is met with equality.
\end{enumerate}
\end{lemma}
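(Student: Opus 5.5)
The plan is to reduce the problem to a convex program over the density ratio $L:=d\pi/dP_0\ge 0$ with $\E_{P_0}L=1$. Since $\alpha>1$, the constraint $\DD_\alpha(\pi\|P_0)\le\epsilon$ is equivalent to
\[
\E_{P_0}[L^\alpha]\le e^{(\alpha-1)\epsilon}.
\]
The objective $\E_{P_0}[r_xL]$ is linear in $L$, and the feasible set is convex. Hölder's inequality with exponents $\alpha$ and $\alpha/(\alpha-1)$, together with $r_x\in L^{\alpha/(\alpha-1)}(P_0)$, shows the objective is finite and weakly continuous on the feasible set. That set is bounded in $L^\alpha(P_0)$, hence weakly compact. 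So a maximizer exists.

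\textbf{Case 1.} Take $L_M=\mathbf 1_{M_x}/m_x$. Then $\E L_M^\alpha=m_x^{1-\alpha}$, so $\DD_\alpha=-\log m_x\le\epsilon$ and $L_M$ is feasible. Every feasible $L$ gives $\E[r_xL]\le r_x^\star$, and $L_M$ attains this value. Hence $\pi_M$ is optimal.

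\textbf{Case 2.} Suppose $\epsilon<-\log m_x$. I first show the constraint is active by contradiction. If an optimizer $L^*$ had $\E(L^*)^\alpha<e^{(\alpha-1)\epsilon}$, perturb it to $(1-\eta)L^*+\eta L'$. Here $L'$ is supported on $\{r_x>r_x^\star-\delta\}$ and chosen so that $\E[r_xL']>\E[r_xL^*]$. Such an $L'$ exists because $\E[r_xL^*]<r_x^\star$: if equality held, $L^*$ would be supported on $M_x$, and Jensen would force $\E(L^*)^\alpha\ge m_x^{1-\alpha}>e^{(\alpha-1)\epsilon}$, which is infeasible. For small $\eta$ the perturbation stays feasible by continuity of $\eta\mapsto\E[\cdot^\alpha]$ and strictly increases the objective. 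This contradicts optimality, so the constraint is active.

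Next I characterize the maximizer through the Lagrangian
\[
\E[r_xL]-\lambda\bigl(\E L^\alpha-e^{(\alpha-1)\epsilon}\bigr)-\mu(\E L-1),\qquad L\ge0,\ \lambda>0.
\]
Slater's condition holds because $L\equiv1$ is strictly feasible when $\epsilon>0$. Strong duality for this convex problem then gives pointwise KKT conditions. Maximizing $r_x(y)\ell-\lambda\ell^\alpha-\mu\ell$ over $\ell\ge0$ for each $y$ gives
\[
\ell=\Bigl(\tfrac{(r_x(y)-\mu)_+}{\lambda\alpha}\Bigr)^{1/(\alpha-1)}.
\]
Set $t:=\mu$ and use $\E L=1$ to eliminate $\lambda$. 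This yields the stated formula $L^*=(r_x-t)_+^{1/(\alpha-1)}/\E[(r_x-t)_+^{1/(\alpha-1)}]$, and $t$ is fixed by activeness of the constraint.

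The main obstacle is justifying duality and the pointwise maximization rigorously in infinite dimensions. I would avoid abstract duality and verify optimality of the candidate directly. For any feasible $L$, the inequality $r\ell-\lambda\ell^\alpha-t\ell\le r L^*-\lambda (L^*)^\alpha-tL^*$ holds pointwise. Integrating it and using $\E L=\E L^*$ and $\E L^\alpha\le\E (L^*)^\alpha$ gives $\E[rL]\le\E[rL^*]$.

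It remains to show that some $t<r_x^\star$ makes the candidate's Rényi divergence exactly $\epsilon$. Define $g(t):=\DD_\alpha$ of the candidate as a function of $t$. It is continuous on $(-\infty,r_x^\star)$ by dominated convergence, using the integrability of $r_x$. As $t\to-\infty$, $g(t)\to0$ because the density ratio tends to $1$. As $t\uparrow r_x^\star$, $g(t)\to-\log m_x$, since the mass concentrates on near-maximal sets; this limit needs separate handling when $m_x=0$, where $-\log m_x=+\infty$. The intermediate value theorem then gives $t$ with $g(t)=\epsilon$.
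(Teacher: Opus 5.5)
Your proposal is correct, and it takes a genuinely different route from the paper.

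\textbf{The paper's route (dual side).} The paper takes an optimizer for granted and invokes Slater's condition to obtain KKT multipliers $(t,\lambda)$. It rules out $\lambda=0$, since that would force $t\ge r_x^\star$ and hence support on $M_x$, which is infeasible when $\epsilon<-\log m_x$. Activeness then follows from complementary slackness, and the truncated power form is read off from pointwise maximization of the Lagrangian.

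\textbf{Your route (primal side).} You stay on the primal side throughout:
\begin{itemize}
\item existence of an optimizer by weak compactness in $L^\alpha(P_0)$;
\item activeness, for every optimizer, by a feasible-direction perturbation toward a near-maximal reward set (when $r_x^\star=+\infty$, use $\{r_x>c\}$ with $c>\mathbb{E}[r_xL^*]$);
\item optimality of the explicit candidate by integrating the pointwise inequality, a verification argument that needs no infinite-dimensional multiplier theorem;
\item existence of the threshold by the intermediate value theorem.
\end{itemize}

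\textbf{What each buys.} The paper's argument gives \emph{necessity}, i.e.\ every optimizer has the stated form, in a few lines. But it presupposes that an optimizer exists; the threshold then comes for free as the multiplier. Your argument is constructive, producing the optimizer and the threshold explicitly. As written, however, it establishes only \emph{sufficiency}.

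\textbf{Two small completions.} These make your argument airtight.
\begin{enumerate}
\item \emph{Every optimizer has the stated form.} Suppose $\mathbb{E}[r_xL]=\mathbb{E}[r_xL^*]$. Then the integrated inequality forces $\mathbb{E}L^\alpha=e^{(\alpha-1)\epsilon}$ together with pointwise equality. Strict concavity of $\ell\mapsto(r_x-t)\ell-\lambda\ell^\alpha$ for $\lambda>0$ then gives $L=L^*$ $P_0$-a.s. With this, your weak-compactness step becomes redundant.
\item \emph{The endpoint $t\uparrow r_x^\star$.} You do not need the exact limit of $g(t)$, only $\liminf g(t)>\epsilon$. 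The candidate lives on $\{r_x>t\}$, so the Jensen bound from your Case 1, applied on that set, gives $g(t)\ge-\log P_0(r_x>t)$. Moreover $P_0(r_x>t)\downarrow m_x$ by continuity from above. This handles $m_x>0$, $m_x=0$, and $r_x^\star=+\infty$ simultaneously, so the separate treatment you flagged is unnecessary.
\end{enumerate}
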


Note that the first case is degenerate and rarely occurs in continuous problems. For instance, if $r(x,\cdot)$ is continuous and has a unique maximizer under a non-atomic reference policy $\pi_0(\cdot\mid x)$, then $M_x$ is a singleton and hence $\pi_0(M_x\mid x)=0$, so only the second case applies for any finite $\epsilon$.

Lemma~\ref{renyisol} highlights the key structural difference from KL: R\'enyi
regularization yields a \emph{truncated power-law} reweighting of the reference
policy, whereas KL yields exponential tilting.  The truncation limits the
influence of extreme-reward tail events, which is precisely the regime where KL
can be vulnerable.
The special case $\alpha=2$ corresponds to $\chi^2$-preference optimization, for
which $\pi^*(y\mid x)\propto\pi_0(y\mid x)(r(x,y)-t)_+$; see
\cite{huang2025correcting}.

\begin{theorem}[Reward guarantees for RLHF with R\'enyi regularization]\label{thm:rlhf}
Fix $x\in\mathcal X$ and let $\pi_0(\cdot\mid x)$ be the reference policy.
Abbreviate $\pi(\cdot\mid x)$ and $\pi_0(\cdot\mid x)$ by $\pi$ and $\pi_0$.
Define the centered reward
\[
\bar r(x,y)=r(x,y)-\mathbb{E}_{Y\sim\pi_0}[r(x,Y)].
\]
Assume $\|\bar r(x,Y)\|_{\psi_\theta}\le C$ for some $\theta>0$ and
$Y\sim\pi_0(\cdot\mid x)$.
Then for any policy $\pi(\cdot\mid x)$,
\[
\mathbb{E}_{\pi} r(x,Y) - \mathbb{E}_{\pi_0} r(x,Y)
\;\le\;
C\Bigl(2^{1/\theta}[\DD_\alpha(\pi\|\pi_0)+C_{\alpha,\theta}]^{1/\theta}
+2\Bigr),
\]
where $C_{\alpha,\theta}$ is the constant in Lemma~\ref{lem:key1}.
In particular, for the optimizer $\pi^*$ of~\eqref{optRENYI},
\[
\mathbb{E}_{x\sim\rho_X}\mathbb{E}_{Y\sim\pi^*(\cdot\mid x)} r(x,Y)
\;\le\;
\mathbb{E}_{x\sim\rho_X}\mathbb{E}_{Y\sim\pi_0(\cdot\mid x)} r(x,Y)
+ C\Bigl(2^{1/\theta} [\epsilon+C_{\alpha,\theta}]^{1/\theta}+2\Bigr).
\]
\end{theorem}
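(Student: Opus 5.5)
The plan is to apply the decorrelation lemma (Lemma~\ref{de}) with $\mu=\pi(\cdot\mid x)$ and $\nu=\pi_0(\cdot\mid x)$ on $\mathcal Y$, and then convert the resulting shifted-log divergence into a R\'enyi divergence via Lemma~\ref{lem:key1}.

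\textbf{Step 1: normalize and take positive parts.} Since $\mathbb E_\pi r-\mathbb E_{\pi_0} r=\mathbb E_\pi \bar r$ and $\bar r\le |\bar r|$, it suffices to bound $\mathbb E_\pi[|\bar r|/C]$. Set $s(y):=|\bar r(x,y)|/C\ge 0$. The Orlicz assumption gives $\mathbb E_{\pi_0}\exp(s^\theta)\le 2$. We may assume $\pi\ll\pi_0$ and $\DD_\alpha(\pi\|\pi_0)<\infty$, since otherwise the bound is trivial.

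\textbf{Step 2: integrability of $s$ under $\pi$.} Lemma~\ref{de} needs $s\in L^1(\pi)$. By H\"older, $\mathbb E_\pi s=\mathbb E_{\pi_0}[(d\pi/d\pi_0)\,s]\le \|d\pi/d\pi_0\|_{L^\alpha(\pi_0)}\|s\|_{L^{\alpha/(\alpha-1)}(\pi_0)}$. The first factor equals $\exp\bigl(\tfrac{\alpha-1}{\alpha}\DD_\alpha\bigr)<\infty$. The second is finite because a sub-Weibull variable has all moments.

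\textbf{Step 3: decorrelation.} Lemma~\ref{de} with $r=s$ gives
\[
\mathbb E_\pi s\le 2^{1/\theta}\DD_{f_\theta}(\pi\|\pi_0)+\mathbb E_{\pi_0}\exp(s^\theta)\le 2^{1/\theta}\DD_{f_\theta}(\pi\|\pi_0)+2 .
\]

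\textbf{Step 4: R\'enyi comparison.} Lemma~\ref{lem:key1} gives $\DD_{f_\theta}(\pi\|\pi_0)\le[\DD_\alpha(\pi\|\pi_0)+C_{\alpha,\theta}]^{1/\theta}$. Multiplying by $C$ yields the first display.

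\textbf{Step 5: the optimizer.} For $\pi^*$ of \eqref{optRENYI}, $\DD_\alpha(\pi^*(\cdot\mid x)\|\pi_0(\cdot\mid x))\le\epsilon$ for every $x$. The map $u\mapsto[u+C_{\alpha,\theta}]^{1/\theta}$ is nondecreasing, so the per-$x$ bound holds with $\epsilon$ in place of the divergence. Integrating over $x\sim\rho_X$ gives the second display. This requires $C$ to be uniform in $x$, which I read into the hypothesis, and it requires measurability of the per-$x$ quantities, which is routine.

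\textbf{Main obstacle: the shift $A$.} The two lemmas impose different conditions on $A$: Lemma~\ref{de} needs $A\ge(2^{\lceil 2/\theta\rceil-2}\lceil 2/\theta\rceil!)^2\vee1$, while Lemma~\ref{lem:key1} needs a lower bound depending on $(\alpha,\theta)$. My first attempt is to take $A$ to be the maximum of both thresholds, since each lemma only imposes lower bounds on $A$. Then $C_{\alpha,\theta}$ is the constant of Lemma~\ref{lem:key1} evaluated at that $A$, which is consistent with the statement because $C_{\alpha,\theta}$ is defined there in terms of $A$.
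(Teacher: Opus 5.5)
Your proposal is correct and matches the paper's proof: apply the decorrelation lemma with $\mu=\pi$, $\nu=\pi_0$ to $|\bar r|/C$, bound the moment term by $2$ using the Orlicz assumption, pass from $\DD_{f_\theta}$ to R\'enyi divergence via Lemma~\ref{lem:key1}, and substitute $\epsilon$ before averaging over $x$ for the optimizer. Your extra checks make explicit several points the paper leaves implicit: $L^1(\pi)$-integrability via H\"older, a single shift $A$ chosen above both lemmas' thresholds, and uniformity of $C$ in $x$ for the averaged bound.
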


Theorem~\ref{thm:rlhf} shows that, 
R\'enyi-constrained RLHF rules out unbounded reward inflation at any fixed (divergence) budget under sub-Weibull reward tails, thereby preventing catastrophic Goodhart in the reward model.

\subsection{Best-of-\texorpdfstring{$n$}{n} policies under heavy-tailed rewards}

We now analyze whether catastrophic Goodhart arises for best-of-$n$ policies.
We adopt the following structural assumption
\citep{beirami2024theoretical,mroueh2025information}.

\begin{assumption}[Reward structure]\label{ass:reward-structure}
For each context $x$, define
\[
r(x)=r(x,Y),\quad Y\sim\pi_0(\cdot\mid x),
\qquad
r_n(x)=\max_{i\in [n]} r(x,Y_i),
\quad Y_i\overset{\mathrm{i.i.d.}}{\sim}\pi_0(\cdot\mid x).
\]
There exists a stochastic map $H_x$ such that
$H_x(r(x))\overset{d}{=}\pi_0(\cdot\mid x)$ and
$H_x(r_n(x))\overset{d}{=}\pi_n(\cdot\mid x)$.
\end{assumption}

Under Assumption~\ref{ass:reward-structure} and the data processing inequality
for R\'enyi divergence (cf.\ \citealp{Yihong25}, p.~120),
\[
\DD_\alpha(\pi_n(\cdot\mid x)\|\pi_0(\cdot\mid x))
\;\le\;
\DD_\alpha(r_n(x)\|r(x)).
\]

\begin{theorem}[Best-of-$n$ under heavy-tailed rewards]\label{renyiregret}
Assume $\|r(x,Y)\|_{\psi_\theta}\le C$ for some $\theta>0$, $\alpha>1$ and
$Y\sim\pi_0(\cdot\mid x)$, and assume Assumption~\ref{ass:reward-structure}.
If the trust-region budget $n\le e^\epsilon$ for $\epsilon>0$, then
\[
\DD_\alpha(\pi_n(\cdot\mid x)\|\pi_0(\cdot\mid x))
\le \frac{1}{\alpha-1}\log\!\left(\frac{n^\alpha}{\alpha(n-1)+1}\right)
\le  \epsilon,
\]
and
\[
\E_{\pi_n}r-\E_{\pi_0}r
\le
C \left( 2^{1/\theta} \left[ \DD_\alpha(\pi_n \| \pi_0) +C_{\alpha,\theta}\right]^{1/\theta} + 2 \right)
\le
C \left( 2^{1/\theta} \left[ \e +C_{\alpha,\theta}\right]^{1/\theta} + 2\right),
\]
where $C_{\alpha,\theta}$ depends only on $(\alpha,\theta)$ in
Lemma~\ref{lem:key1}.
\end{theorem}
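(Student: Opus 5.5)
The proof has two parts: a divergence computation for the best-of-$n$ reward, and an application of the R\'enyi reward guarantee of Theorem~\ref{thm:rlhf}.

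\textbf{Step 1: reduce the policy divergence to a divergence between reward laws.} By Assumption~\ref{ass:reward-structure} and data processing, $\DD_\alpha(\pi_n\|\pi_0)\le\DD_\alpha(r_n(x)\|r(x))$. So it suffices to bound the R\'enyi divergence between the law of the maximum of $n$ i.i.d.\ copies and the law of a single copy.

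\textbf{Step 2: compute that divergence.} Take first the case where $r(x)$ has a continuous distribution function $F$. Then $r_n(x)$ has law $F^n$ and density ratio $nF^{n-1}$ with respect to $F$. The probability integral transform $U=F(r)$ reduces the calculation to the uniform distribution:
\[
\int (nF^{n-1})^\alpha\,dF=n^\alpha\int_0^1 u^{\alpha(n-1)}du=\frac{n^\alpha}{\alpha(n-1)+1}.
\]
This gives exactly $\frac{1}{\alpha-1}\log\frac{n^\alpha}{\alpha(n-1)+1}$. If $F$ has atoms, I would use the standard randomization $U=F(r-)+V\,(F(r)-F(r-))$ with $V$ uniform. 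Then $r$ is a deterministic function of $U$, and the maximum of the $U_i$ maps to the maximum of the $r_i$ because $F^{-1}$ is monotone. Data processing therefore gives the same quantity as an upper bound.

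\textbf{Step 3: compare with the budget.} Write $g(\alpha)=\log\frac{n^\alpha}{\alpha(n-1)+1}$, so $g(1)=0$. We need $g(\alpha)\le(\alpha-1)\log n$, which is equivalent to $n\le\alpha(n-1)+1$. This holds because $\alpha>1$ gives $\alpha(n-1)+1\ge n$. Hence the divergence is at most $\log n\le\epsilon$ whenever $n\le e^\epsilon$.

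\textbf{Step 4: the reward bound.} Apply the first inequality of Theorem~\ref{thm:rlhf} with $\pi=\pi_n$. That inequality is driven by Lemma~\ref{de} and Lemma~\ref{lem:key1} and involves the centered reward. Centering needs care here, because the hypothesis is stated for $r$ rather than $\bar r$. I would either note that $\|\bar r\|_{\psi_\theta}\lesssim C$, or rerun the argument of Theorem~\ref{thm:rlhf} directly with $r$ normalized by $C$. In the second route, Lemma~\ref{de} with $\mu=\pi_n$ and $\nu=\pi_0$ gives $\E_{\pi_n}|r|/C\le 2^{1/\theta}\DD_{f_\theta}+2$, and $-\E_{\pi_0}r\le\E_{\pi_0}|r|\le$ a constant multiple of $C$ can be absorbed. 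I expect this matching of constants to be the only delicate point; I would follow whatever convention the proof of Theorem~\ref{thm:rlhf} uses. The final inequality then follows because $\DD\mapsto[\DD+C_{\alpha,\theta}]^{1/\theta}$ is monotone, so the bound $\DD_\alpha(\pi_n\|\pi_0)\le\epsilon$ from Step 3 carries through.

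The main obstacle is Step 2 when the reward law has atoms. Ties among the maximizers complicate the density ratio, and the randomized probability integral transform together with data processing is the tool I would try first.
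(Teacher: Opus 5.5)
Your proposal follows the paper's proof step for step, so this review concerns only the two points you flagged. The shared outline is: Assumption~\ref{ass:reward-structure} plus data processing reduces to $\DD_\alpha(r_n(x)\|r(x))$; a monotone transport of uniforms reduces this to $\DD_\alpha(U_{(n)}\|U)=\frac{1}{\alpha-1}\log\frac{n^\alpha}{\alpha(n-1)+1}$; this is at most $\log n\le\epsilon$; and the reward bound comes from rerunning Theorem~\ref{thm:rlhf} via Lemmas~\ref{de} and~\ref{lem:key1}. Your check $n\le\alpha(n-1)+1$ supplies a step the paper only asserts. Atoms are not really an obstacle. The paper writes $r(x)\overset{d}{=}Q(U)$ and $r_n(x)\overset{d}{=}Q(U_{(n)})$ with $Q$ the left-continuous quantile function. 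Monotonicity of $Q$ gives $\max_i Q(U_i)=Q(U_{(n)})$, so neither a randomized transform nor a separate continuous case is needed.

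The centering issue you raise in Step 4 is genuine, and the paper does not resolve it any better. Its proof applies Lemma~\ref{de} to $|r(x,Y)-\E_{\pi_0}r(x,Y)|/C$ and uses $\E_{\pi_0}\exp(\cdot)\le 2$. That tacitly assumes the centered hypothesis $\|\bar r(x,Y)\|_{\psi_\theta}\le C$ of Theorem~\ref{thm:rlhf}, not the stated $\|r(x,Y)\|_{\psi_\theta}\le C$. Neither of your fixes recovers the displayed constants:
\begin{itemize}
\item Route (a) replaces $C$ by $c_\theta C$ with $c_\theta>1$ in general, because centering can increase the $\psi_\theta$ (quasi-)norm.
\item Route (b) leaves an extra additive term $\E_{\pi_0}|r|/C$. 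The hypothesis $\E_{\pi_0}\exp(|r|^\theta/C^\theta)\le 2$ controls this only by a $\theta$-dependent constant (of order $\Gamma(1/\theta+1)$, cf.\ Appendix~\ref{app:max_comparison}), which is far larger than $2$ for small $\theta$. It cannot be ``absorbed'' into the explicit $+2$ without a further slack argument.
\end{itemize}
So the inequality holds exactly as displayed only if the hypothesis is read as the centered bound, which is what the paper effectively does. Under the literal uncentered hypothesis, your argument gives the same form with a modified $\theta$-dependent constant.
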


\begin{remark}
For large $n$, $\DD_\alpha(\pi_n\|\pi_0)\asymp\log n$, yielding a reward bound of
order $\log^{1/\theta}n$.  This matches the scaling suggested by maximal
inequalities and shows that, unlike KL-regularized RLHF, best-of-$n$ policies do
not exhibit severe catastrophic Goodhart behavior under sub-Weibull rewards.
\end{remark}

\section{Numerical Studies}\label{sec:experiments}

In this section, we connect the theory to computation in three steps. We first examine the tightness of the proposed generalization bounds in a controlled example where single-scale information bounds fail. We then study
\Renyi\!-regularized RLHF under controlled heavy-tailed reward perturbations, using an end-to-end alignment pipeline based on Group Relative Policy Optimization (GRPO; \citealt{shao2024deepseekmath}) and state-of-the-art reward models. Finally, we move from synthetic tail amplification to an adversarial token-space reward-hacking setting, where heavy-tailed reward responses arise from reward-model vulnerabilities. 

\subsection{Illustrating Tighter Generalization Bounds}\label{sec:tightness}

We revisit the two-dimensional process example of
\citet{asadi2018chaining} to illustrate a key phenomenon in the heavy-tailed,
data-adaptive setting: a single-scale mutual-information bound can be vacuous
even when a multiscale \emph{chained} information bound remains finite and
informative. Table~\ref{tab:bounds} summarizes the comparison.

\begin{table}[h]
\centering
\caption{\small $-\E[X_W]$ and upper bounds for $\theta=0.5$.
MI: Theorem~\ref{infob}; CM: Theorem~\ref{Dudley}; 
R\'enyi CMI: Theorem~\ref{chaining}.}\label{tab:bounds}
\renewcommand{\arraystretch}{1}
\begin{tabular}{c|ccccccc}
\hline
\(\varepsilon\) & $1/20$ & $1/30$ & $1/40$ & $1/50$ & $1/100$ & $1/200$ & $1/400$ \\
\hline
MI & $\infty$ & $\infty$ & $\infty$ & $\infty$ & $\infty$ & $\infty$ & $\infty$ \\
CM & 832.01 & 832.01 & 832.01 & 832.01 & 832.01 & 832.01 & 832.01 \\
R\'enyi CMI & 71.93 & 71.46 & 71.30 & 71.22 & 71.12 & 71.10 & 71.09 \\
$-\E[X_W]$ & 0.180 & 0.120 & 0.090 & 0.072 & 0.036 & 0.018 & 0.009 \\
\hline
\end{tabular}
\end{table}

Let $S=(Z_1,Z_2)$ with i.i.d. Weibull$(\theta)$ coordinates, i.e.,
$\mathbb{P}(Z_i\ge t)=\exp(-t^\theta)$ for $t\ge 0$.
Consider the canonical Weibull process on the unit circle,
\[
X_\phi := Z_1\cos\phi + Z_2\sin\phi,\qquad \phi\in[0,2\pi),
\]
with loss $l(\phi,S):=-X_\phi$.
One can verify that $\{-X_\phi\}_{\phi\in[0,2\pi)}$ is a $(\theta,C)$-sub-Weibull process (Definition~\ref{subWp}) with respect to the arc-length metric $d(\phi_1,\phi_2)=|\phi_1-\phi_2|$.


We define a randomized data-dependent selector by taking a measurable minimizer
$\phi^\star(S)\in\arg\min_\phi l(\phi,S)$ and adding a small perturbation with $\xi\perp\!\!\!\perp S$,
\[
W=\phi^\star(S)\oplus \xi \pmod{2\pi},\qquad
\mathbb P(\xi=0)=\varepsilon,\qquad
\xi\mid(\xi\neq0)\sim\mathrm{Unif}(-\pi,\pi).
\]
We compare four ways to upper bound $\E[X_W]$:
\begin{enumerate}
\item[(i)] the single-scale mutual-information (MI) bound (Theorem~\ref{infob});
\item[(ii)] the classical chaining/Dudley entropy bound (CM, Theorem~\ref{Dudley});
\item[(iii)] the chained mutual-information bound (R\'enyi CMI, Theorem~\ref{chaining});
\item[(iv)] the exact value of $-\E[X_W]$ for this example. 
\end{enumerate}
As shown in
Table~\ref{tab:bounds}, the single-scale MI bound is infinite. The chained information bound remains finite and is substantially
tighter than the classical entropy-based chaining bound.

For any $\varepsilon>0$, the conditional law $P_{W\mid S=s}$ has an atom at
$\phi^\star(s)$, whereas the marginal law $P_W$ is non-atomic. Hence
$P_{W\mid S=s}\not\ll P_W$ for $P_S$-a.e.\ $s$, implying
$I(W;S)=I_\alpha(W;S)=\infty$ and rendering the single-scale MI bound
vacuous. In contrast, the quantized variables $[W]_k$ used in the chaining
construction take values in a \emph{finite} partition whose cells all have
strictly positive marginal probability, thanks to the uniform component of
$\xi$. Therefore each $I_\alpha([W]_k;S)$ is finite, and the chained bound
remains informative.


\subsection{RLHF under Heavy-Tailed Rewards}\label{sec:rlhf-experiments}

To illustrate, the mechanism by which R\'enyi-regularized RLHF mitigates catastrophic Goodhart effects, we empirically study R\'enyi-regularized RLHF under \emph{controlled} heavy-tailed reward perturbations. Since commonly used open-source reward models often appear approximately light-tailed on in-distribution prompts, we emphasize that the goal of these experiments is not to claim that the underlying reward model is intrinsically heavy-tailed; rather, we construct heavy-tailed training signals in a principled way to isolate the interaction between tail behavior and the choice of divergence regularizer. Starting from the constrained formulation~\eqref{optRENYI}, we adopt the
Lagrangian optimization:
\begin{equation}\label{optRENYI_penalty}
\pi_{\beta}^{*}
=
\arg\max_{\pi}\;
\mathbb{E}_{x \sim \mathcal{D}}[
\mathbb{E}_{y \sim \pi(\cdot\mid x)}[ r(x,y)]-
\beta\, D_\alpha(\pi(\cdot\mid x) \,\|\, \pi_{0}(\cdot\mid x))],
\quad \beta \ge 0,
\end{equation}
where $\pi_0(\cdot\mid x)$ is the reference policy, $\alpha\ge 1$ is the \Renyi
order (with $\alpha=1$ reducing to KL), and $\beta$ controls the trade-off between
reward maximization and conservatism.

\paragraph{Pipeline.}
We implement an end-to-end alignment pipeline based on Group Relative Policy
Optimization (GRPO; \citealt{shao2024deepseekmath}) with three stages:
\begin{itemize}
\item[(i)] \textbf{Supervised fine-tuning (SFT).}
We start from Qwen2.5-1.5B~\citep{qwen2.5} and perform supervised fine-tuning on \texttt{UltraChat-200k} to obtain an initial policy
that reliably follows the dialogue format and instructions. All runs are initialized from the same SFT checkpoint.

\item[(ii)] \textbf{Reward modeling.}
We train a lightweight proxy reward model (Qwen2.5-0.5B) on the
\texttt{hh-rlhf} training split to provide scalable training-time feedback.
The proxy model takes concatenated \texttt{prompt+completion} pairs as input
and is used \emph{only} to produce online rewards during GRPO. For
evaluation, we additionally use a stronger reward model,
Skywork-Reward-V2-Qwen3-1.7B~\citep{liu2025skywork}, as a surrogate evaluator
for the (unobserved) gold reward; its output is never used during training. For simplicity, we refer to its scores as the \emph{gold reward}. This surrogate evaluator is trained on a much larger dataset and achieves higher
correlation with human judgments, but it is too computationally expensive to
run routinely during RL optimization. Throughout this section, we refer to the lightweight
training-time model's scores as the \emph{proxy reward}.

\item[(iii)] \textbf{RL optimization and evaluation.}
We optimize the SFT policy via GRPO using the proxy reward model as the training-time critic.
We use a frozen reference model initialized from the same checkpoint as the policy to compute divergence terms and to
recalibrate reward statistics. This two-reward-model protocol is standard in the RLHF literature
(e.g., \citealt{kwa2024catastrophic}) and provides a practical balance between computational efficiency and evaluation fidelity.
\end{itemize}

\paragraph{Experimental settings.}
We fix $\beta=1$, sweep $\alpha\in\{1,2,\ldots,10\}$, train on the \texttt{hh-rlhf} training split for up to $1500$ GRPO steps, and evaluate every $10$ steps on 128 held-out prompts. Full training and evaluation hyperparameters (generation settings, batch sizes, optimizer details, truncation lengths, and reward recalibration) are provided in Appendix~\ref{app:rlhf_details}.\\

\paragraph{Inducing heavier tails as controlled stress tests.}
Most empirical RLHF analyses implicitly treat reward-model scores as approximately Gaussian, or more generally light-tailed~\citep{yan2024reward,yang2024bayesian}.
Our theory predicts that the choice of divergence regularizer matters most when rare, high-magnitude reward events are plausible, possibly due to distribution shift, reward-model miscalibration, or occasional outlier completions. Since widely used open-source reward models often appear approximately light-tailed on in-distribution prompts, we do not claim that the base proxy reward is intrinsically heavy-tailed; instead, we \emph{stress-test} this regime by constructing heavy-tailed training signals in controlled ways while keeping the rest of the pipeline fixed. Our construction follows \cite{bakhshizadeh2023sharp} and applies an order-preserving power transform to the recalibrated proxy reward before optimization:
\[
\tilde r(x,y)=\operatorname{sign}\!\big(r(x,y)\big)\,\big|r(x,y)\big|^{\kappa},
\qquad \kappa \in \{1,6,8,10\}.
\]
Here, $\kappa=1$ corresponds to the original reward. The $t\mapsto \operatorname{sign}(t)|t|^{\kappa}$ is strictly increasing on $\mathbb{R}$ for $\kappa>0$ and therefore preserves the preference ordering induced by the original proxy score. Moreover, if the original proxy score is sub-Gaussian, then the transformed reward $\tilde r$ is sub-Weibull with parameter \(2/\kappa\).


To confirm that the power transform amplifies the tail behavior of the training signal, Figure~\ref{gold_proxy_trans} compares the empirical distributions and QQ plots of the original and transformed proxy rewards for the representative choice \(\kappa=8\). The upper panels show that the original reward \(r(x,y)\) is relatively concentrated and visually close to a light-tailed profile, whereas the transformed reward \(\tilde r(x,y)\), overlaid with kernel density estimates, becomes much more dispersed with substantially more mass in the extremes. The lower panels provide the corresponding QQ plots against the normal reference distribution. The pronounced deviations in the tails after transformation further indicate heavier-than-normal tail behavior. These results support that the power transform makes rare, large-magnitude reward realizations substantially more influential during training.

\begin{figure}[t]
\centering
\includegraphics[scale=0.3]{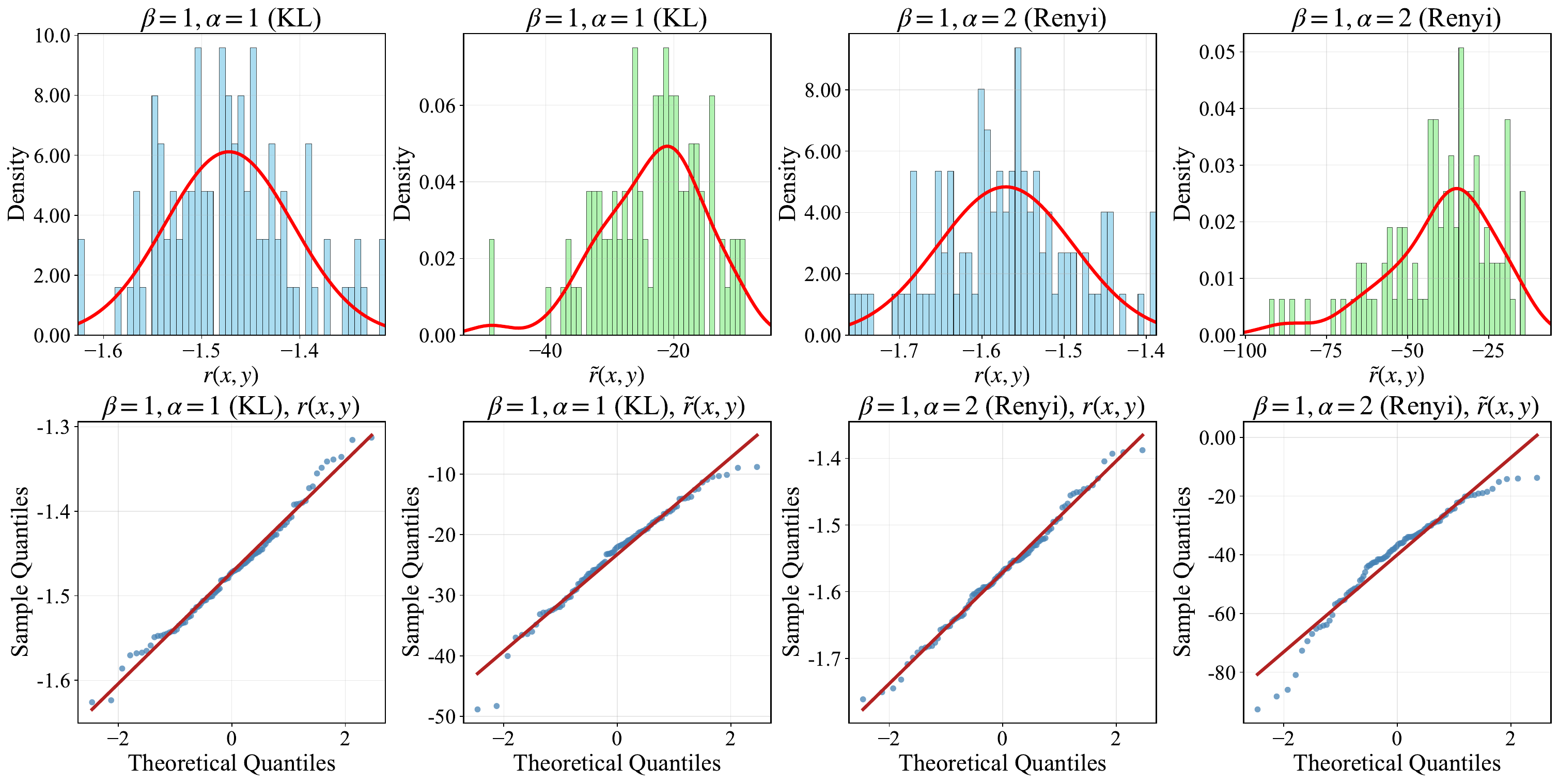}
\caption{\small Empirical distributions and QQ plots of proxy reward model scores before and after the power transformation for \(\kappa=8\). The upper panels show the empirical distributions of the original reward \(r(x,y)\) and the transformed reward \(\tilde r(x,y)=\operatorname{sign}(r(x,y))|r(x,y)|^{8}\) under \(\beta=1\) with \(\alpha=1\) (KL) and \(\alpha=2\) (R\'enyi). The original rewards are overlaid with fitted normal density curves, whereas the transformed rewards are overlaid with kernel density estimates. The lower panels report the corresponding QQ plots against the normal reference distribution. The transformation
preserves reward ordering while substantially amplifying the heavy-tailedness
of the training signal.}
\label{gold_proxy_trans}
\end{figure}

\noindent\textit{Robustness across heavy-tail constructions.}
The order-preserving power transform above provides a convenient knob (via $\kappa$) to dial tail-heaviness without changing preference rankings, but it is only one way to generate rare, high-impact reward outliers.
To verify that our conclusions are not an artifact of this particular transformation, we also study an alternative heavy-tailed construction based on adding centered Weibull noise to the reward signal prior to optimization; see Appendix~\ref{app:add_experiments}.
Across both constructions (and across sweeps of $\kappa$ and noise parameters), we observe the same qualitative pattern: KL-style control exhibits unstable reward--divergence behavior and a non-monotone proxy--gold relationship once rewards become heavy-tailed, whereas higher-order \Renyi regularization yields smoother reward--divergence curves and a more monotone proxy--gold coupling.\\

\paragraph{Reward--divergence behavior.} Figures~\ref{divergence_vs_reward11}--\ref{divergence_vs_reward1210} show the empirical relationship between reward and divergence under $\beta=1$.
Under KL regularization ($\alpha=1$), Figure~\ref{divergence_vs_reward11} shows a clear separation between the proxy reward (training-time signal) and the \emph{gold reward} (evaluation signal) once the proxy reward is heavy-tailed ($\kappa>1$). As the policy moves farther from the reference model, the proxy reward generally continues to increase, whereas the gold reward exhibits a pronounced \emph{rise-then-collapse} pattern: moderate divergence is initially beneficial, but additional divergence can eventually substantially degrade proxy-gold performance. This is a Goodhart-style signature of proxy over-optimization, in which continued improvement on the training-time reward no longer translates into improvement under the stronger evaluator.

This pattern is consistent with the mechanism in Section~\ref{se:whyKLfails}. Occasional extreme proxy rewards can dominate the policy-gradient estimate and induce abrupt, localized changes in the policy. Because the KL penalty averages discrepancies over the full action distribution, such tail-driven deviations need not incur a proportionate KL cost. The optimization can therefore drift toward policies that exploit proxy-specific artifacts while remaining relatively close to the reference policy in KL, which explains the collapse in gold reward in Figure~\ref{divergence_vs_reward11}.

Figure~\ref{divergence_vs_reward1210} shows that this behavior changes under R\'enyi regularization. For $\alpha>1$, higher-order R\'enyi divergences penalize large likelihood ratios $\pi(\cdot\mid x)/\pi_0(\cdot\mid x)$ more strongly, even when they arise on small subsets of actions. This makes concentrated, outlier-driven departures from the reference policy more expensive and thereby dampens the instability caused by heavy-tailed proxy rewards. Empirically, across $\alpha\in\{2,\ldots,10\}$, the gold reward is substantially more stable and typically increases before leveling off, rather than exhibiting the sharp collapse seen under KL. Taken together, Figures~\ref{divergence_vs_reward11} and~\ref{divergence_vs_reward1210} suggest that R\'enyi regularization provides a more reliable trust-region signal in the heavy-tailed regime.

\begin{figure}[t]
\centering 
\includegraphics[scale=0.39]{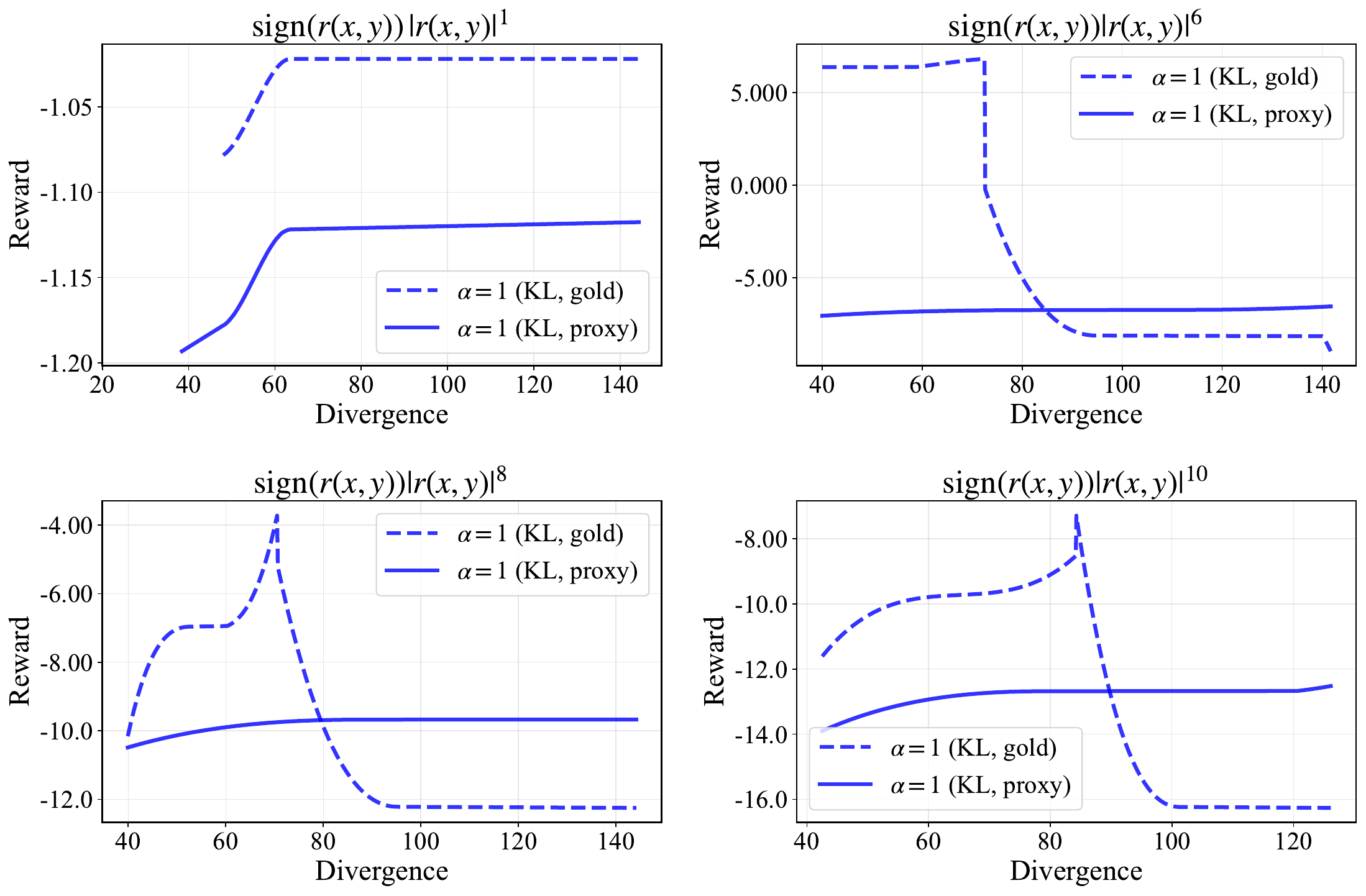}

\caption{ \small
Reward--divergence relationship for $\alpha=1$ (KL) with $\beta=1$ under different power exponents $\kappa$.
The four subplots correspond to $\kappa=1,6,8,10$, ordered from left to right and top to bottom.
As $\kappa$ increases, a clearer separation emerges between proxy and gold rewards: the proxy reward tends to keep increasing with divergence, whereas the gold reward exhibits a rise-then-collapse pattern, indicating over-optimization under heavy-tailed proxy rewards.
}
\label{divergence_vs_reward11}
\end{figure}

\begin{figure}[t]
\centering 
\includegraphics[scale=0.24]{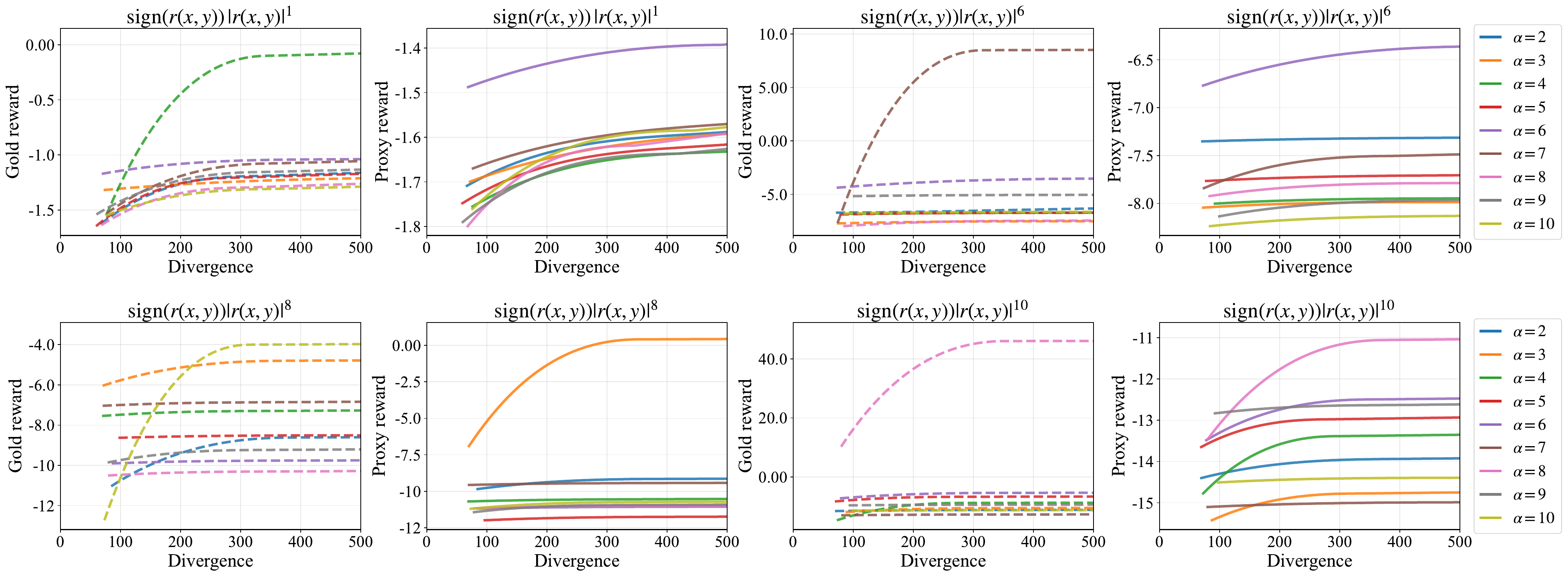}

\caption{\small
Reward--divergence relationship under R\'enyi regularization with $\beta=1$, for orders $\alpha\in\{2,\ldots,10\}$ and different power exponents $\kappa$.
The four subplots correspond to $\kappa=1,6,8,10$, ordered from left to right and top to bottom.
Across $\alpha\in\{2,\ldots,10\}$, the gold reward is substantially more stable and typically increases before leveling off, rather than exhibiting the sharp rise-then-collapse pattern observed under KL, suggesting that R\'enyi regularization provides a more reliable trust-region signal in the heavy-tailed regime.
}
\label{divergence_vs_reward1210}
\end{figure}

\paragraph{Proxy--proxy-gold coupling.}
Figure~\ref{proxy_vs_gold} examines how improvements in the proxy reward translate into improvements in the \emph{gold reward}. Under KL regularization ($\alpha=1$), this relationship becomes clearly non-monotone once the proxy reward is heavy-tailed ($\kappa>1$): the gold reward initially increases with the proxy reward, but beyond a certain point it begins to decline. Thus, continued optimization of the proxy reward can coincide with worse performance under the stronger evaluator, which manifests the over-optimization.

By contrast, under R\'enyi regularization ($\alpha\ge 2$), the proxy--gold curves remain monotone increasing and noticeably smoother across the observed range. It indicates that gains in the proxy reward are more consistently aligned with gains in the gold reward.
Overall, these results suggest that R\'enyi regularization yields a more reliable coupling between proxy and proxy-gold objectives, and therefore a more reliable trust-region signal than KL in the heavy-tailed regime.\\

\begin{figure}[t!]
\centering 
\includegraphics[scale=0.35]{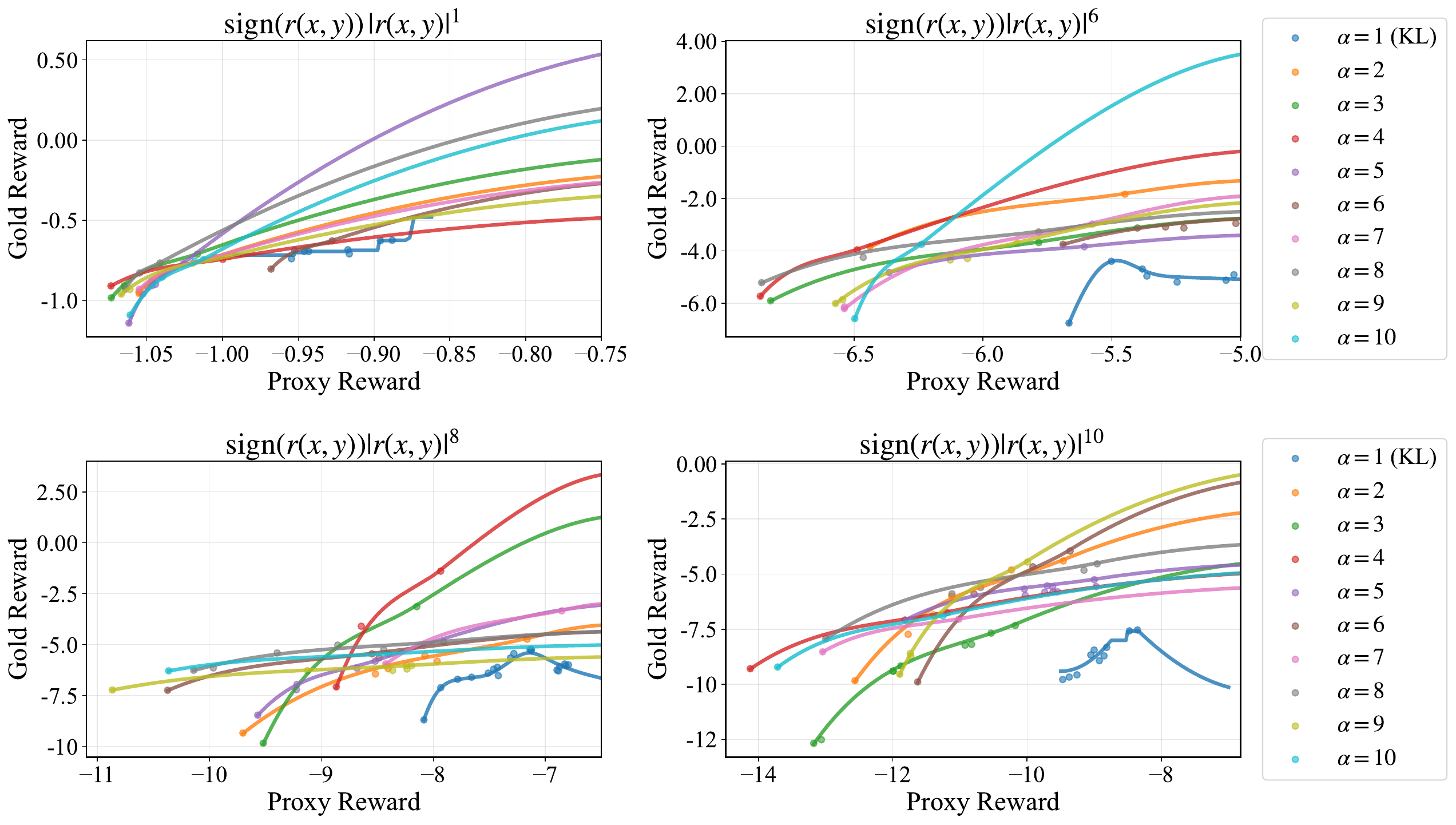}

\caption{\small
Proxy--proxy-gold reward relationship under $\beta=1$, for R\'enyi orders $\alpha\in\{1,\ldots,10\}$ and power exponents $\kappa\in\{1,6,8,10\}$.
The four subplots correspond to $\kappa=1,6,8,10$, ordered from left to right and top to bottom.
As $\kappa$ increases, the KL case ($\alpha=1$) becomes clearly non-monotone, with the gold reward eventually declining as the proxy reward continues to improve, whereas the curves for $\alpha\ge 2$ remain smoother and largely monotone increasing, indicating a more reliable coupling between proxy and gold objectives.
}
\label{proxy_vs_gold}
\end{figure}


\paragraph{Additional robustness checks.}
Appendix~\ref{app:add_experiments} provides additional diagnostics and robustness checks, including tail plots for the power transformation and the full set of results for the additive Weibull-noise construction. Having established the same qualitative pattern under controlled heavy-tailed perturbations, we next examine a more adversarial source of heavy tails: token-space attacks that directly exploit reward-model vulnerabilities.

\subsection{RLHF under Token-Space Reward Attacks}\label{sec:tompa-experiments}\label{app:tompa}\label{app:toma}

The controlled experiments above isolate the effect of tail behavior by modifying
the reward signal while keeping the optimization pipeline fixed. We now consider
a less stylized experiment in which heavy-tailed rewards emerge from
reward-model exploitation itself. \citet{zhang2026beyond} introduced TOMPA and
identified a striking failure mode of reward modeling in RLHF: reward-model
vulnerabilities are not confined to semantic manipulation. Since an attacker
can search directly in token space for non-semantic adversarial patterns that
induce abnormally large reward values. The resulting high-reward outputs may
degenerate into nonsensical text while still receiving favorable evaluations
from the reward model. This phenomenon is closely aligned with the theory above:
rare token sequences can trigger disproportionately large rewards, and those
extreme values can dominate policy updates even when most rollouts receive
moderate rewards. Thus, we first inspect the tail behavior of the reward
distribution and then ask whether gains in the optimized target reward remain
aligned with an external reward diagnostic.

Building on the RLHF setup in Section~\ref{sec:rlhf-experiments}, we compare KL
regularization with R\'enyi regularization at $\alpha\in\{2,4\}$, using
$\beta=1$. Following TOMPA, we use Skywork-Reward-V2-Llama-3.1-8B
\citep{liu2025skywork} as the frozen target reward model and Qwen3-1.7B
\citep{qwen2025technical} as the attack policy. We train on 5{,}000 prompts
sampled from WildChat \citep{zhao2024wildchat,deng2024wildvis} using GRPO with
eight rollouts per prompt. For reward computation, only the generated response
token IDs are passed directly into the target reward model; the prompt is
excluded, and token IDs exceeding the target vocabulary are clipped to its
largest valid index. We update only rank-32 LoRA parameters of the policy, while
the policy backbone and reward model remain frozen. The maximum prompt and
response lengths are 2{,}048 and 512 tokens, respectively, and training uses a
learning rate of $10^{-6}$. As an online diagnostic, we periodically decode the
mapped response IDs with the target-model tokenizer and score the resulting
answer-only text using the frozen OpenAssistant reward model
\texttt{reward-model-deberta-v3-large-v2} \citep{openassistant2023rewarddeberta}.
We treat this auxiliary diagnostic score as the \emph{gold reward} in this
subsection. It is recorded only for monitoring and does not affect policy
optimization.

\begin{figure}[t]
\centering
\includegraphics[scale=0.25]{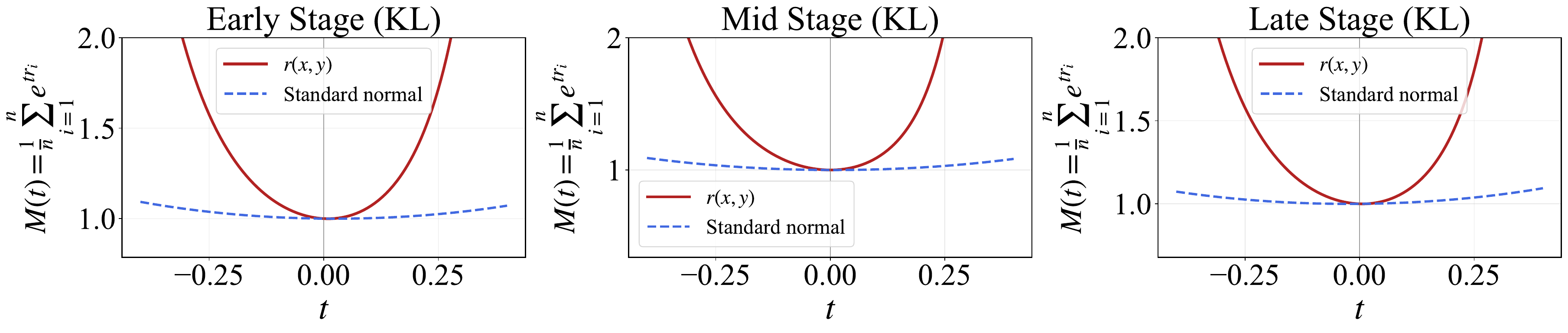}

\includegraphics[scale=0.25]{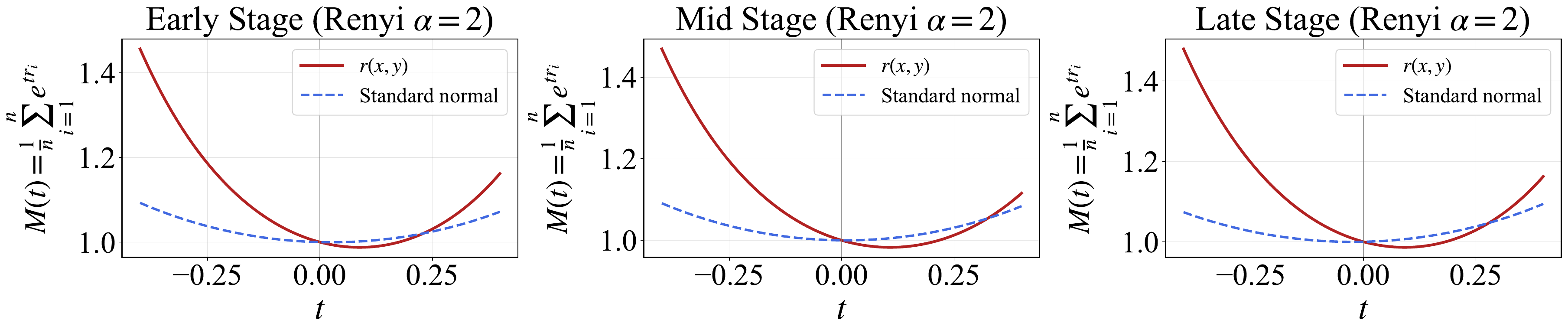}

\includegraphics[scale=0.25]{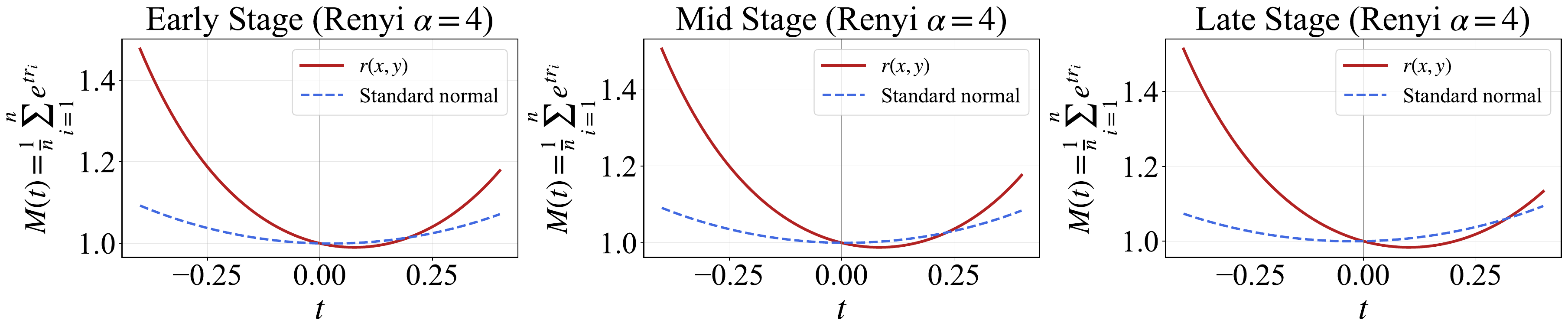}
\caption{\small Tail behavior of reward distributions under TOMPA token-space attacks with KL and R\'enyi regularization. The top row shows KL regularization, while the middle and bottom rows correspond to R\'enyi regularization with $\alpha=2$ and $\alpha=4$, respectively. Left panels plot the empirical MGF $M(t)$ against the standard normal benchmark. KL regularization exhibits stronger right-tail inflation and upper-tail deviations, while R\'enyi regularization attenuates these tail effects, especially when $\alpha=4$.}
\label{fig:tompa_tail_diagnostics}
\end{figure}

Figure~\ref{fig:tompa_tail_diagnostics} shows that KL regularization produces
more pronounced heavy-tailed reward behavior under token-space attack: the
empirical MGF grows faster than the normal benchmark. In contrast, R\'enyi regularization reduces the
upper-tail deviation, with a stronger attenuation at $\alpha=4$. This suggests
that R\'enyi regularization can suppress rare but abnormally high reward values
induced by token-space exploitation
\begin{figure}[t]
\centering
\includegraphics[width=0.98\textwidth]{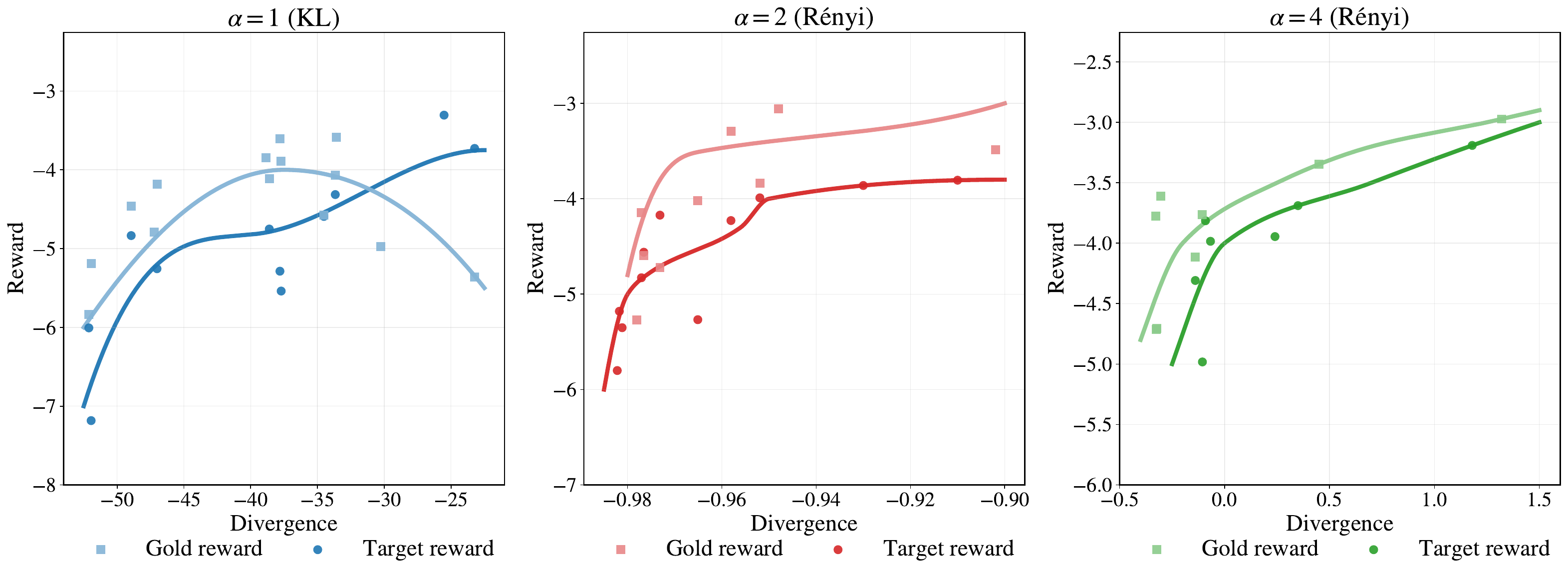}
\vspace{0.8em}
\includegraphics[width=0.98\textwidth]{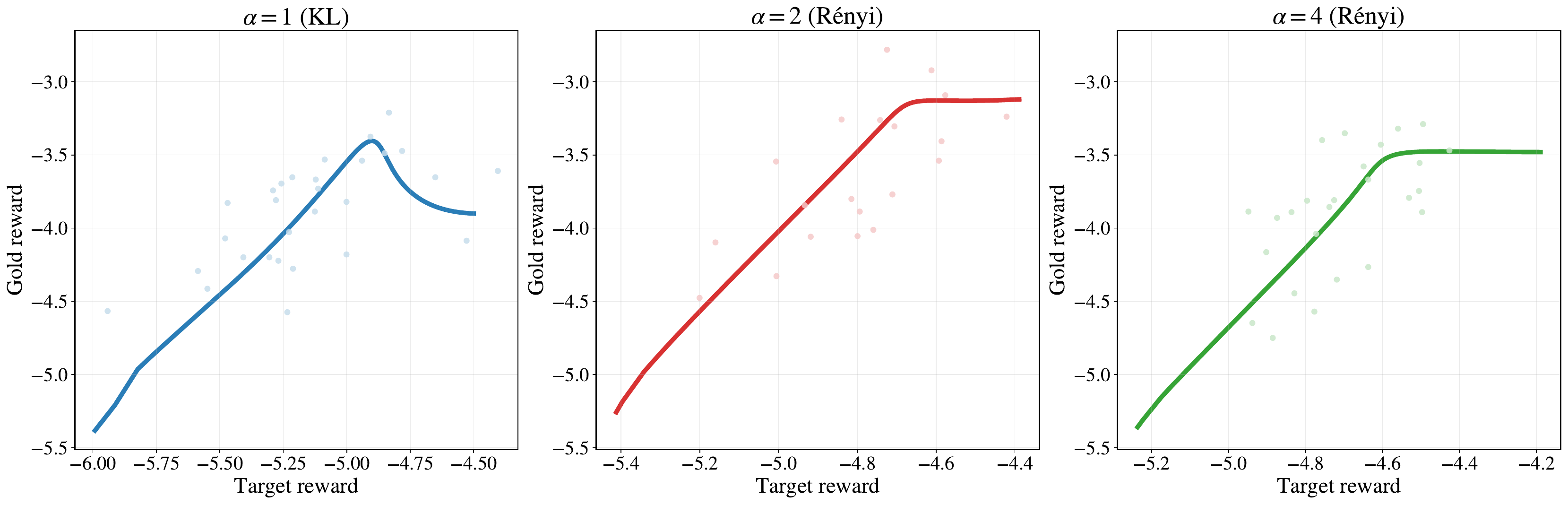}
\caption{\small Reward--divergence diagnostics under TOMPA token-space attacks with KL and R\'enyi regularization. The upper panel reports the relationship between reward and divergence for $\alpha=1$ (KL), $\alpha=2,4$ (R\'enyi), where points denote observed rewards and curves denote smoothed trends. The lower panel compares the directly optimized target reward with the auxiliary gold reward, obtained by decoding the mapped response IDs and scoring the resulting answer-only text with an external frozen reward model. Under KL regularization, the target reward increases as divergence grows, whereas the gold reward first increases and then decreases. Under R\'enyi regularization, both rewards tend to increase together and gradually stabilize.}
\label{fig:reward_divergence_diagnostics}
\end{figure}
Figure~\ref{fig:reward_divergence_diagnostics} further explains how this tail
behavior translates into reward hacking. Under KL regularization, the target
reward continues to rise as divergence increases, but the gold reward first
increases and then declines. The lower panel gives a direct view of this
decoupling: in the high-target-reward region, larger optimized rewards no longer
correspond to better decoded outputs under the external diagnostic. The policy
therefore appears to exploit token-space vulnerabilities of the target reward
model rather than produce responses with improved reward behavior.

By contrast, R\'enyi regularization yields a more stable relationship between
the optimized target reward and the gold reward. For $\alpha=2$ and $\alpha=4$,
both rewards tend to increase together as divergence grows, and the gold reward
approaches a plateau rather than declining in the high-target-reward region.
Combining Figures~\ref{fig:tompa_tail_diagnostics}
and~\ref{fig:reward_divergence_diagnostics}, we find that KL regularization
leaves two related vulnerabilities in this adversarial setting: heavier-tailed
reward responses and a mismatch between the optimized target reward and the
external gold-reward diagnostic. R\'enyi regularization mitigates both effects,
supporting the broader conclusion that tail-aware divergence control is useful
not only for controlled heavy-tailed perturbations, but also for reward-model
exploitation that creates heavy tails endogenously.

\section{Conclusions and Discussions}\label{sec:conclusion}

This paper develops a tail-aware information--theoretic framework for
data-dependent selection and adaptive suprema under sub-Weibull tails. The
starting point is the observation that KL mutual-information bounds,
while powerful in light-tailed settings, can be ineffective when MGFs do not
exist and rare extreme events dominate selection effects. To address this, we
introduce a decorrelation lemma tailored to sub-Weibull tails: it controls
expectations under a changed measure through a shifted-log $f_\theta$-divergence
and relates this quantity to \Renyi divergence. This route avoids MGF arguments
and works directly with sub-Weibull Orlicz norms.

On the empirical-process side, we extend maximal and chaining tools to genuinely
heavy-tailed regimes. We establish a sharp maximal inequality for heavy-tailed
sub-Weibull random variables and a Dudley-type entropy bound for sub-Weibull
processes. The resulting rates make the role of the tail index explicit: maxima
scale as $(\log n)^{1/\theta}$, while metric entropy enters the Dudley integral
with exponent $1/\theta$.

Combining the decorrelation and empirical-process ingredients, we obtain
tail-adaptive information bounds for data-dependent selectors. The single-scale
bound replaces KL mutual information with shifted-log $f_\theta$-mutual
information and its \Renyi refinements. The multiscale result then lifts this
control to a Dudley-type information inequality through the discretized variables
$[W]_k$. This chaining formulation is especially useful when a single-scale
information term is infinite or vacuous, but the information revealed at each
resolution remains controlled.

We apply the same tail-aware viewpoint to RLHF under heavy-tailed and
adversarial rewards. The theory shows why KL-constrained RLHF can suffer
catastrophic Goodhart under heavy-tailed proxy rewards, and why
\Renyi-regularized RLHF controls reward inflation at a fixed divergence budget.
We also analyze best-of-$n$ policies and show that their reward growth is
controlled through \Renyi divergence and the sub-Weibull maximal rate. The
numerical studies support this picture in two settings: controlled heavy-tail
perturbations of reward signals and TOMPA-style token-space reward attacks. In
both cases, higher-order \Renyi regularization attenuates tail inflation and
improves the coupling between proxy and gold reward diagnostics relative to KL.

Several limitations and directions remain. First, the sharpness of the
\Renyi-based constants and exponents should be better understood, especially in
structured or high-dimensional settings where localization may improve the
bounds. 
Second, while this work focuses on shifted-log
$f$-divergences and \Renyi divergences with $\alpha>1$, other information
measures, such as $\alpha<1$ \Renyi divergences, Wasserstein distances, or hybrid
integral probability metrics, may provide complementary forms of tail control. Third, adaptively estimating \Renyi order $\alpha$ is an important next step. Similarly, \cite{ferrari2010maximum} choose $\alpha$ by minimizing an estimated asymptotic mean squared error of the maximum $L\alpha$-likelihood estimator.

From a practical standpoint, the results suggest moving beyond KL penalties when
rare high-reward events can dominate optimization. For RLHF, \Renyi-type
regularization offers a way to penalize concentrated tail exploitation without
simply forbidding policy movement. The token-space attack experiment further
suggests that such control is relevant not only for synthetic heavy-tailed
perturbations, but also when reward-model vulnerabilities create heavy tails
endogenously. We hope these tools contribute to a broader tail-adaptive
information--theoretic theory for modern learning and alignment systems. 

\section*{Acknowledgements}
\noindent H.Z. is supported in part by Beijing Advanced Innovation Center for Future Blockchain and Privacy Computing and the Fundamental Research Funds for the Central Universities (No. KG16447001). W.T. is supported in part by the Funds of the Natural science Foundation of Hangzhou (No.~2025SZRJJ1388) and the Beijing Outstanding Young Scientist Program (No.~JWZQ20240101027).  Q.S. is supported in part by MBZUAI, the Natural Sciences and Engineering Research Council of Canada (Grant RGPIN-2018-06484), computing resources provided by the Digital Research Alliance of Canada.

\bibliography{ref}

\newpage
\appendix

\section*{Appendix}

\startcontents[sections] 
\printcontents[sections]{}{1}{} 


\section{Preliminaries}\label{app:prior}

\subsection{Preliminaries on sub-Weibull random variables}\label{app:prelim}


We review the definition of Orlicz norm and an equivalent definition for sub-Weibull random variables. 

\begin{definition}[Orlicz norm, \cite{Aad2023}]
For a random variable $X$, the Orlicz norm associated with a Young function $\psi$ is defined as
\[
\|X\|_{\psi} := \inf \left\{ K > 0 : \mathbb{E}\,\psi\!\left(\frac{|X|}{K}\right) \le 1 \right\},
\]
where $\psi : [0,\infty) \to [0,\infty)$ is a Young function, i.e., a convex function satisfying $\psi(0)=0$ and $\lim_{x \to \infty} \psi(x) = \infty$. 
\end{definition}

For example, the sub-Weibull Orlicz norm is defined as
\[
\|X\|_{\psi_\theta}:=\inf\left\{K>0:\ \mathbb{E}\exp\!\left(\left|\frac{X}{K}\right|^\theta\right)\le 2\right\}.
\]
The $\|\cdot\|_{\psi_\theta}$ is a norm when $\theta\ge 1$. For $0<\theta<1$, it is only a quasi-norm: the triangle inequality fails, but a weak triangle inequality holds~\cite[Lemma A.3]{gotze2021concentration}.

As mentioned in the introduction, sub-Weibull random variables can also be equivalently characterized via their tail probabilities. Below we give a definition equivalent to the one based on the Orlicz norm.

\begin{definition}[Sub-Weibull random variables]\label{def:subweibull}
A random variable $X$ is called \emph{sub-Weibull} with tail parameter $\theta>0$, denoted $X \sim \subw(\theta)$, if there exist positive constants $a,\, b>0$ such that
\[
\mathbb{P}(|X| \ge x) \le a \exp \bigl\{-(x/b)^{\theta}\bigr\}, \qquad x>0.
\]
\end{definition}

In other words, sub-Weibull distributions have tails no heavier than those of a Weibull random variable with the same parameter $\theta$. When $\theta < 1$, $X$ is heavy-tailed, whereas $\theta \ge 1$ corresponds to light-tailed distributions, including sub-Gaussian ($\theta=2$) and sub-exponential ($\theta=1$) cases.  
The next section reviews key prior results on maximal inequalities for light-tailed sub-Weibull random variables (sub-Weibull with $\theta\geq 1$, \cite{Aad2023}), and information-theoretic generalization bounds for sub-Gaussian random variables \citep{Russo20,chu2023unified}.

\subsection{Maximum inequalities for light-tailed sub-Weibull random variables}\label{se:ap-max}

The following maximal inequality for light-tailed sub-Weibull random variables with $\theta\geq 1$ is due to \cite{Aad2023}. 

\begin{proposition}[Maximal inequality for light-tailed sub-Weibull random variables]\label{prp-MaximalSW}
For $\theta \ge 1$, let $\{X_{i}\}_{i=1}^n$ be random variables with $\max_{1 \le i \le n}\|X_i\|_{\psi_\theta} < \infty$ and $\psi_\theta(x)=e^{x^\theta}-1$,
\begin{equation}\label{eq:MaximalSW1}
\E \Bigl( \max_{1 \le i \le n} |X_i| \Bigr)
\;\le\; \psi^{-1}_{\theta}(n) \max_{1 \le i \le n}\|X_i\|_{\psi_{\theta}}
= (\log (1+n))^{1 / \theta}\, \max_{1 \le i \le n}\|X_i\|_{\psi_{\theta}}.
\end{equation}
\end{proposition}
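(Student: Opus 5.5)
The plan is the standard Jensen argument for Orlicz maxima, valid here because $\psi_\theta$ is convex for $\theta\ge 1$. Set $K:=\max_{1\le i\le n}\|X_i\|_{\psi_\theta}$; we may assume $K>0$, since otherwise all $X_i=0$ a.s. and there is nothing to prove. By the definition of the Orlicz norm and monotone convergence, $\E\psi_\theta(|X_i|/K)\le 1$ for every $i$: take $K_m\downarrow\|X_i\|_{\psi_\theta}$ and use $K\ge \|X_i\|_{\psi_\theta}$.

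For $\theta\ge1$, $\psi_\theta(x)=e^{x^\theta}-1$ is convex and increasing on $[0,\infty)$, since $x\mapsto x^\theta$ is convex and increasing and $\exp$ is convex and increasing. Jensen's inequality therefore gives
\[
\psi_\theta\Bigl(\E\max_{i}\frac{|X_i|}{K}\Bigr)\le \E\,\psi_\theta\Bigl(\max_i\frac{|X_i|}{K}\Bigr)=\E\max_i\psi_\theta\Bigl(\frac{|X_i|}{K}\Bigr)\le\sum_{i=1}^n\E\,\psi_\theta\Bigl(\frac{|X_i|}{K}\Bigr)\le n .
\]
The equality uses monotonicity of $\psi_\theta$, and the next step bounds the maximum of nonnegative terms by their sum. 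Applying the increasing inverse $\psi_\theta^{-1}(y)=(\log(1+y))^{1/\theta}$ yields $\E\max_i|X_i|\le K\psi_\theta^{-1}(n)=(\log(1+n))^{1/\theta}K$. This is \eqref{eq:MaximalSW1}.

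The only delicate step is the applicability of Jensen's inequality. If $\E\max_i|X_i|$ could be infinite, we would first note that it is finite, because $\E|X_i|<\infty$ follows from $e^{x^\theta}-1\ge x^\theta$ and finiteness of the norm. Convexity is exactly where $\theta\ge1$ is needed. For $\theta<1$, $\psi_\theta$ is concave near $0$, which explains why Lemma~\ref{pp-MaximalSW} shifts the argument past the inflection point $x_\theta$.
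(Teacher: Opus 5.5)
Your proposal is correct and follows the same route the paper indicates. The steps are: convexity of $\psi_\theta$ for $\theta\ge 1$, Jensen's inequality, bounding the maximum by the sum to get $\mathbb{E}\,\psi_\theta(M)\le n$, and inverting $\psi_\theta$. This is the template of the paper's proof of Lemma~\ref{pp-MaximalSW}, without the shift by $\psi_\theta(x_\theta)$. Your extra care about $K=0$, attainment of the infimum in the Orlicz norm, and finiteness of $\mathbb{E}\max_i|X_i|$ fills in details the paper leaves implicit.
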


The proof of Proposition~\ref{prp-MaximalSW} relies on Jensen's inequality, which requires the convexity of $\psi_\theta$. This property holds only when $\theta \ge 1$ and fails for $\theta<1$. Note that the sub-Gaussian case corresponds to $\theta = 2$. 

More generally, \citet{Russo20} extended maximal inequalities in the sub-Gaussian setting by relating them to mutual information. Under the assumption that $\{X_{i}\}_{i=1}^n$ are i.i.d. $\sigma^2$-sub-Gaussian, they established:
\begin{equation}\label{eq:RZ}
    \E \bigl[ X_W \bigr] \leq \sqrt{2\sigma^2\, I(W;X_1, \ldots, X_n)},
\end{equation}
where the algorithm output is a random index $W = W(X_1, \ldots, X_n)$. When $W$ is taken to be the maximum selector in $[n]$, the mutual information bound yields the classical sub-Gaussian maximal inequality
\begin{equation}\label{eq:RZ1}
\mathbb{E}\bigl[\max_{ i \in [n]} X_i\bigr] \leq \sqrt{2 \log n},
\end{equation}
by using the fact that $I(W;X_1, \ldots, X_n)\le \log n$. Hence, \eqref{eq:RZ} implies \eqref{eq:RZ1} as a special case, so \eqref{eq:RZ} is more general than \eqref{eq:RZ1}.

When the index set is infinite or even uncountable, it is natural to generalize maximal inequalities to upper bounds on the supremum of empirical or stochastic processes. Dudley's inequality (Theorem 5.24 in \cite{vanhandle2016probability}) states that
\[
\mathbb{E}\left[\sup _{t \in T} X_t\right] 
\;\leq\; 6 \sum_{k \in \mathbb{Z}} 2^{-k} \sqrt{\log N\left(T, d, 2^{-k}\right)}
\]
if $\left\{X_t\right\}_{t \in T}$ is a separable 1-sub-Gaussian process of zero mean on the bounded metric space $(T, d)$, i.e.,
\[
\E \exp\bigl(\lambda(X_t-X_s)\bigr) \leq \exp\bigl(\lambda^2 d(t, s)^2 / 2\bigr)
\quad \text{for all}~t, s \in T,\; \lambda \geq 0.
\]

\subsection{Information-theoretic generalization bounds}

Building upon \citet{Russo20}, \citet{asadi2018chaining} introduced a chaining method to derive information-theoretic generalization bounds, which can be viewed as a refinement of Dudley's inequality through mutual information.

\begin{proposition}[\citet{asadi2018chaining}]\label{Chaining MI Random Process}
Assume that $\{X_t\}_{t\in T}$ is a separable sub-Gaussian process of zero mean on a bounded metric space $(T,d)$, and let $k_1(T)$ be an integer such that $2^{-\left(k_1(T)-1\right)} \geq \operatorname{diam}(T)$. Let $\{\mathcal{P}_k\}_{k=k_1(T)}^{\infty}$ be an increasing sequence of partitions of $T$, where for each $k\geq k_1(T)$, $\mathcal{P}_k$ is a $2^{-k}$-partition\footnote{Recall the definition of partition in Definition \ref{def:part}.} of $(T,d)$. Then:
\begin{enumerate}
\item
$\E X_W \leq 3\sqrt{2}\sum_{k=k_1(T)}^{\infty}2^{-k}\sqrt{I([W]_k;X_T)}.$
\item For any arbitrary $t_0\in T$,
$
\E\bigl\lvert X_W-X_{t_0}\bigr\rvert \leq 3\sqrt{2}\sum_{k=k_1(T)}^{\infty}2^{-k}\sqrt{I([W]_k;X_T)+\log 2}.
$
\end{enumerate}
\end{proposition}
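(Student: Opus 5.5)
The plan is the standard chaining argument: decompose $X_W$ into a telescoping sum of increments along the partition sequence, and bound each increment by the sub-Gaussian Russo--Zou inequality \eqref{eq:RZ} applied to the quantized index $[W]_k$.

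\textbf{Setting up the chain.} Let $\mathcal P_{k_1(T)-1}:=\{T\}$. This is a $2^{-(k_1(T)-1)}$-partition because $\operatorname{diam}(T)\le 2^{-(k_1(T)-1)}$. Fix a base point $t_0$ for it; in part~2 this is the given arbitrary point. For each $k\ge k_1(T)$ and each cell $A\in\mathcal P_k$, choose a centre $t_A$ with $A\subseteq\mathcal B_d(t_A,2^{-k})$, and set $\pi_k(W):=t_{[W]_k}$. Because the partitions are increasing, $[W]_k$ determines $[W]_{k-1}$. Hence $\pi_k(W)$ and $\pi_{k-1}(W)$ are both functions of $[W]_k$. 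Moreover $W\in[W]_k\cap[W]_{k-1}$, so the triangle inequality gives
\[
d(\pi_k(W),\pi_{k-1}(W))\le 2^{-k}+2^{-(k-1)}=3\cdot 2^{-k}.
\]
For $m\ge k_1(T)$ we have the telescoping identity
\[
X_{\pi_m(W)}-X_{t_0}=\sum_{k=k_1(T)}^m\bigl(X_{\pi_k(W)}-X_{\pi_{k-1}(W)}\bigr).
\]
Since $d(\pi_m(W),W)\le 2^{-m}\to0$, separability and (sample) continuity give $X_{\pi_m(W)}\to X_W$. The Dudley bound then supplies an integrable dominating function, namely $2\sup_t|X_t|$, so the limit can be taken inside the expectation. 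Finally, $\E X_{t_0}=0$ because the process has mean zero.

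\textbf{Bounding each increment.} Fix $k$. Index the finitely many pairs of centres $(t_A,t_B)$ with $A\in\mathcal P_k$, $A\subseteq B\in\mathcal P_{k-1}$, by the cell $A$. This gives a finite family $Y_A:=X_{t_A}-X_{t_B}$. Each $Y_A$ is mean zero and, by the increment condition with $d(t_A,t_B)\le3\cdot2^{-k}$, sub-Gaussian with variance proxy $9\cdot 4^{-k}$. The selector is $[W]_k$, and the chosen increment is $Y_{[W]_k}$. Applying \eqref{eq:RZ} (which needs only a mean-zero sub-Gaussian family and an arbitrary dependent index, not independence) gives
\[
\E\, Y_{[W]_k}\le\sqrt{2\cdot 9\cdot 4^{-k}\, I([W]_k;(Y_A)_A)}\le 3\sqrt2\,2^{-k}\sqrt{I([W]_k;X_T)}.
\]
The last step is data processing, since $(Y_A)_A$ is a function of $X_T$. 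Summing over $k$ and passing to the limit proves part~1.

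\textbf{Part 2.} Bound $|X_W-X_{t_0}|\le\sum_k|Y_{[W]_k}|$ and write $|Y_{[W]_k}|=\sigma_k Y_{[W]_k}$ with $\sigma_k:=\operatorname{sign}(Y_{[W]_k})$. The enlarged family $\{\pm Y_A\}$ is still mean zero and $9\cdot4^{-k}$-sub-Gaussian, and it is selected by the pair $(\sigma_k,[W]_k)$. By the chain rule, this index carries at most
\[
I((\sigma_k,[W]_k);X_T)\le I([W]_k;X_T)+H(\sigma_k)\le I([W]_k;X_T)+\log 2
\]
of information. Applying \eqref{eq:RZ} to this family and summing over $k$ gives the stated bound.

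\textbf{Main obstacle.} The increment estimates are routine once \eqref{eq:RZ} is available. The delicate step is the limit interchange: justifying $X_{\pi_m(W)}\to X_W$ and exchanging the infinite sum with the expectation for a data-dependent $W$. I would first try to handle this by reducing to a countable dense set through separability and then invoking dominated convergence with the Dudley-integrable supremum.
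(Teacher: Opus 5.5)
Your proof is correct, given path continuity (see below), and it is essentially the original chaining proof of \citet{asadi2018chaining}. The paper only cites this proposition without proof, but its proof of Theorem~\ref{chaining} follows the same template as yours: telescope over cell representatives, change measure at each scale for the pair of consecutive representatives, pass from that pair to $[W]_k$ by data processing, and take the limit by dominated convergence. The ingredients differ in three ways. First, at each scale you use the Donsker--Varadhan/Russo--Zou bound \eqref{eq:RZ}, which, as you note, needs no independence. The paper instead uses its Young-type decorrelation lemma, which gives $2^{1/\theta}I_{f_\theta}+2$ rather than $\sqrt{2\sigma^2 I}$; this is why Theorem~\ref{chaining} at $\theta=2$ matches the present statement only up to constants and an additive term. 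Second, you use arbitrary centres and the triangle inequality through $W$, which gives $3\cdot 2^{-k}$ and exactly the constant $3\sqrt2$. The paper instead assumes nested representatives ($t_A\in B$), which gives $2\cdot 2^{-k}$. Third, your sign-index device, $I((\sigma_k,[W]_k);X_T)\le I([W]_k;X_T)+H(\sigma_k\mid[W]_k)$, is the right way to get the $\log 2$ on the MGF route. The paper's lemma avoids it because it works with $r\ge 0$ directly.

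The step that does not follow from the stated hypotheses is the one you flagged. Separability in the sense of Definition~\ref{def:separable} only says that $X_t$ is \emph{some} cluster value of $X_s$ as $s\to t$ through $T_0$. The centres $\pi_m(W)$ need not lie in $T_0$, so separability alone does not give $X_{\pi_m(W)}\to X_W$ at a data-dependent $W$. Nor do the hypotheses give a finite Dudley integral: finite $2^{-k}$-partitions only give total boundedness, so $\sup_t|X_t|$ need not be integrable. You need a.s.\ continuous sample paths, which for a separable sub-Gaussian process follow from a finite entropy integral. These are exactly the extra assumptions the paper writes into Theorem~\ref{chaining}.

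Given continuity, you can drop the Dudley domination altogether. Your per-scale estimates in part~2 involve no limit, so $D:=\sum_{k\ge k_1(T)}|Y_{[W]_k}|$ satisfies $\mathbb{E}D\le 3\sqrt2\sum_k 2^{-k}\sqrt{I([W]_k;X_T)+\log 2}$. This is finite whenever the right-hand side of part~1 is finite, and otherwise there is nothing to prove. Since $D$ dominates every partial sum $X_{\pi_m(W)}-X_{t_0}$, part~1 follows by dominated convergence, and part~2 follows from $|X_W-X_{t_0}|\le D$.
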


When $W$ is taken to be the maximum selector, Proposition~\ref{Chaining MI Random Process} yields an infinite-series bound that refines Dudley's inequality, since $\mathbb{E}[X_W] \leq \mathbb{E}\bigl[\sup _{t \in T} X_t\bigr]$.

However, standard MGF-based techniques for deriving maximal inequalities and mutual information bounds are ineffective for heavy-tailed sub-Weibull distributions. This motivates the development of new tools to obtain sharp information-theoretic generalization bounds in the heavy-tailed regime.


\section{Proofs for Section \ref{sec:ep}}\label{se: proof2}

\subsection{Proof of Lemma \ref{pp-MaximalSW}}
\begin{proof}
The function $\psi_\theta$ is increasing on $[0,\infty)$ and convex on $[x_\theta,\infty)$.
Indeed,
\[
\psi_\theta'(x)=\theta x^{\theta-1}e^{x^\theta},
\qquad
\psi_\theta''(x)=\theta x^{\theta-2}e^{x^\theta}\bigl(\theta x^\theta-(1-\theta)\bigr),
\]
so $\psi_\theta''(x)\ge 0$ for all $x\ge x_\theta=:\paren{\frac{1-\theta}{\theta}}^{1/\theta}$. 
Let $0<\theta<1$ and define $
g_\theta(x):=\paren{\psi_\theta(x)-\psi_\theta(x_\theta)}_+$.
Then $g_\theta$ is convex and nondecreasing on $[0,\infty)$. 

Let $K:=\max_{1\le i\le n}\norm{X_i}_{\psi_\theta}$ and $M:=\max_{1\le i\le n}\frac{\abs{X_i}}{K}$.
Then, the elementary bound
$\max_{1\le i\le n} a_i \le \sum_{i=1}^n a_i~(a_i\ge 0)$ gives
\[
\E \psi_\theta(M)\le \sum_{i=1}^n \E \psi_\theta\!\paren{\frac{\abs{X_i}}{K}}\le n.
\]
By Jensen's inequality,
\[
\paren{\psi_\theta(\E M)-\psi_\theta(x_\theta)}_+= g_\theta(\E M)\le \E g_\theta(M)\le \E\psi_\theta(M)\le n.
\]
Hence,
$\psi_\theta(\E M)\le \psi_\theta(x_\theta)+n$. Since $\psi_\theta$ is increasing,
\[
\E M=\E \paren{\max_{1\le i\le n}\frac{\abs{X_i}}{K}}\le \psi_\theta^{-1}\!\paren{\psi_\theta(x_\theta)+n},
\]
and multiplying by $K:=\max_{1\le i\le n}\norm{X_i}_{\psi_\theta}$ yields \eqref{eq:MaximalSW}. The big-$O$ rate is from $\psi_\theta^{-1}$.
\end{proof}

\subsection{Optimality and comparison with classical bounds}\label{app:sec2}

The following result shows that the bound in Lemma \ref{pp-MaximalSW} is optimal in order: the rate $\log^{1/\theta} n$ cannot be improved in general.   

\begin{proposition}\label{prop:lowerbound}
Let $\{X_i\}_{i=1}^n$ be i.i.d. Weibull variables $\mathbb{P}(X\ge x)=\exp(-(x/b)^\theta)$. Then, 
\[
\E ( \max_{i\in [n]} X_i ) \gtrsim \log^{1 / \theta}(n).
\]
\end{proposition}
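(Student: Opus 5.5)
The plan is to reduce the expectation lower bound to a tail probability lower bound for the maximum, using only the explicit Weibull tail and independence. Let $M_n:=\max_{i\in[n]}X_i$. Since $X_i\ge 0$, Markov's inequality in reverse gives, for any threshold $t>0$,
\[
\E M_n \;\ge\; t\,\mathbb{P}(M_n\ge t).
\]
We take $t_n:=b\,(\log n)^{1/\theta}$, the level at which a single Weibull variable exceeds $t_n$ with probability exactly $1/n$. Indeed $\mathbb{P}(X_1\ge t_n)=\exp(-\log n)=1/n$.

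For the probability factor we use independence:
\[
\mathbb{P}(M_n\ge t_n)=1-\bigl(1-\mathbb{P}(X_1\ge t_n)\bigr)^n=1-(1-1/n)^n\;\ge\;1-e^{-1}.
\]
The last step uses $(1-1/n)^n\le e^{-1}$ for all $n\ge1$. Combining the two displays gives
\[
\E M_n\;\ge\;(1-e^{-1})\,b\,(\log n)^{1/\theta},
\]
which is the claimed $\gtrsim\log^{1/\theta}n$. The hidden constant is $(1-e^{-1})b$, and the bound holds for every $n\ge 2$ and every $\theta>0$, not only $0<\theta<1$. For $n=1$ the right-hand side is $0$ and the claim is trivial.

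As an alternative route, one could write $\E M_n=\int_0^\infty \bigl(1-(1-e^{-(x/b)^\theta})^n\bigr)\,dx$ and bound the integrand below by $1-e^{-1}$ on $[0,t_n]$. This yields the same bound and, combined with Lemma~\ref{pp-MaximalSW}, shows the order $\log^{1/\theta}n$ is tight.

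There is no real obstacle here. The only point needing care is the choice of threshold: it must sit exactly at the $1-1/n$ quantile so that the exceedance probability stays bounded away from zero uniformly in $n$. The scale parameter $b$ and the tail index $\theta$ then enter only through $t_n$.
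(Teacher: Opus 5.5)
Your proof is correct and is essentially the paper's argument. The paper writes $\mathbb{E}\max_{i\in[n]}X_i=\int_0^\infty \mathbb{P}(\max_{i\in[n]} X_i>x)\,dx$, truncates the integral to $[0,b(\log n)^{1/\theta}]$, and bounds the integrand below by $1-(1-1/n)^n\ge 1-e^{-1}$. That is exactly the content of your Markov step (and of the alternative route you mention), and it gives the same constant $(1-e^{-1})b$.
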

\begin{proof}
The upper bound follows from the proof of Lemma \ref{pp-MaximalSW}. For the lower bound:
\[
\begin{aligned}
\E ( \max_{i\in [n]} X_i ) & =\int_0^{\infty} \mathbb{P}(\max_{i\in [n]} X_i>x) \, d x= \int_0^\infty \bigl[1-(1-\exp(-(x/b)^\theta))^n\bigr] d x\\
&\geq \int_0^{b\log(n)^{1 / \theta}} \left\{1-(1-\exp(-(x/b)^\theta))^n\right\} d x\\
&\geq \int_0^{b\log(n)^{1 / \theta}} \left\{1-(1-\frac{1}{n})^n\right\}d x\geq b\log^{1 / \theta}(n) \left(1-\frac{1}{e}\right),
\end{aligned}
\]
where the third line uses monotonicity of function $t\mapsto t^n$ and the last line uses $(1-\frac{1}{n})^n\leq \frac{1}{e}$.
\end{proof}

\subsection{Comparison with norm-to-moment bounds}\label{app:max_comparison}

A standard alternative to bounding $\mathbb{E}\max_{i\in [n]} |X_i|$ is to
first control the Orlicz norm of the maximum and then convert that norm bound
into a moment bound. For instance, \citet[(S2.2)]{mendes2023concentration}
showed that
\[
\left\|\max_{i\in [n]}|X_i|\right\|_{\psi_\theta}
\le c_1 \psi_\theta^{-1}(2n)\,\max_{i\in [n]}\|X_i\|_{\psi_\theta},
\qquad\text{where}~~c_1=\frac{2}{\log(1.5)}.
\]
using \cite[Lemma~2.2.2]{Aad2023}. Combining this with the first-moment bound
for the sub-Weibull norm \cite[Corollary~3]{zhang2022sharper},
$
\mathbb{E}|X|\le 2\|X\|_{\psi_\theta}\Gamma\!\left(\frac{1}{\theta}+1\right),
$
yields the bound
\[
\mathbb{E}\Bigl[\max_{i\in [n]} |X_i|\Bigr]
\le 2c_1\,\psi_\theta^{-1}(2n)\,\Gamma\!\left(\frac{1}{\theta}+1\right)
\max_{i\in [n]}\|X_i\|_{\psi_\theta}.
\]
In contrast, Lemma~\ref{pp-MaximalSW} controls the expectation directly and
avoids the additional multiplicative factor
$\Gamma(1/\theta+1)$ introduced by the norm-to-moment conversion. The ratio
between the two prefactors is
\begin{equation}\label{eq:sharplow}
    R_{n,\theta}=\frac{4/ \log (1.5) \psi_\theta^{-1}(2 n)\Gamma\!\left(\frac{1}{\theta}+1\right)}{\psi_\theta^{-1}\bigl(\psi_\theta(x_\theta)+n\bigr)
}\ge \frac{4}{\log(1.5)} \Gamma\left(\frac{1}{\theta} + 1\right) \approx 9.865 \, \Gamma\left(\frac{1}{\theta} + 1\right).
\end{equation}
Since $\Gamma(1/\theta+1)$ grows rapidly as $\theta\downarrow 0$, the
norm-to-moment route can be substantially looser in the heavy-tailed regime,
whereas Lemma~\ref{pp-MaximalSW} retains the sharp logarithmic scaling.

To verify \eqref{eq:sharplow}, note that
$\psi_\theta(x)=e^{x^\theta}-1$ has inverse
$\psi_\theta^{-1}(y)=\bigl[\log(y+1)\bigr]^{1/\theta}$. Thus,
\begin{equation*}
    R_{n,\theta}= \frac{\frac{4}{\log(1.5)} \psi_\theta^{-1}(2n) \Gamma\left(\frac{1}{\theta}+1\right)}{\psi_\theta^{-1}\bigl(\psi_\theta(x_\theta)+n\bigr)}.
\end{equation*}
We substitute $\psi_\theta(x_\theta) = e^{x_\theta^\theta} - 1=e^{\frac{1-\theta}{\theta}} - 1$, where $x_\theta^\theta = \frac{1-\theta}{\theta}$.
Substituting the inverse functions gives
\begin{equation*}
    R_{n,\theta} = \frac{4}{\log(1.5)} \Gamma\left(\frac{1}{\theta}+1\right) \left[ \frac{\log(2n+1)}{\log(n + e^{\frac{1-\theta}{\theta}})} \right]^{1/\theta}.
\end{equation*}
As $n\to\infty$,
\begin{equation*}
    \lim_{n \to \infty} \frac{\log(2n+1)}{\log(n + e^{\frac{1-\theta}{\theta}})} = \lim_{n \to \infty} \frac{\log n + \log(2 + 1/n)}{\log n + \log(1 + e^{\frac{1-\theta}{\theta}}/n)} = 1.
\end{equation*}
Therefore, the ratio converges to the constant
$\frac{4}{\log(1.5)} \Gamma\left(\frac{1}{\theta}+1\right)$. Moreover, for any
$n \ge e^{\frac{1-\theta}{\theta}}$, we have
$\frac{\log(2n+1)}{\log(n + e^{\frac{1-\theta}{\theta}})} \geq 1$. Hence,
\begin{equation*}
    R_{n,\theta} \ge \frac{4}{\log(1.5)} \Gamma\left(\frac{1}{\theta}+1\right).
\end{equation*}

The next example shows that taking $W=\arg\max_{i\in[n]}|X_i|$ in Theorem \ref{infob} recovers the maximal growth rate $\mathbb{E}[\max_i|X_i|]\lesssim(\log n)^{1/\theta}$.

\begin{example}\label{Max}
Let $S= \{X_1, \ldots, X_n\}$ be a vector of $n$ i.i.d. continuous random
variables with a common density. Let
$
W = \argmax_{i \in [n]} |X_i|$. Then
\[
I(W;S)=\lim_{\alpha \to 1} I_\alpha (W;S) =\log n,
\]
and for every $\alpha \in (0, \infty) \setminus \{1\}$, the R\'enyi-$\alpha$
mutual information is
\[
    I_\alpha(W; S) = \log n.
\]
\end{example}
\begin{proof}
By continuity, ties in $|X_1|,\ldots,|X_n|$ occur with probability $0$.
Hence the maximizer
$W=\arg\max_{i\in [n]} |X_i|$
is almost surely unique and takes values in $\{1,\ldots,n\}$. By the
exchangeability of $\{X_1,\ldots,X_n\}$, we have for every $i$,
\[
\mathbb{P}(W=i)=\frac{1}{n}.
\]
Moreover, conditional on $S=s=(x_1,\ldots,x_n)$, the value of $W$ is deterministic:
$\mathbb{P}(W=i\mid S=s)=\mathbf{1}\{i=\arg\max_{j\in [n]} |x_j|\}$, which
is $\{0,1\}$-valued and
$\sum_{i=1}^n \mathbb{P}(W=i\mid S)=1$.

\paragraph{Kullback--Leibler mutual information:}
Write $P_{W\mid S}$ for the conditional law of $W$ given $S$ and $P_W$ for the marginal law. Then
\[
\begin{aligned}
I(W;S)&=\E\Bigl[ \DD_{\mathrm{KL}}\bigl(P_{W\mid S}\,\|\,P_W\bigr)\Bigr]
      =\E\biggl[\sum_{i=1}^n \mathbb{P}(W=i\mid S)\log\frac{\mathbb{P}(W=i\mid S)}{\mathbb{P}(W=i)}\biggr]\\
      &=\E\biggl[\sum_{i=1}^n \mathbb{P}(W=i\mid S)\log\frac{1}{1/n}\biggr]=\log n.
\end{aligned}
\]
\paragraph{R\'enyi mutual information.}
For $\alpha\neq 1$, 
\[
I_\alpha(W;S)=\DD_\alpha(P_{W,S}\,\|\,P_W\otimes P_S)
=\frac{1}{\alpha-1}\log \int \Bigl(\frac{dP_{W,S}}{d(P_W\otimes P_S)}\Bigr)^\alpha \, d(P_W\otimes P_S).
\]
Because $W$ is discrete and $S$ has density, the Radon--Nikodym derivative exists and equals
\[
\frac{dP_{W,S}}{d(P_W\otimes P_S)}(i,s)=\frac{\mathbb{P}(W=i\mid S=s)}{\mathbb{P}(W=i)}
= n\,\mathbb{P}(W=i\mid S=s).
\]
Hence, using $\mathbb{P}(W=i\mid S=s)\in\{0,1\}$ and $\sum_{i=1}^n \mathbb{P}(W=i\mid S=s)=1$,
\begin{align}\label{eq:p-moment}
\int \Bigl(\frac{dP_{W,S}}{d(P_W\otimes P_S)}\Bigr)^\alpha d(P_W\otimes P_S)
&=\E\Bigl[\sum_{i=1}^n \mathbb{P}(W=i)\bigl(n\mathbb{P}(W=i\mid S)\bigr)^\alpha\Bigr]\nonumber\\
&=\E\Bigl[\frac{1}{n}\sum_{i=1}^n n^\alpha \mathbb{P}(W=i\mid S)\Bigr]=\E\bigl[n^{\alpha-1}\bigr]
= n^{\alpha-1}.
\end{align}
Therefore, $I_\alpha(W;S)=\frac{1}{\alpha-1}\log\bigl(n^{\alpha-1}\bigr)=\log n$ for $\alpha\neq 1$. Taking $\alpha\to 1$ yields $I(W;S)=\lim_{\alpha\to 1} I_\alpha(W;S)=\log n$.
\end{proof}

\subsection{Proof of Theorem \ref{Dudley}}
\begin{proof}
The proof proceeds in four steps.\\
\medskip
\noindent\textbf{Step 1: Discretization via $\varepsilon$-nets.}
If $\abs{T}=1$, the result is trivial. Assume $\abs{T}\ge 2$. For $k\ge 0$, set
$\varepsilon_k:=2^{-k}e(T)$. Since $T$ is finite, there exists
\[
\Delta:=\min\{d(s,t):s,t\in T,\ s\neq t\}>0.
\]
Choose $K$ so large that $\varepsilon_K<\Delta/2$. Then every $\varepsilon_K$-ball contains at most one point of $T$, so every $\varepsilon_K$-net is all of $T$ and $T_K=T$.

Fix $k\in\{1,\dots,K\}$ and $u\in T_k$. Since $T_{k-1}$ is an
$\varepsilon_{k-1}$-net of $T$ and $u\in T_k\subseteq T$, there exists
$p_k(u)\in T_{k-1}$ such that
\[
d\bigl(u,p_k(u)\bigr)\le \varepsilon_{k-1}.
\]
This proves the existence of the parent map $p_k$.

\medskip
\noindent\textbf{Step 2: Apply the maximal inequality.}
Next, for each $k$, the set of possible $k$th increments is contained in
$
\bigl\{X_u-X_{p_k(u)}:u\in T_k\bigr\}$. For each $k\ge 1$ and $u\in T_k$, set $Y_{k,u}:=X_u-X_{p_k(u)}$. Then
\[
\norm{Y_{k,u}}_{\psi_\theta}\le C\,d\paren{u,p_k(u)}\le 2C\varepsilon_k.
\]
Moreover,
$
\sup_{t\in T}\abs{X_{\pi_k(t)}-X_{\pi_{k-1}(t)}}
\le
\max_{u\in T_k}\abs{Y_{k,u}}$. Applying Lemma~\ref{pp-MaximalSW} at scale $k$ gives
\[
\E\sup_{t\in T}\abs{X_{\pi_k(t)}-X_{\pi_{k-1}(t)}}
\le
2C\varepsilon_k\,
\psi_\theta^{-1}\!\paren{\psi_\theta(x_\theta)+N(T,d,\varepsilon_k)}.
\]
For $k\ge 1$ we have $N(T,d,\varepsilon_k)\ge 2$, so by the definition of $K_\theta$,
\[
\psi_\theta^{-1}\!\paren{\psi_\theta(x_\theta)+N(T,d,\varepsilon_k)}
\le
K_\theta \bracks{\log N(T,d,\varepsilon_k)}^{1/\theta}.
\]
Hence
\[
\E \sup_{t\in T} X_t
\le
2CK_\theta
\sum_{k=1}^K
\varepsilon_k \bracks{\log N(T,d,\varepsilon_k)}^{1/\theta}.
\]
\medskip
\noindent\textbf{Step 3: Converting to an integral.}
Note that $\varepsilon_k=2(\varepsilon_k-\varepsilon_{k+1})$ and that $\varepsilon\mapsto N(T,d,\varepsilon)$ is nonincreasing. Therefore
\[
\varepsilon_k \bracks{\log N(T,d,\varepsilon_k)}^{1/\theta}
\le
2\int_{\varepsilon_{k+1}}^{\varepsilon_k}
\bracks{\log N(T,d,\varepsilon)}^{1/\theta}\,d\varepsilon.
\]
Summing over $k$ yields the desired bound
\[
\E \sup_{t\in T} X_t
\le
4CK_\theta
\int_0^{e(T)}
\bracks{\log N(T,d,\varepsilon)}^{1/\theta}\,d\varepsilon
\le
4CK_\theta
\int_0^\infty
\bracks{\log N(T,d,\varepsilon)}^{1/\theta}\,d\varepsilon.
\]
\end{proof}

\subsection{Proof of Corollary \ref{Dudley1}}\label{se:Dudley1}
\begin{proof}
By separability, there exists a countable dense set $T_0=\{t_1,t_2,\dots\}\subseteq T$ such that
\[
\sup_{t\in T} X_t = \sup_{t\in T_0} X_t
\qquad\text{a.s.}
\]
For $m\ge 1$, let
$T^{(m)}:=\{t_1,\dots,t_m\},~
M_m:=\sup_{t\in T^{(m)}}X_t$. Then $M_m\uparrow \sup_{t\in T_0}X_t$ a.s. Since $T^{(m)}\subseteq T$,
\[
N(T^{(m)},d,\varepsilon)\le N(T,d,\varepsilon),
\qquad \varepsilon>0.
\]
Applying Theorem~\ref{Dudley} to the finite set $T^{(m)}$,
\[
\E M_m
\le
4CK_\theta
\int_0^\infty
\bracks{\log N\paren{T^{(m)},d,\varepsilon}}^{1/\theta}\,d\varepsilon
\le
4CK_\theta
\int_0^\infty
\bracks{\log N(T,d,\varepsilon)}^{1/\theta}\,d\varepsilon.
\]
By the monotone convergence theorem,
\[
\E\sup_{t\in T} X_t
=
\E\sup_{t\in T_0} X_t
=
\lim_{m\to\infty}\E M_m
\le
4CK_\theta
\int_0^\infty
\bracks{\log N(T,d,\varepsilon)}^{1/\theta}\,d\varepsilon.
\]
\end{proof}

\section{Proofs for Section \ref{sec:pre}}\label{proofse3}

This section collects proofs for the main results in Section \ref{sec:pre}.
\subsection{Proof of Lemma \ref{lem:key1}}
\begin{proof} 
We first establish the result for the case $1<\alpha\le 2$, and then treat the case $\alpha>2$.

\paragraph{Case $1<\alpha\le 2$.}
When $\theta\ge 1$ and $A=1$, the claim follows directly by combining
\cite[Proposition~1]{chu2023unified} with the monotonicity of \Renyi divergence stated in Lemma~\ref{Renyi}.

So we only need to bound the gap between $A>1$ and $A=1$ case:
\[
b_1=\E_Q \frac{dP}{dQ} \log^{1/\theta} \bigl(\frac{d P}{d Q} +A \bigr )-\E_Q \frac{dP}{dQ} \log^{1/\theta} \bigl(\frac{d P}{d Q} +1 \bigr ).
\]
Note that
\[
x\log^{1/\theta}(x+A)-x\log^{1/\theta}(x+1)\leq x \log^{1/\theta}\bigl ( 1+\frac{A-1}{x+1}\bigr ) \leq x (A-1)^{1/\theta}.
\]
Let $x=dP/dQ$. Taking expectation with respect to $Q$ yields $b_1\leq (A-1)^{1/\theta}$, which completes this case.

We therefore focus on the case $0<\theta<1$. Using the elementary inequality
$(x+y)^k \le x^k+y^k$ for $x,y\ge 0$ and $0<k\le 1$, we obtain
\[
\begin{aligned}
\DD_{f_\theta}(P\|Q)
&= \mathbb{E}_P\!\left[
\frac{1}{\alpha-1}
\log\!\left(\left|\frac{dP}{dQ}\right|+A\right)^{\alpha-1}
\right]^{1/\theta} \le
\mathbb{E}_P\!\left[
\frac{1}{\alpha-1}
\log\!\left(
\left|\frac{dP}{dQ}\right|^{\alpha-1}
+ A^{\alpha-1}
\right)
\right]^{1/\theta}.
\end{aligned}
\]

By Lemma~\ref{lemma:concave} in Appendix~\ref{app:lemmas}, the $\log^{1/\theta}(x+A^{\alpha-1})$ is concave on $[0,\infty)$ provided that
\[
A^{\alpha-1} \;\ge\; \exp\!\left(\frac{1}{\theta}-1\right).
\]
Applying Jensen's inequality then yields
\[
\DD_{f_\theta}(P\|Q)
\le
\left[
\frac{1}{\alpha-1}
\log\!\left(
\mathbb{E}_P\!\left|\frac{dP}{dQ}\right|^{\alpha-1}
+ A^{\alpha-1}
\right)
\right]^{1/\theta}.
\]

To separate the logarithmic terms, for $x\ge 1$ and $y\ge 0$,
$x+y \le x(1+y)$, 
\[
\log(x+y) \le \log x + \log(1+y).
\]
Thus it suffices to verify that
\[
\mathbb{E}_P\!\left|\frac{dP}{dQ}\right|^{\alpha-1} \;\ge\; 1.
\]
Indeed, writing $L:=\frac{dP}{dQ}\ge 0$ (defined $Q$-a.s.), Jensen's inequality gives
\[
\mathbb{E}_P[L^{\alpha-1}]
=\mathbb{E}_Q[L^\alpha]
\;\ge\;
\bigl(\mathbb{E}_Q L\bigr)^\alpha
=1.
\]
Consequently,
\[
\DD_{f_\theta}(P\|Q)
\le
\left[
\DD_\alpha(P\|Q)
+
\frac{1}{\alpha-1}\log(1+A^{\alpha-1})
\right]^{1/\theta},
\]
which establishes the desired bound for $1<\alpha\le 2$.

\paragraph{Case $\alpha>2$.}
For $\alpha>2$, we invoke the monotonicity of \Renyi divergence,
$\DD_2(P\|Q)\le \DD_\alpha(P\|Q)$, to obtain
\[
\DD_{f_\theta}(P\|Q)
\;\le\;
\inf_{\alpha>2}
\left\{
\bigl(\DD_\alpha(P\|Q)+C_{\alpha,\theta}\bigr)^{1/\theta}
\right\}
=
\bigl(\DD_2(P\|Q)+C_{2,\theta}\bigr)^{1/\theta}.
\]
\end{proof}

\subsection{Proof of Lemma \ref{de}}
\begin{proof}
Since
$\E_\mu r = \E_\nu \frac{d \mu}{d \nu} r$. Now set $x=\frac{d \mu}{d \nu}$, $y=r$ in Lemma \ref{young}, taking expectation gives
\[
\E_\mu r = \E_\nu \frac{d \mu}{d \nu} r\leq 2^{\frac{1}{\theta}} \E_\nu \{\frac{d \mu}{d \nu} [\log(\frac{d \mu}{d \nu} + A)]^{\frac{1}{\theta}}\} + \E_\nu\exp(y^\theta),
\]
and the definition of $\DD_{f_\theta}(\mu \| \nu)$ shows
\[
\E_\mu r \leq 2^\I\DD_{f_\theta}(\mu \| \nu) + \E_\nu \exp(r^\theta).
\]
Finally, Lemma~\ref{lem:key1} gives the R\'enyi upper bound
$\E_\mu r \leq 2^\I[\DD_\al(\mu \| \nu)+ C_{\alpha,\theta}]^\I+ \E_\nu \exp(r^\theta)$.
\end{proof}



\section{Proofs for Section \ref{sec:gen_bounds}}\label{se:PCT}
\subsection{Mutual information for randomized maximum selector}
Let $S=\{X_1,\ldots,X_n\}$ with i.i.d.\ Weibull\((\theta)\) coordinates
(\(\theta<1\)), so $\mathbb{P}(X>x)=e^{-x^\theta}$. The selector \(W\in[n]\) is
defined by
\[
\mathbb{P}(W=y\mid S=x)=:p(y\mid x)=
\begin{cases}
\epsilon+\dfrac{1-\epsilon}{n}, & \text{if } y=\arg\max_{i\in[n]} x_i,\\[6pt]
\dfrac{1-\epsilon}{n}, & \text{otherwise,}
\end{cases}
\]
where the argmax is a.s.\ unique by the continuity of $\{X_i\}$. 

By symmetry, the marginal of \(W\) is uniform:
\[
p(y)=\mathbb{P}(W=y)=\frac{1}{n},\qquad y\in[n].
\]
Write
\[
p_m:=\epsilon+\frac{1-\epsilon}{n}=\frac{1+\epsilon(n-1)}{n},
\qquad
p_o:=\frac{1-\epsilon}{n}.
\]
Then, for any realization \(x\), exactly one index is the maximizer and
\[
\sum_{y=1}^n p(y\mid x)\log\frac{p(y\mid x)}{p(y)}
=\sum_{y=1}^n p(y\mid x)\log\bigl(n\,p(y\mid x)\bigr)
= p_m\log(n p_m) + (n-1)\,p_o\log(n p_o),
\]
which is constant in \(x\). Therefore,
\begin{align*}
I(W;S)
&=\int p(x)\sum_{y=1}^n p(y\mid x)\log\frac{p(y\mid x)}{p(y)}dx= p_m\log(n p_m) + (n-1)\,p_o\log(n p_o)\\
&=\frac{1}{n}\Bigl[(1+\epsilon(n-1))\,\log\bigl(1+\epsilon(n-1)\bigr)
+ (n-1)\,(1-\epsilon)\,\log(1-\epsilon)\Bigr].
\end{align*}

\subsection{Proof of Theorem \ref{infob}}
\begin{proof}
Without loss of generality, we assume $\max_{i\in[n]}\|X_i\|_{\psi_\theta}=1$. By changing measure,
\[
\mathbb{E}_{P_{W,S}}(|X_W|) = \mathbb{E}_{P_W \otimes P_S} \left(|X_W| \frac{d P_{W,S}}{d P_W \otimes P_S} \right).
\]
Let $\mu = P_{W,S}$, $\nu=P_{W} \otimes P_{S}$ and $r=|X_W|$. Use Lemma \ref{de} (decorrelation lemma)
\[
\E_{P_{W,S}}(|X_W|) \leq 2^\I I_{f_\theta}(W,S) + \E_{P_W \otimes P_S} \exp(|X_W|^\theta).
\]
By sub-Weibullity,
$\E_{P_W \otimes P_S} \exp |X_W|^\theta = \E_{w \sim P_W} \E_{P_S} \exp |X_w|^\theta \leq 2$. Then, if $A$ satisfies the conditions of Lemma~\ref{lem:key1}, then
\[
\mathbb{E}|X_W|
\;\le\;
\max_{i\in[n]}\|X_i\|_{\psi_\theta}\,
\Bigl(2^{1/\theta}\bigl(I_\alpha(W;S)+C_{\alpha,\theta}\bigr)^{1/\theta}+2\Bigr).
\]
\end{proof}

\subsection{Proof of Theorem \ref{chaining}}
\begin{proof}
Without loss of generality, assume $C=1$. \\
\textbf{Step 1: The partitions get finer.}
Assume that we are given increasing sequence partitions $\{\mathcal P_k\}_{k\ge 0}$ such that $\mathcal P_0=\{T\}$, and $\mathcal P_k$ is a $\epsilon_k$-partition for $k\ge1$ with $\epsilon_k := e(T)2^{-k}$. Let $[W]_k\in\mathcal P_k$ be the cell-valued random variable that contains $W$ with 
$$
T=[W]_0 \supset[W]_1 \supset[W]_2 \supset[W]_3 \supset \cdots.
$$
For each $k \ge 0$, let $W_k:=t_{[W]_k}$ be a designated representative point of $[W]_k \in \mathcal{P}_k$. 

For each cell $A \in \mathcal{P}_k$, choose a representative $t_A \in T$ such that 
$$A \subset \mathcal{B}_d\left(t_A, \varepsilon_k\right).$$
Moreover, choose these representatives hierarchically so that whenever $A \in \mathcal{P}_k$ is contained in its parent cell $B \in \mathcal{P}_{k-1}$, one has $t_A \in B$. Hence, for every such parent-child pair, it gives a parent-child distance 
\begin{equation}\label{eq: parent-child}
    d\left(t_A, t_B\right) \leq \varepsilon_{k-1} .
\end{equation}

\begin{figure}[t]
\centering 
\includegraphics[scale=0.078]{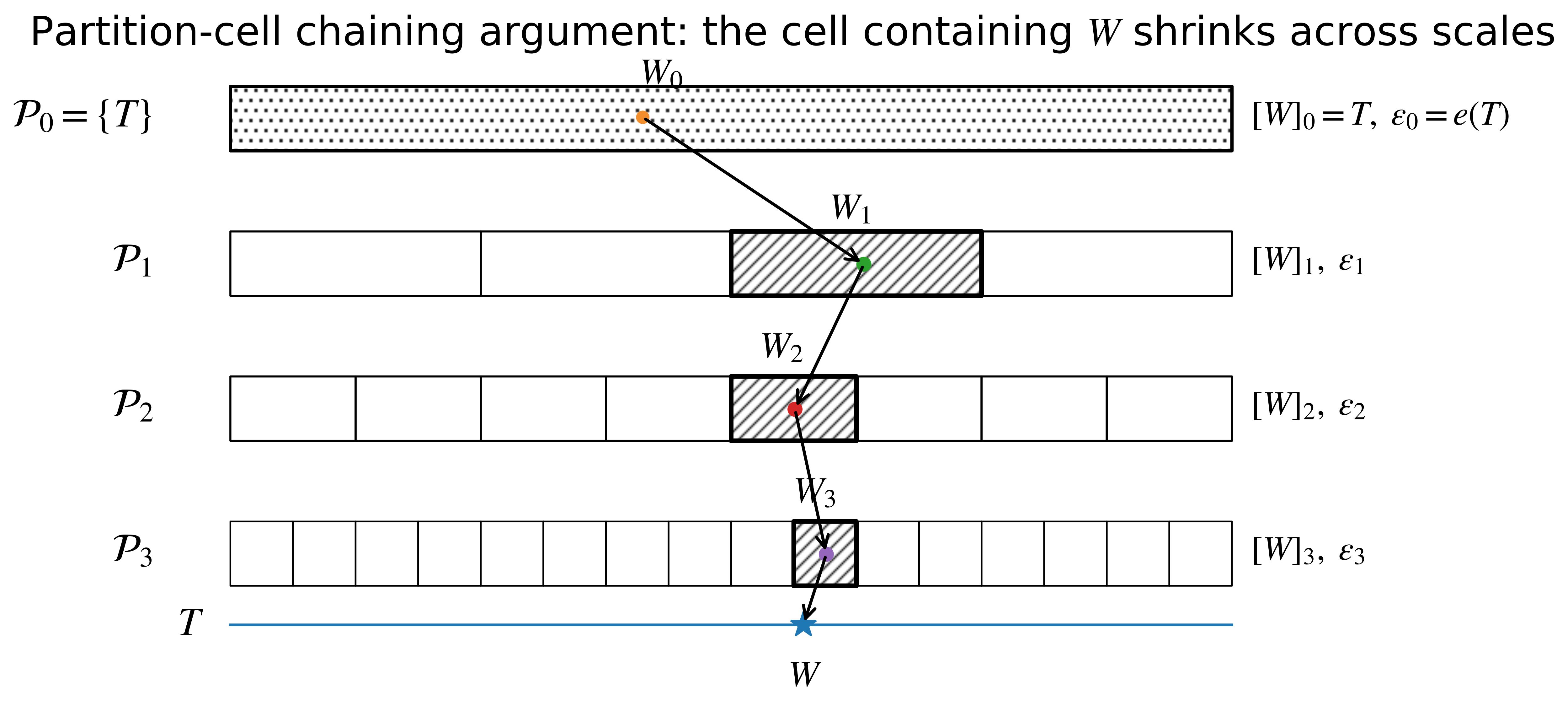}
\caption{Partition-cell chaining argument.}
\end{figure}

\textbf{Step 2: Telescope along the representatives for finite \( |T| \).}
For finite $|T|$, there exists a large $m$ such that each cell in $\mathcal{P}_m$ contains at most one point of $T$. Hence $W_{m}=W$. Pick an arbitrary anchor point $t_0 \in [W]_0=T$. Then
\[
X_W = X_{t_0} + \sum_{k=1}^{m}(X_{W_k}-X_{W_{k-1}}).
\]
For $k \geq 1$, by the hierarchical choice of representatives and $[W]_k \subset [W]_{k-1}$, \eqref{eq: parent-child} implies
\[
    d(W_{k}, W_{{k-1}})\le \epsilon_{k-1} = e(T)2^{-k+1}.
\]
Let $\delta_k = e(T)2^{-(k-1)}$. Because $\E[X_{t_0}] = 0$, the linearity of expectation shows,
\[
    \E[X_W] = \sum_{k=1}^m \delta_k \, \E \left[ \frac{X_{W_{k}} - X_{W_{{k-1}}}}{\delta_k} \right]\le \sum_{k=1}^m \delta_k \, \E \left| \frac{X_{W_{k}} - X_{W_{{k-1}}}}{\delta_k} \right|.
\]
To bound the expectation of the $k$-th term, set
$U_k:=(W_k,W_{k-1})$ and
$Z_k=|X_{W_k}-X_{W_{k-1}}|/\delta_k$. Applying the decorrelation lemma
(Lemma~\ref{de}) with $\mu=P_{U_k,S}$ and
$\nu=P_{U_k}\otimes P_S$ gives
\[
  \E_{P_{U_k,S}}[Z_k]
  \le
  2^{1/\theta} I_{f_\theta}(U_k; S)
  + \E_{P_{U_k}\otimes P_S} \left[ \exp(Z_k^\theta) \right].
\]
The pair $U_k$ is a deterministic function of the cell $[W]_k$, because
$W_k=t_{[W]_k}$ and $W_{k-1}$ is the representative of the parent cell.
Hence, by data processing for $f$-divergences,
$I_{f_\theta}(U_k;S)\le I_{f_\theta}([W]_k;S)$. Under the product measure
$P_{U_k}\otimes P_S$, condition on $U_k=(u,v)$. The indices $u,v$ are then
fixed and satisfy $d(u,v)\le\delta_k$, so the $(\theta,1)$-sub-Weibull
increment assumption gives
\[
    \E_{P_S}\exp\left(\frac{|X_u-X_v|^\theta}{\delta_k^\theta}\right)\le 2.
\]
Integrating over $P_{U_k}$ yields
$\E_{P_{U_k}\otimes P_S}[\exp(Z_k^\theta)]\le 2$. Substituting this back into
the sum gives
\[
    \E[X_W] \le \sum_{k=1}^m e(T)2^{-(k-1)} \left( 2^{1/\theta} I_{f_\theta}([W]_k; S) + 2 \right),
\]
which proves the theorem for finite $T$.

\textbf{Step 3: Extension to separable processes.}
For $m\ge 1$, define the $m$-th hierarchical approximation of $W$ by
$W_m:=t_{[W]_m}$. The representatives are chosen along the separability
approximation described in Definition~\ref{def:separable}. Since
$W\in[W]_m\subset B_d(W_m,e(T)2^{-m})$, we have
\[
d(W_m,W)\le e(T)2^{-m}\xrightarrow[m\to\infty]{}0
\quad\text{a.s.}
\]
Almost-sure sample-path continuity
 implies
\[
X_{W_m}\xrightarrow[m\to\infty]{}X_W
\quad\text{a.s.}
\]
To justify dominated convergence $\E[X_W] = \lim_{m \to \infty} \E[X_{W_m}]$, fix any $u_0 \in T$. Then
$$
\left|X_{W_m}\right| \leq\left|X_{u_0}\right|+\sup _{t \in T}\left|X_t-X_{u_0}\right| .
$$
The first term is integrable because a sub-Weibull random variable has finite first moment. For the second term,
$$
\sup _{t \in T}\left|X_t-X_{u_0}\right| \leq \sup _{t \in T}\left(X_t-X_{u_0}\right)+\sup _{t \in T}\left(X_{u_0}-X_t\right) .
$$
Both processes
$\left\{X_t-X_{u_0}\right\}_{t \in T}$ and $\left\{X_{u_0}-X_t\right\}_{t \in T}
$ are again separable $(\theta, C)$-sub-Weibull processes, so Corollary \ref{Dudley1} implies that each supremum has finite expectation. The dominated convergence theorem justifies the passage to the limit:
\[
    \E[X_W] = \lim_{m \to \infty} \E[X_{W_m}] \le \sum_{k=1}^\infty e(T)2^{-(k-1)} \left( 2^{1/\theta} I_{f_\theta}([W]_k; S) + 2 \right).
\]
	The R\'enyi refinement follows immediately by applying Lemma~\ref{lem:key1} to bound the $I_{f_\theta}$ term via R\'enyi mutual information.
\end{proof}

\section{Proofs in Section \ref{sec:rlhf}}\label{se:API}
\subsection{Proof of Lemma \ref{renyisol}}
\begin{proof}
Fix $x\in\mathcal X$ and write $P_0:=\pi_0(\cdot\mid x),~
r(y):=r(x,y)$. For any feasible policy $\pi(\cdot\mid x)\ll P_0$, let
$
q(y):=\frac{d\pi(\cdot\mid x)}{dP_0}(y).
$
The optimization problem is equivalent to
\[
\sup_{q\ge0}
\int_{\mathcal Y} r(y)q(y)\,P_0(dy)
\]
subject to $\int_{\mathcal Y}q\,dP_0=1,~\int_{\mathcal Y}q^\alpha\,dP_0\le \rho$, where
$\rho:=\exp((\alpha-1)\epsilon)$.

Since $r\in L^{\alpha/(\alpha-1)}(P_0)$ and $q\in L^\alpha(P_0)$ for every
feasible $q$, the objective is finite by H\"older's inequality. The feasible
set is convex, and the objective is linear. Moreover, since $\epsilon>0$,
we have $\rho>1$, and $q\equiv1$ satisfies
\[
\int q\,dP_0=1,
\qquad
\int q^\alpha\,dP_0=1<\rho.
\]
Thus Slater's condition holds, so the KKT conditions are necessary for
optimality. Let
\[
r_x^\star:=\operatorname*{ess\,sup}_{y\sim P_0}r(y),
\qquad
M_x:=\{y:r(y)=r_x^\star\},
\qquad
m_x:=P_0(M_x).
\]

We first characterize when a policy supported entirely on $M_x$ is feasible.
Suppose $q$ is supported on $M_x$. Then
\[
\int_{M_x}q\,dP_0=1.
\]
If $m_x=0$, no such density exists. If $m_x>0$, Jensen's inequality under the
conditional probability measure $P_0(\cdot\mid M_x)$ gives
\[
\int q^\alpha\,dP_0
=
\int_{M_x}q^\alpha\,dP_0
=
m_x\int_{M_x}q^\alpha\,dP_0(\cdot\mid M_x)
\ge
m_x
\left(
\int_{M_x}q\,dP_0(\cdot\mid M_x)
\right)^\alpha.
\]
Since $\int_{M_x}q\,dP_0(\cdot\mid M_x)
=
\frac1{m_x}\int_{M_x}q\,dP_0
=
\frac1{m_x}$, we get
\[
\int q^\alpha\,dP_0
\ge
m_x^{1-\alpha}.
\]
Equality is attained by $
q_M(y)=\frac{\mathbf 1_{M_x}(y)}{m_x}$. Therefore, the minimal R\'enyi divergence among policies supported on $M_x$ is $\frac1{\alpha-1}\log m_x^{1-\alpha}
=
-\log m_x$. Hence, if
\[
\epsilon\ge -\log m_x,
\]
then $q_M$ is feasible. Since $q_M$ puts all its mass on the set where
$r(y)=r_x^\star$, it achieves the largest possible objective value
\[
\int r(y)q_M(y)\,P_0(dy)=r_x^\star.
\]
No policy can achieve objective value larger than $r_x^\star$ by the definition
of essential supremum. Thus $q_M$ is optimal. This proves the first case.

It remains to consider the case
\[
\epsilon<-\log m_x.
\]
In this case, no feasible policy can be supported entirely on $M_x$.

Let $t\in\mathbb R$ be the multiplier for
the normalization constraint, and let $\lambda\ge0$ be the multiplier for the
R\'enyi-power constraint. The Lagrangian is
\[
\mathcal L(q,t,\lambda)
=
\int r(y)q(y)\,dP_0
-
t\left(\int q\,dP_0-1\right)
-
\lambda\left(\int q^\alpha\,dP_0-\rho\right).
\]
Equivalently,
\[
\mathcal L(q,t,\lambda)
=
t+\lambda\rho
+
\int
\left[
(r(y)-t)q(y)-\lambda q(y)^\alpha
\right]P_0(dy).
\]
The non-negativity constraint $q\ge0$ is treated as a domain constraint.

The complementary slackness condition is
\[
\lambda
\left(
\int q^*(y)^\alpha\,P_0(dy)-\rho
\right)
=
0.
\]

We claim that in the present case one must have
\[
\lambda>0.
\]
Suppose instead that $\lambda=0$. Then the pointwise part of the Lagrangian is
\[
(r(y)-t)q(y),
\qquad q(y)\ge0.
\]
For the supremum over $q(y)\ge0$ to be finite, it is necessary that
\[
r(y)-t\le0
\qquad P_0\text{-a.s.}
\]
Thus, $t\ge r_x^\star$. On the other hand, since $q^*$ is feasible,
\[
\int q^*\,dP_0=1,
\]
so $q^*>0$ on a set of positive $P_0$-measure. On such a set, the pointwise
linear function $(r(y)-t)q$ can be maximized at a positive value of $q$ only if
\[
r(y)-t=0.
\]
Therefore $q^*$ must be supported on
\[
M_x=\{y:r(y)=r_x^\star\}.
\]
But in the case $\epsilon<-\log m_x$, no feasible policy can be supported
entirely on $M_x$. This is a contradiction. Hence, $\lambda>0$.

By complementary slackness, $\lambda>0$ implies
\[
\int q^*(y)^\alpha\,P_0(dy)=\rho.
\]
Equivalently, $\mathcal D_\alpha(\pi^*(\cdot\mid x)\|\pi_0(\cdot\mid x))
=
\epsilon$. Thus the optimum is attained on the boundary of the R\'enyi ball.

It remains to derive the form of $q^*$. For fixed $t$ and $\lambda>0$, the
Lagrangian is separable in $y$. For each $y$, one has to solve
\[
\max_{q\ge0}
\left\{
(r(y)-t)q-\lambda q^\alpha
\right\}.
\]
Define
$
\phi_y(q):=(r(y)-t)q-\lambda q^\alpha$ for $q\ge0$. Since $\alpha>1$ and $\lambda>0$, $\phi_y$ is strictly concave.

If $r(y)\le t$, then for every $q>0$,
\[
\phi_y'(q)
=
r(y)-t-\alpha\lambda q^{\alpha-1}
<0.
\]
Thus the maximizer is $
q^*(y)=0$. If $r(y)>t$, then
\[
\phi_y'(0+)=r(y)-t>0,
\]
so the maximizer is an interior point and satisfies
$
r(y)-t-\alpha\lambda q(y)^{\alpha-1}=0$. Therefore, $q^*(y)
=
\left(
\frac{r(y)-t}{\alpha\lambda}
\right)^{\frac1{\alpha-1}}$. Combining the two cases above yields
\[
q^*(y)
=
\left(
\frac{r(y)-t}{\alpha\lambda}
\right)_+^{\frac1{\alpha-1}}
\qquad P_0\text{-a.s.}
\]

Let $\beta:=\frac1{\alpha-1}$. Then,
$
q^*(y)
=
(\alpha\lambda)^{-\beta}(r(y)-t)_+^\beta$. The multiplicative constant is determined by the constraint $\int q^*\,dP_0=1$. Hence
\[
q^*(y)
=
\frac{(r(y)-t)_+^\beta}
{\int_{\mathcal Y}(r(y')-t)_+^\beta\,P_0(dy')}.
\]
Returning to the original notation,
\[
\frac{d\pi^*(\cdot\mid x)}{d\pi_0(\cdot\mid x)}(y)
=
\frac{(r(x,y)-t)_+^{\frac1{\alpha-1}}}
{\mathbb E_{Y\sim\pi_0(\cdot\mid x)}
[(r(x,Y)-t)_+^{\frac1{\alpha-1}}]}.
\]
The threshold $t$ is chosen so that $\mathcal D_\alpha(\pi^*(\cdot\mid x)\|\pi_0(\cdot\mid x))
=
\epsilon$. This completes the proof.
\end{proof}

\subsection{Proof of Theorem \ref{thm:rlhf}}
\begin{proof}
Fix $x\in\mathcal X$ and abbreviate $\pi(\cdot\mid x)$ and $\pi_0(\cdot\mid x)$
by $\pi$ and $\pi_0$, respectively. Let
\[
\bar r(x,y):= r(x,y)-\E_{Y\sim\pi_0}r(x,Y),
\qquad y\in\mathcal Y.
\]
Define the nonnegative random variable
$Z:=\frac{|\bar r(Y,x)|}{C}$ with $Y\sim\pi_0(\cdot\mid x)$. By the assumption $\|\bar r(x,Y)\|_{\psi_\theta}\le C$, under $Y\sim\pi_0(\cdot\mid x)$, we have
\[
\E_{\pi_0}\exp(Z^\theta)
=
\E_{\pi_0}\exp\!\left(\frac{|\bar r(Y)|^\theta}{C^\theta}\right)
\le 2.
\]

We now apply the decorrelation lemma (Lemma~\ref{de}) with $\mu=\pi$, $\nu=\pi_0$, and
the measurable function $r(\cdot):=|\bar r(\cdot,x)|/C$. This yields
\[
\E_{\pi}\!\left[\frac{|\bar r(Y)|}{C}\right]
\;\le\;
2^{1/\theta}\,\DD_{f_\theta}(\pi\|\pi_0)
\;+\;
\E_{\pi_0}\exp\!\left(\frac{|\bar r(Y)|^\theta}{C^\theta}\right)
\;\le\;
2^{1/\theta}\,\DD_{f_\theta}(\pi\|\pi_0)+2.
\]
Multiplying both sides by $C$ gives
\[
\E_{\pi}|\bar r(Y)|
\;\le\;
C\bigl(2^{1/\theta}\,\DD_{f_\theta}(\pi\|\pi_0)+2\bigr).
\]
Since $\E_\pi \bar r(Y)=\E_\pi r(x,Y)-\E_{\pi_0}r(x,Y)$, we conclude by
$|\E_\pi \bar r|\le \E_\pi|\bar r|$ that
\[
\bigl|\E_\pi r(x,Y)-\E_{\pi_0}r(x,Y)\bigr|
\;\le\;
C\bigl(2^{1/\theta}\,\DD_{f_\theta}(\pi\|\pi_0)+2\bigr).
\]
Finally, invoking Lemma~\ref{lem:key1} to upper bound
$\DD_{f_\theta}(\pi\|\pi_0)$ in terms of $\DD_\alpha(\pi\|\pi_0)$ yields the
displayed R\'enyi-based bound in the theorem.

For $\pi^*$ solving \eqref{optRENYI}, we have
$\DD_\alpha(\pi^*\|\pi_0)\le\epsilon$ for each $x$. Substituting this into the
preceding bound (and averaging over $x$ if desired) gives the stated guarantee
for $\pi^*$.
\end{proof}

\subsection{Proof of Theorem \ref{renyiregret}}

\begin{proof}
By Assumption \ref{ass:reward-structure}, we need to bound 
\[
\DD_\alpha (r_n(x)\|r(x)).
\]
This reduces to the following lemma by setting $r_n(x)\sim\pi_n=:\nu$ and $r(x)\sim\pi_0=:\mu$.
\begin{lemma}[\Renyi divergence of the maximum of i.i.d.\ draws]\label{DDD} 
Let $\mu$ and $\nu$ be any two univariate probability distributions. Assume $\alpha>1$. Let $X$ and $\{X_i\}_{i=1}^n \overset{\iid}{\sim} \mu$ be random variables, and put $R_n=\max_{i\in [n]} X_i \sim \nu$. Then
\[
D_\alpha(\nu \| \mu) \leq \frac{1}{\alpha-1}\log \left( \frac{n^\alpha}{\alpha(n-1)+1} \right)
\]
and $D_{KL}(\nu \| \mu) \leq \log n-\frac{n-1}{n}$.
\end{lemma}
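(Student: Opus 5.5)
Our plan is to reduce to the uniform case via the probability integral transform, and then compute the R\'enyi divergence explicitly. First assume $\mu$ has a continuous CDF $F$. Then $U_i:=F(X_i)$ are i.i.d.\ uniform on $[0,1]$, and $F(R_n)=\max_i U_i$ has density $n u^{n-1}$ on $[0,1]$. Since $\nu$ is the law of $R_n$ and $F$ is monotone, we have $\nu(B)=\mu$-expectation of $nF^{n-1}\mathbf 1_B$. Hence
\[
\frac{d\nu}{d\mu}(x)=nF(x)^{n-1}
\]
$\mu$-a.s., and $F(X)$ is uniform under $\mu$. This gives directly
\[
\int\Bigl(\frac{d\nu}{d\mu}\Bigr)^\alpha d\mu=\int_0^1 n^\alpha u^{\alpha(n-1)}\,du=\frac{n^\alpha}{\alpha(n-1)+1},
\]
so that $D_\alpha(\nu\|\mu)=\frac{1}{\alpha-1}\log\frac{n^\alpha}{\alpha(n-1)+1}$, with equality in this case. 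For the KL claim we compute
\[
\int_0^1 nu^{n-1}\log(nu^{n-1})\,du=\log n+(n-1)\int_0^1 nu^{n-1}\log u\,du=\log n-\frac{n-1}{n}.
\]
The same value is also obtained as the limit $\alpha\to1$ of the R\'enyi expression, using Lemma~\ref{Renyi}(ii).

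The main obstacle is a general $\mu$ with atoms, where $d\nu/d\mu$ is no longer $nF^{n-1}$. Here we would use data processing. Take $U_1,\dots,U_n$ i.i.d.\ uniform and the quantile map $G=F^{-1}$. Then $X_i\overset{d}{=}G(U_i)$, and since $G$ is nondecreasing, $R_n\overset{d}{=}G(\max_i U_i)$. So $(\nu,\mu)$ is the image of $(\mathrm{Law}(\max U_i),\mathrm{Unif})$ under the same deterministic map $G$. The data processing inequality for R\'enyi divergence (already invoked before Theorem~\ref{renyiregret}, citing \citealp{Yihong25}) and for KL then gives the inequalities, with the continuous computation applied to the uniform pair. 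This covers every univariate $\mu$ and yields exactly the stated bounds. It also makes the first step, which needed continuity of $F$, essentially redundant.

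To finish Theorem~\ref{renyiregret}, we combine Lemma~\ref{DDD} with Assumption~\ref{ass:reward-structure} and the displayed data-processing bound $\DD_\alpha(\pi_n\|\pi_0)\le\DD_\alpha(r_n(x)\|r(x))$. We then check that $\frac{1}{\alpha-1}\log\frac{n^\alpha}{\alpha(n-1)+1}\le\log n\le\epsilon$. The first inequality is equivalent to $n^{\alpha}\le n^{\alpha-1}(\alpha(n-1)+1)$, i.e.\ $n\le \alpha(n-1)+1$, which holds since $\alpha\ge1$. Finally, the reward bound follows by applying Theorem~\ref{thm:rlhf} with $\pi=\pi_n$. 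Here the centred-reward Orlicz hypothesis is obtained from $\|r\|_{\psi_\theta}\le C$ up to the constant-shift issue, which we would address by absorbing the mean into $C$, and $C_{\alpha,\theta}$ is monotone in the budget.
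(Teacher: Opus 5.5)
Your proposal is correct and follows essentially the same route as the paper: realize $(\nu,\mu)$ as the image of $\bigl(\mathrm{Law}(\max_i U_i),\mathrm{Unif}[0,1]\bigr)$ under the nondecreasing quantile map, apply the data processing inequality, and evaluate $\int_0^1 (nu^{n-1})^\alpha\,du = n^\alpha/(\alpha(n-1)+1)$. Your continuous-case density computation (which shows equality there) and your direct evaluation of the KL integral are harmless extras; the paper instead gets the KL bound by citation or by letting $\alpha\to1$.
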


\begin{proof}
The KL case is given by Theorem 1 in \cite{mroueh2025information} and can also be proved by taking the limit $\alpha \to 1$, so here we prove the \Renyi case: $\alpha>1$.

Let $Q$ be the left-continuous quantile function of $X$. For $U$ and $\{U_i\}_{i=1}^n \overset{\iid}{\sim}\mathrm{U}[0,1]$,
\[
R\stackrel{d}=Q(U),
\qquad
R_n\stackrel{d}=Q(U_{(n)}),
\]
where $U_{(n)}:=\max_{1\le i\le n}U_i$.
By data processing inequality for R\'enyi divergence,
\[
D_\alpha(R_n\|R)\le D_\alpha(U_{(n)}\|U).
\]
Now $U_{(n)}$ has density $n u^{n-1}$ on $[0,1]$, so
\[
D_\alpha(R_n\|R)\le D_\alpha(U_{(n)}\|U)
=
\frac{1}{\alpha-1}\log\int_0^1 \paren{n u^{n-1}}^\alpha\,du
=
\frac{1}{\alpha-1}\log\frac{n^\alpha}{\alpha(n-1)+1}.
\]
By L'Hôpital's Rule, $\lim_{\alpha \to 1}\frac{1}{\alpha-1}\log\frac{n^\alpha}{\alpha(n-1)+1}=\log n-\frac{n-1}{n}$ and we have $D_{KL}(\nu \| \mu) \leq \log n-\frac{n-1}{n}$.
\end{proof}

Follows from Assumption~\ref{ass:reward-structure} and another application of data processing:
\[
D_\alpha(\pi_n\|\pi_0)\le D_\alpha(R_n\|R).
\]
Now, in this setting, since $\frac{1}{\alpha-1}\log \left( \frac{n^\alpha}{\alpha(n-1)+1} \right)\leq \log n$ for $\alpha>1$ and $n\ge 1$, we have 
\[
\DD_\alpha(\pi_n \|\pi_0) \leq \frac{1}{\alpha-1}\log \left( \frac{n^\alpha}{\alpha(n-1)+1} \right)\leq \log n
\]
So if $n\leq \exp(\varepsilon)$, then $\DD_\alpha(\pi_n\|\pi_0)\leq \varepsilon$.

Applying the decorrelation lemma (Lemma \ref{de}) to $\bar r(x,Y):= |r(x,Y)-\E_{Y\sim\pi_0}r(x,Y)|/C,~~y\in\mathcal Y$, similar to the proof of Theorem \ref{thm:rlhf}, we have
\[
\mathbb{E}_{\pi_n(\cdot|x)} r - \mathbb{E}_{\pi_0(\cdot|x)} r \le |\mathbb{E}_{\pi_n(\cdot|x)} r - \mathbb{E}_{\pi_0(\cdot|x)} r|
\leq C \left( 2^{1/\theta} \left[ \DD_\alpha(\pi_n \| \pi_0) +C_{\alpha,\theta}\right]^{1/\theta} + 2 \right).
\]
from $\DD_\alpha(\pi_n\|\pi_0)\leq \varepsilon$. This completes the proof.
\end{proof}

\section{Proofs of Useful Lemmas and Formulas}\label{app:lemmas}

This section presents several lemmas used in the proofs of the main results. We begin by showing that the shifted-log function is convex.

\begin{lemma}\label{lemma:convex}
For $\theta>0$, the function $x \mapsto x\log^\theta(x + A),~A \geq 1$ is convex over $x>0$.
\end{lemma}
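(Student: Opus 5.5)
The plan is a direct second-derivative computation. Write $h(x)=x\,g(x)$ with $g(x)=\log^{\theta}(x+A)$, and note that $x+A>1$ on $x>0$ because $A\ge1$, so $\log(x+A)>0$ and $g$ is smooth there. The product rule gives
\[
h''(x)=2g'(x)+x\,g''(x).
\]
It therefore suffices to show that $2g'(x)+x g''(x)\ge 0$ for all $x>0$.

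Next I compute the derivatives explicitly. Set $L=\log(x+A)>0$. Then
\[
g'(x)=\frac{\theta L^{\theta-1}}{x+A},\qquad
g''(x)=\frac{\theta L^{\theta-2}}{(x+A)^2}\bigl((\theta-1)-L\bigr).
\]
Factoring out the positive quantity $\theta L^{\theta-2}/(x+A)^2$, the sign of $h''$ is the sign of
\[
2L(x+A)+x(\theta-1)-xL \;=\; L(x+2A)+x(\theta-1).
\]

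For $\theta\ge1$ this expression is clearly nonnegative, since every term is nonnegative. The only real work is the case $0<\theta<1$. There the term $x(\theta-1)$ is negative, and it suffices to show $L(x+2A)\ge x$, i.e. $\log(x+A)\ge x/(x+2A)$.

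This is the main obstacle, and I would resolve it with a crude bound. Since $x/(x+2A)<1$, it would be enough to have $\log(x+A)\ge1$, but that can fail when $A<e$ and $x$ is small. I would instead use $\log(1+u)\ge u/(1+u)$ with $u=x+A-1\ge x$. This gives
\[
\log(x+A)\ge \frac{x+A-1}{x+A}.
\]
We then need $(x+A-1)(x+2A)\ge x(x+A)$. Expanding, this reduces to $2A(x+A-1)-x\ge0$, i.e. $x(2A-1)+2A(A-1)\ge0$, which holds because $A\ge1$.

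Hence $h''>0$ on $x>0$ for every $\theta>0$, and $h$ is convex. This is the intended sense for the paper's $f_\theta$ with exponent $1/\theta$, since the statement covers every positive exponent.
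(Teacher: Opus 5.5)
Your proof is correct and essentially the paper's: the same second-derivative computation gives the bracket $(x+2A)\log(x+A)+(\theta-1)x$, and the case $0<\theta<1$ reduces to $(x+2A)\log(x+A)\ge x$. The only difference is how that last inequality is checked: the paper shows $h(x)=(x+2A)\log(x+A)-x$ satisfies $h(0)=2A\log A\ge 0$ and $h'(x)=\log(x+A)+A/(x+A)>0$, whereas you use $\log(1+u)\ge u/(1+u)$ followed by a polynomial comparison, and both arguments are valid.
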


\begin{proof}
Let $L = \log(x + A)$, and define $f(x) := x\, L^{\theta}$.
The first derivative is
\$
f'(x) = L^{\theta} + x \theta L^{\theta - 1} \frac{1}{x + A}
= L^{\theta - 1} \left( L + \frac{\theta x}{x + A} \right)=L^{\theta - 1} \left( L + \theta\left[1-\frac{A}{x + A}\right] \right),
\$
while the second derivative is 
\begin{align*}
f''(x)
    &= (\theta - 1)L^{\theta - 2}\frac{1}{x + A}\left( L + \frac{\theta x}{x + A} \right)
+ L^{\theta - 1}\left(\frac{1}{x+A} + \frac{\theta A}{(x + A)^2} \right)\\
    &=\theta\frac{(\log(x+A))^{\theta-2}}{(x+A)^2}\Big[(x+2A)\log(x+A)+(\theta-1)x\Big].
\end{align*}
Since $\theta>0$, $(x+A)^2>0$, and for $x>0, A\ge 1$, we have $\log(x+A)\ge 0$, so $\log^{\theta-2}(x+A)>0$ whenever it is defined. It remains to prove that the bracketed term is positive. Define
$$B(x)=(x+2A)\log(x+A)+(\theta-1)x.$$
The worst case occurs as $\theta\to0^+$, so it suffices to show
$$(x+2A)\log(x+A)-x>0\quad\text{for all}~x>0.$$
Define $h(x)=(x+2A)\log(x+A)-x$. Then $h(0)=2A\log A\ge 0$ (since $A\ge1$), and
$$h^{\prime}(x)=\log(x+A)+\frac{x+2A}{x+A}-1=\log(x+A)+\frac A{x+A}>0\quad\mathrm{for~}x>0.$$

Thus, $h(x)$ is strictly increasing and positive for all $x>0$. Hence, $B(x)>0$ for all $x>0$, $A\ge 1$, and $\theta > 0$. Therefore, $f''(x)>0$ on $(0,\infty)$, proving convexity.
\end{proof}

\begin{lemma}\label{lemma:concave}
Let $\theta \ge 1$ and $c>0$. The function $x \mapsto \log^\theta(x+c)$ is concave on $[0,\infty)$ whenever
$c \ge \exp(\theta-1)$.
\end{lemma}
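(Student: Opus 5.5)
The plan is to prove concavity by a direct second-derivative computation on $(0,\infty)$ and then extend to the closed half-line by continuity. Write $g(x):=\log^\theta(x+c)$. Since $c\ge \exp(\theta-1)\ge 1$ for $\theta\ge1$, we have $L:=\log(x+c)\ge \theta-1\ge 0$ for all $x\ge 0$, so $g$ is well defined, continuous on $[0,\infty)$ and smooth on $(0,\infty)$. When $L=0$ and $\theta<2$, negative powers of $L$ would appear; this edge case ($c=1$, $\theta=1$, $x=0$) is handled separately below.

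First compute $g'(x)=\theta L^{\theta-1}/(x+c)$ and then
\[
g''(x)=\frac{\theta L^{\theta-2}}{(x+c)^2}\Bigl[(\theta-1)-L\Bigr].
\]
The prefactor $\theta L^{\theta-2}/(x+c)^2$ is nonnegative wherever $L>0$. Concavity therefore reduces to the bracket condition $L=\log(x+c)\ge \theta-1$ for all $x>0$. Because $\log(x+c)$ is increasing in $x$, it suffices to check this at $x=0$, namely $\log c\ge \theta-1$. This is exactly the hypothesis $c\ge \exp(\theta-1)$. Hence $g''\le 0$ on $(0,\infty)$, and $g$ is concave there.

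Finally, pass to $[0,\infty)$. A function that is continuous on $[0,\infty)$ and concave on $(0,\infty)$ is concave on $[0,\infty)$: take limits in the chord inequality as one endpoint decreases to $0$.

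For the degenerate case $\theta=1$, $g(x)=\log(x+c)$ is concave directly and no derivative issue arises. For $\theta>1$ with $c=1$ we have $\log c=0<\theta-1$, so this case is excluded by the hypothesis and $L>0$ throughout.

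The only step needing care is the sign bookkeeping in $g''$: factoring out $L^{\theta-2}$ must not hide a sign change. Since $L>0$ on the relevant domain, this is routine. The main point of the lemma is simply that the threshold is attained at the left endpoint $x=0$ by monotonicity of $L$.
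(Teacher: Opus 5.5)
Your proof is correct and follows the paper's argument essentially verbatim: you compute $g''(x)=\frac{\theta L^{\theta-2}}{(x+c)^2}\,[(\theta-1)-L]$ and use monotonicity of $\log(x+c)$ to reduce the sign condition to $\log c\ge\theta-1$ at $x=0$. Your extra care with the endpoint $x=0$ (via continuity) and with the degenerate case $c=\theta=1$ (where $L^{\theta-2}$ is undefined at $x=0$) covers points the paper passes over silently.
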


\begin{proof}
Define $f(x):=\log^\theta(x+c)$ for $x\ge 0$. Then
\[
f'(x)=\theta \bigl(\log(x+c)\bigr)^{\theta-1}\frac{1}{x+c}.
\]
Differentiating it again yields
\[
\begin{aligned}
f''(x)
&=\theta\left[
(\theta-1)\bigl(\log(x+c)\bigr)^{\theta-2}\frac{1}{(x+c)^2}
-\bigl(\log(x+c)\bigr)^{\theta-1}\frac{1}{(x+c)^2}
\right]\\
&=\frac{\theta}{(x+c)^2}\bigl(\log(x+c)\bigr)^{\theta-2}\Bigl[(\theta-1)-\log(x+c)\Bigr].
\end{aligned}
\]
Since $\theta\ge 1$ and $x+c>0$, the prefactor $\frac{\theta}{(x+c)^2}\bigl(\log(x+c)\bigr)^{\theta-2}$ is nonnegative whenever $\log(x+c)\ge 0$, and thus $f''(x)\le 0$ holds provided
\[
\log(x+c)\;\ge\;\theta-1\qquad \text{for all }x\ge 0.
\]
The left-hand side is minimized at $x=0$, so it suffices that $\log c \ge \theta-1$, i.e., $c\ge e^{\theta-1}$.
\end{proof}

To construct generalization bounds for heavy-tailed sub-Weibull, the following lemmas are needed, similar to the decorrelation lemma of \cite{chu2023unified}.

\begin{lemma}[A new Young-type inequality of non-convex function]\label{young}
For $x, y \ge 0$, we have
\[
xy \leq 2^{\frac{1}{\theta}} x [\log(x + A)]^{\frac{1}{\theta}} + \exp(y^\theta),
\]
where $A\ge (2^{\lceil \frac{2}{\theta} \rceil-2} \lceil \frac{2}{\theta} \rceil !)^2 \vee 1$ is a positive constant depending on $\theta>0$.
\end{lemma}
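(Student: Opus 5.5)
The plan is a two-case argument. Either the exponential term already dominates $xy$, or the condition $xy>\exp(y^\theta)$ forces $x$ to be so large that $\log(x+A)$ absorbs $y^\theta$. Throughout, write $m:=\lceil 2/\theta\rceil$.

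\emph{Case 1: $xy\le \exp(y^\theta)$.} The claim holds trivially, since $2^{1/\theta}x[\log(x+A)]^{1/\theta}\ge 0$ for $x\ge0$ and $A\ge1$.

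\emph{Case 2: $xy>\exp(y^\theta)$.} Here I want to show $y\le 2^{1/\theta}[\log(x+A)]^{1/\theta}$, which is equivalent to
\[
y^\theta\le 2\log(x+A).
\]
Multiplying this by $x$ gives $xy\le 2^{1/\theta}x[\log(x+A)]^{1/\theta}$, which is stronger than needed. I would split Case 2 according to the size of $y$.
\begin{enumerate}
\item[(a)] If $y\le 1$, then $xy\le x$. It suffices that $2^{1/\theta}[\log(x+A)]^{1/\theta}\ge 1$, i.e.\ $\log A\ge 1/2$. This holds once the constant in the hypothesis is at least $e^{1/2}$; it equals $4$ already for $m=2$ and only grows with $m$.
\item[(b)] If $y>1$, take logarithms in $xy>e^{y^\theta}$ to get $\log x>y^\theta-\log y$. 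If in addition $y^\theta\ge 2\log y$, then
\[
2\log(x+A)\ge 2\log x>2y^\theta-2\log y\ge y^\theta ,
\]
and we are done.
\item[(c)] The remaining region is $y>1$ with $y^\theta<2\log y$. There I use only $2\log(x+A)\ge 2\log A$, so it suffices that $A\ge e^{y^\theta/2}$. On this region $e^{y^\theta/2}<y$, so it suffices to show that the region is bounded by $\sqrt{A}$.
\end{enumerate}

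\emph{Bounding the residual region, via the factorial.} I would truncate the exponential series at order $m$:
\[
e^{y^\theta/2}\ge \frac{(y^\theta/2)^m}{m!}=\frac{y^{\theta m}}{2^m m!}\ge \frac{y^2}{2^m m!},
\]
using $\theta m\ge 2$ and $y\ge1$. Combined with $e^{y^\theta/2}<y$, this gives $y<2^m m!$. The hypothesis on $A$ has the form $(2^{m-2}m!)^2$, which is exactly tailored to this estimate. My aim is to show $A\ge (2^{m-2}m!)^2$ implies $A\ge e^{y^\theta/2}$ on the residual region. Two routes are available for this: bound $y$ directly, or use the sharper inequality $e^{y^\theta/2}<y$ together with a refined version of the series bound.

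\emph{Main obstacle: the constant bookkeeping in (c).} The crude chain above gives $y<2^m m!$, while the stated constant behaves like $(2^{m-2}m!)^2$. These do not obviously match for small $m$. For $m=2$ the stated constant is $4$, while $2^m m!=8$. The intended route is presumably to use a more economical chain, for instance keeping the factor $y^{\theta m-2}\ge1$, or using $2\log(x+A)\ge\log x+\log A$ to combine the information from (b) and (c). What I would try first is to use both facts $\log x>y^\theta-\log y$ and $\log(x+A)\ge\log A$ simultaneously. Then it suffices that $\log A\ge \log y$ on the region $y^\theta<2\log y$, and the factorial bound on $y$ should deliver $\sqrt A\ge y$ with room to spare. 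I expect to adjust the exponents in this step until they match the $\lceil 2/\theta\rceil$ form in the statement.
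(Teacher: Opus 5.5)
Your split on whether $xy>\exp(y^\theta)$ is a legitimate alternative to the paper's split on whether $y>2^{1/\theta}[\log(x+A)]^{1/\theta}$, and subcase (b) is correct. However, the proposal is not yet a proof. Step (c) is left open, and the repair you sketch aims at the wrong target.

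\emph{Step (c).} On the residual region the factorial estimate gives only $y<2^m m!=4\cdot 2^{m-2}m!$. It therefore cannot yield ``$\sqrt A\ge y$ with room to spare'' from $\sqrt A\ge 2^{m-2}m!$; it is off by a factor $4$. Nor is the form $(2^{m-2}m!)^2$ tailored to your estimate, so there are no exponents to match. In the paper this constant comes from the complementary branch. There $x<e^{y^\theta/2}-A$, and the bound $(e^{y^\theta/2}-A)\,y\le(e^{y^\theta/2}-\sqrt A)(e^{y^\theta/2}+\sqrt A)\le e^{y^\theta}$ is secured by $\sqrt A\ge\sup_{y\ge0}(y-e^{y^\theta/2})$, together with $\sup_{y\ge0}\bigl(y-y^2/(2^m m!)\bigr)=2^{m-2}m!$.

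\emph{Subcase (a).} This also has an error. For $\theta\ge2$ one has $m=1$, so the hypothesis only gives $A\ge1$ (because $(2^{-1}\cdot 1!)^2=1/4$). Hence $\log A\ge 1/2$ can fail.

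Both defects are easy to fix within your own route.
\begin{enumerate}
\item Drop (a) entirely. The chain in (b) uses only $\log x>y^\theta-\log y$, which is valid throughout Case 2 where $x,y>0$, and $y^\theta\ge 2\log y$. The latter holds automatically for $0<y\le 1$.
\item On the residual region $\{y>1:\ y^\theta<2\log y\}$ you need only $A\ge y$, not $\sqrt A\ge y$, since $e^{y^\theta/2}<y$ there.
\begin{itemize}
\item When $\theta\ge 1$ (i.e.\ $m\le 2$) this region is empty. For $y>1$ one has $y^\theta\ge y>2\log y$, since the minimum of $y-2\log y$ over $y>0$ is $2-2\log 2>0$.
\item When $\theta<1$ one has $m\ge 3$, so $(2^{m-2}m!)^2=2^{m-4}m!\cdot 2^m m!\ge 2^m m!$. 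Your crude bound then already gives $e^{y^\theta/2}<y<2^m m!\le A\le x+A$, i.e.\ $y^\theta<2\log(x+A)$.
\end{itemize}
\end{enumerate}

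With these two observations the argument closes, and it gives slightly more than the paper. Whenever $xy>\exp(y^\theta)$, the first term alone dominates $xy$. The hypothesis on $A$ can also be relaxed to $A\ge 1$ for $\theta\ge1$ and $A\ge 2^m m!$ for $\theta<1$. The paper's route avoids analyzing the set $\{y^\theta<2\log y\}$, at the cost of the squared constant.
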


\begin{proof}
We discuss two cases below.\\ 

\textbf{Case 1.} If $y \leq 2^{\frac{1}{\theta}} \log(x + A)^{\frac{1}{\theta}}$, the inequality is immediate for $A\ge 1$ since $\exp(y^\theta)>0$.

\textbf{Case 2.} If $y > 2^{\frac{1}{\theta}} \log(x+A)^{\frac{1}{\theta}}$, we have
\[
xy < \left(\exp(\frac{y^\theta}{2}) - A \right) y \leq \left(\exp(\frac{y^\theta}{2}) - \sqrt{A} \right)\left(\exp(\frac{y^\theta}{2}) + \sqrt{A}\right) \leq \exp(y^\theta).
\]
The first inequality is because in this case $x<\exp(\frac{y^\theta}{2})-A$ and we can properly choose $A$ satisfying $y\leq \exp(\frac{y^\theta}{2})+\sqrt{A}$ to make the second inequality hold.

Let \(m=\lceil 2/\theta\rceil\). What does hold for all \(y\ge 0\) and \(\theta>0\) is
\[
\exp\left(\frac{y^\theta}{2}\right)\ \ge\ \frac{y^2}{2^mm!}.
\]
The claim is trivial for $y=0$, so assume $y>0$ and set $x:=y^\theta/2\ge 0$.
By the Taylor expansion of $e^x$ with nonnegative terms, for any integer $m\ge 0$,
\[
e^x=\sum_{k=0}^{\infty}\frac{x^k}{k!}\ \ge\ \frac{x^m}{m!}.
\]
Hence
\[
\exp\!\left(\frac{y^\theta}{2}\right)=e^x
\ \ge\ \frac{1}{m!}\left(\frac{y^\theta}{2}\right)^m
\ =\ \frac{y^{\theta m}}{2^m\,m!}.
\]
We now compare $y^{\theta m}$ and $y^2$.

\textbf{Case 1: $0\le y\le 1$.}
We have $\exp(y^\theta/2)\ge 1$. Also $y^2\le 1$, so
\[
\frac{y^2}{2^m m!}\le \frac{1}{2^m m!}\le 1 \le \exp\!\left(\frac{y^\theta}{2}\right).
\]

\textbf{Case 2: $y\ge 1$.}
Since $m=\left\lceil \frac{2}{\theta}\right\rceil$, we have $\theta m\ge 2$.
Therefore $y^{\theta m}\ge y^2$ for all $y\ge 1$, thus
\[
\exp\!\left(\frac{y^\theta}{2}\right)\ \ge\ \frac{y^{\theta m}}{2^m m!}
\ \ge\ \frac{y^2}{2^m m!}.
\]
So we have
\[
y-\exp\left(\frac{y^\theta}{2}\right) \le y-\frac{y^2}{2^mm!}\le 2^{m-2}m!,~y \ge 0.
\]
$A\ge 2^{m-2}m! \vee 1= (2^{\lceil \frac{2}{\theta} \rceil-2} \lceil \frac{2}{\theta} \rceil !)\vee 1$ is enough.

\end{proof}

\section{Additional Numerical Experiments}\label{app:add_experiments}
This section provides empirical evidence for the main experimental claims. We first verify that the order-preserving power transformation used in Section~\ref{sec:rlhf-experiments} indeed produces a heavy-tailed reward. We then study a distinct heavy-tailed reward construction based on injecting centered Weibull noise and show that the qualitative behavior observed under R\'enyi regularization persists.

\subsection{RLHF experimental details}\label{app:rlhf_details}
We study the effect of R\'enyi order $\alpha$ in~\eqref{optRENYI_penalty} while fixing $\beta=1$ throughout. We sweep $\alpha\in\{1,2,\dots,10\}$, where $\alpha\to 1$ corresponds to the KL regularization. We train in the \texttt{hh-rlhf} training split (50{,}000 prompts) for up to $1500$ optimization steps and evaluate every $10$ steps on 128 prompts that are not used for training. For text generation, we cap the prompt length at 256 tokens and the completion length at 128 tokens. We sample with temperature 1.0 and top-$p$ 0.95, that is, at each decoding step we sample from the smallest set of tokens whose cumulative probability under the model is at least 0.95 (nucleus sampling). During training, we sample 4 completions for each prompt so that GRPO can compare multiple responses to the same prompt and form a group-relative reward signal for the policy update. We use a per-device batch size of 1 and accumulate gradients over 8 steps, which gives an effective batch size of 8, together with a learning rate of $10^{-5}$ and BF16 precision.\footnote{``BF16'' precision refers to the 16-bit \emph{bfloat16} floating-point format, which reduces memory usage and can improve training/inference throughput while typically maintaining numerical stability comparable to 32-bit floats.} During evaluation, we generate with batch size 16 and estimate R\'enyi divergence on a 16-prompt subset to reduce computation. For reward scoring, both reward models use a maximum input length of 512 tokens, with longer inputs truncated. We also recalibrate the proxy reward using mean- and scale statistics estimated from 256 prompts sampled from the reference policy so that reward values are on a comparable scale across runs.\\

\subsection{Tail behavior under the order-preserving power transformation}\label{sec:tail-behavior-power-transform}

As discussed in Section~\ref{sec:rlhf-experiments}, the RLHF experiments are conducted in a heavy-tailed reward regime induced by
an order-preserving power transform. In Figure~\ref{gold_proxy_trans}, we
complement that visual evidence with a quantitative diagnostic via the empirical MGF.

For the original or transformed reward samples $\{r_i\}$, we
compute
$
M(t)=\frac{1}{n}\sum_{i=1}^n e^{tr_i},~t\in\mathbb{R}.
$
Its growth for moderate values of $|t|$ provides a useful finite-sample summary
of tail amplification. $M(t)$ changes much more rapidly and explore as $t$ moves away from zero.

Figure~\ref{gold_proxy_MGF} shows that the transformed rewards exhibit
orders-of-magnitude larger empirical MGFs than the original rewards. For
$\beta=1$ and $\alpha=1$, at $t=-0.4$, the empirical MGF of the transformed
reward is approximately $5.77\times 10^{6}$, about $3.2\times 10^{6}$ times
larger than that of the original reward ($1.80$). For $\alpha=2$, the contrast
is even stronger: at $t=-0.4$, the ratio between the transformed and original
MGFs is approximately $8.34\times 10^{13}$. The logarithmic growth rate
$d\log M(t)/dt$ shows the same pattern. At $t=-0.4$, this quantity is
$-47.5$ for $\alpha=1$ and $-91.3$ for $\alpha=2$ after
transformation, compared with only $-1.47$ and $-1.57$ for the corresponding
original rewards. Taken together, these diagnostics reinforce the main-text
claim in Section~\ref{sec:rlhf-experiments}: the order-preserving power
transformation substantially amplifies tail behavior and thereby produces the
heavy-tailed reward regime used to study the robustness of
R\'enyi-regularized RLHF.

\begin{figure}[H]
\centering
\includegraphics[scale=0.225]{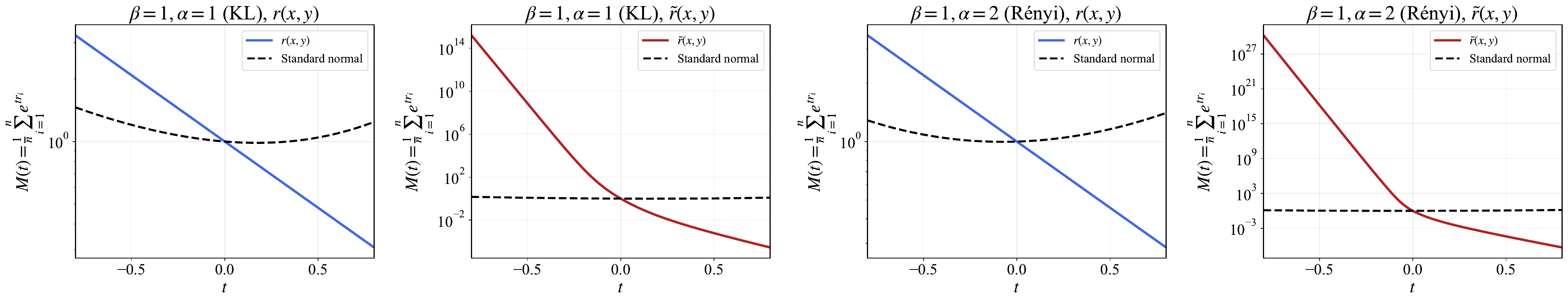}
\caption{\small Empirical MGFs comparison for the original and transformed reward. Left: $\beta=1$ with $\alpha=1$ (KL). Right: $\beta=1$ with
$\alpha=2$ (R\'enyi). Blue curves correspond to the original reward $r(x,y)$,
and red curves correspond to the transformed reward
$\tilde r(x,y)=\operatorname{sign}(r(x,y))\,|r(x,y)|^{8}$. The much faster
growth of the empirical MGF for the transformed rewards is consistent with stronger tail amplification.}
\label{gold_proxy_MGF}
\end{figure}

\subsection{Alternative heavy-tailed reward construction by injecting Weibull noise}\label{app:weibull-noise}

In addition to the power transform used in Section~\ref{sec:rlhf-experiments}, we consider a second heavy-tailed reward construction obtained by adding centered Weibull noise to the reward. Let
\[
r_0(x,y)=\langle \theta^{*},\psi(x,y)\rangle
\]
denote the clean reward, where $\psi(x,y)$ is the penultimate-layer representation of the same proxy reward model used in Section~\ref{sec:rlhf-experiments}, namely Qwen2.5-0.5B, and $\theta^{*}$ is the parameter vector of its trained final linear layer. We define the observed reward by
\[
r(x,y)=r_0(x,y)+\varepsilon(x,y),
\qquad
\varepsilon(x,y)=Z-\mathbb{E}[Z],
\qquad
Z\sim \mathrm{Weibull}(k,\eta(x,y)).
\]
Since $\mathbb{E}[Z]=\eta(x,y)\Gamma(1+1/k)$, the perturbation is centered and therefore preserves the conditional mean reward. We set
\(
k\in\{0.2,0.3,0.4,0.5\},
\)
with smaller $k$ corresponding to heavier-tailed noise, and take $\eta(x,y)$ to be the estimated standard error of the reward. We use the same penalized objective~\eqref{optRENYI_penalty}, fix $\beta=1$, and vary the R\'enyi order over
\(
\alpha\in\{2,4,6,8,10\}.
\)
All other implementation details are the same as those in Section~\ref{sec:rlhf-experiments}.

\begin{figure}[h]
\centering 
\includegraphics[scale=0.24]{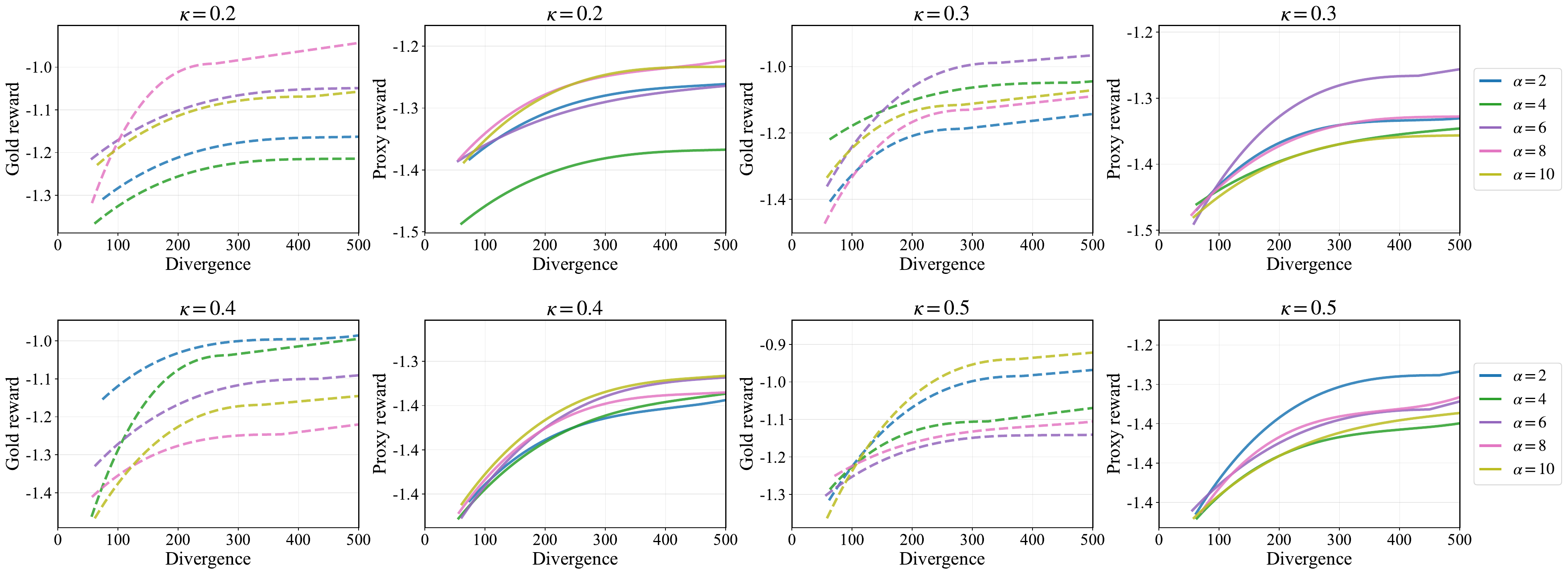}
\caption{\small
Reward--divergence relationship under the centered Weibull-noise reward with shape parameters 0.2, 0.3, 0.4, and 0.5, ordered from left to right and top to bottom.
Across all Weibull shape parameters, the reward--divergence curves remain stable: the gold reward typically increases and then levels off, rather than exhibiting a sharp collapse, indicating that higher-order R\'enyi regularization provides a reliable trust-region with $\beta=1$.
}
\label{divergence_vs_reward_Weibull}
\end{figure}

Figures~\ref{divergence_vs_reward_Weibull} and~\ref{proxy_vs_gold_Weibull} show that the qualitative behavior observed in the main text persists under this perturbation model. Across all tested noise levels, the reward--divergence curves remain stable and the proxy--gold relationship remains monotone over the observed range. Although smaller $k$ produces heavier-tailed reward noise, it has the stabilizing effect under higher-order R\'enyi regularization. These results reinforce the main conclusion of Section~\ref{sec:rlhf-experiments}: the favorable empirical behavior of R\'enyi regularization is not an artifact of the power transformation, but persists under a distinct heavy-tailed reward construction.

\begin{figure}[H]
\centering 

\includegraphics[scale=0.45]{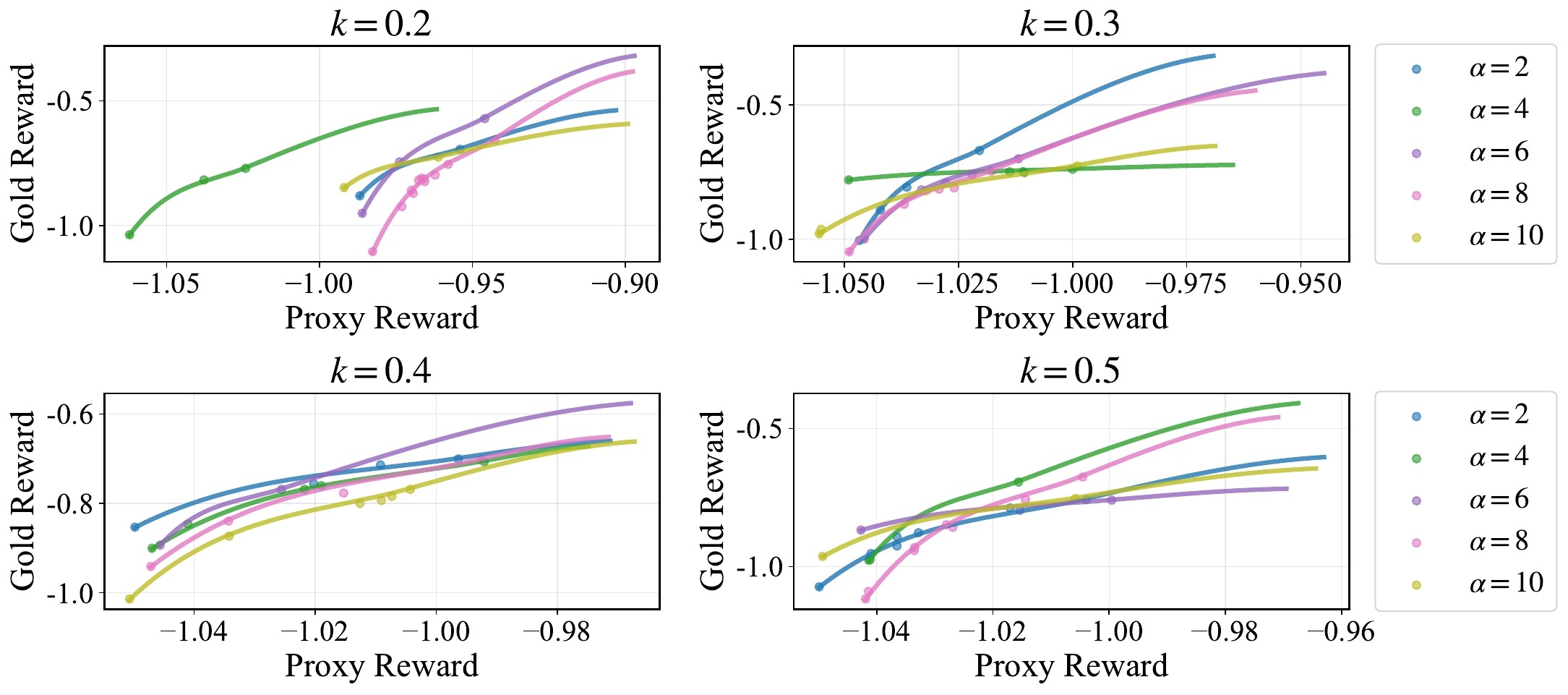}

\caption{\small
Proxy--proxy-gold reward relationship under the Weibull-noise reward construction with $\beta=1$ and shape parameters 0.2 to 0.5, ordered from left to right and top to bottom.
Across all noise levels, the proxy--gold curves remain monotone increasing over the observed range, showing that improvements in the proxy reward continue to translate consistently into improvements in the gold reward even under heavier-tailed Weibull.
}
\label{proxy_vs_gold_Weibull}
\end{figure}

\end{document}